\DeclareMathOperator*{\argmin}{arg\,min}
\DeclareMathOperator{\dd}{{\rm d}}
\newcommand*{\rom}[1]{\expandafter\@slowromancap\romannumeral #1@}
\newcommand{\vertiii}[1]{{\left\vert\kern-0.25ex\left\vert\kern-0.25ex\left\vert #1 
		\right\vert\kern-0.25ex\right\vert\kern-0.25ex\right\vert}}
\newcommand{\Renyuan}[1]{{\textcolor{cyan}{{\textsf{Renyuan: #1}}}}}
\newcommand{\Ying}[1]{{\textcolor{ForestGreen}{{\textsf{Ying: #1}}}}}
\newcommand{\ba}{\begin{array}}
	\newcommand{\ea}{\end{array}}
\newcommand{\be}{\begin{equation}}
	\newcommand{\ee}{\end{equation}}
\newcommand{\eeqn}{\end{equation}}
\newcommand{\bea}{\begin{eqnarray}}
\newcommand{\eea}{\end{eqnarray}}
\newcommand{\beaa}{\begin{eqnarray*}}
\newcommand{\eeaa}{\end{eqnarray*}}
\def\a{\alpha}
\def\d{\delta}
\def\e{\varepsilon}
\def\k{\kappa}
\def\l{\lambda}
\def\m{\mu}
\def\n{\nu}
\def\si{\sigma}
\def\f{\varphi}
\def\o{\omega}
\def\h{\widehat}
\def\G{\Gamma}
\def\D{\Delta}
\def\O{\Omega}
\def\cA{{\cal A}}
\def\cF{{\cal F}}
\def\cK{{\cal K}}
\def\cS{{\cal S}}
\def\cT{{\cal T}}
\def\hC{\mathbb{C}}
\def\hE{\mathbb{E}}
\def\hF{\mathbb{F}}
\def\hI{\mathbb{I}}
\def\hL{\mathbb{L}}
\def\hN{\mathbb{N}}
\def\hP{\mathbb{P}}
\def\hQ{\mathbb{Q}}
\def\hR{\mathbb{R}}
\def\hX{\mathbb{X}}
\def\sB{\mathscr{B}}
\def\sD{\mathscr{D}}
\def\sE{\mathscr{E}}
\def\scL{\mathscr{L}}
\def\sM{\mathscr{M}}
\def\sP{\mathscr{P}}
\def\no{\noindent}
\def\ss{\smallskip}
\def\ms{\medskip}
\def\bs{\bigskip}
\def\q{\quad}
\def\qq{\qquad}
\def\pa{\partial}
\def\cd{\cdot}
\def\td{\nabla}
\def\limsup{\mathop{\overline{\rm lim}}}
\def\liminf{\mathop{\underline{\rm lim}}}
\def\limw{\mathop{\buildrel w\over\rightharpoonup}}
\def\pa{\partial}
\def\h{\widehat}
\def\wt{\widetilde}
 \def\cd{\cdot}
\def\1{{\bf 1}}
\def\neg{\negthinspace}
\def\lf[{{[\![}}
\def\Blf[{{\Big[\!\!\Big[}}
\def\ri]{{]\!]}}
\def\Bri]{{\Big]\!\!\Big]}}
\def\reff{\eqref}
\def\dis{\displaystyle}
\def\wt{\widetilde}
\def\bc{{\bf c}}
\newtheorem{Theorem}{Theorem}[section]
\newtheorem{Definition}[Theorem]{Definition}
\newtheorem{Proposition}[Theorem]{Proposition}
\newtheorem{Assumption}[Theorem]{Assumption}
\newtheorem{Lemma}[Theorem]{Lemma}
\newtheorem{Remark}[Theorem]{Remark}
\newtheorem{Example}[Theorem]{Example}
\newtheorem{Problem}[Theorem]{Problem}
\def\mi{{\mu_{\rm ini}}}
\def\mt{{\mu_{\rm tar}}}
\begin{document}
\title{Schr\"odinger bridge for generative AI: Soft-constrained formulation and convergence analysis\footnote{We thank helpful comments and discussions from Beatrice Acciaio, Rama Cont, Huyên Pham,  Xin Zhang, and Xun Yu Zhou.}\\
	\vspace{5pt}
	{\it \large In celebration of Prof. Xun Yu Zhou's 60th birthday}
}

\author{Jin Ma\thanks{Department of Mathematics, University of Southern California, Los Angeles, CA, 90089.
		Email: jinma@usc.edu. This author is supported in part by US NSF grants DMS\#2205972 and \#2510403.
	}, ~Ying Tan\thanks{Department of Statistics and Applied Probability, University of California, Santa Barbara, CA 93106. Email: yingtan@ucsb.edu.}, ~and 
	Renyuan Xu\thanks{Department of Management of Science \& Engineering, Stanford University, Stanford, CA 94305. Email: renyuanxu@stanford.edu.  This author is supported in part by the NSF CAREER Award DMS-2524465.}
}
\date{\today}
\maketitle

\begin{abstract}
	Generative AI can be framed as the problem of learning a model that maps simple reference measures into complex data distributions, and it has recently found a strong connection to the classical theory of the Schrödinger bridge problems (SBPs)
	due partly to their common nature of  interpolating between prescribed marginals via entropy-regularized stochastic dynamics. However, the classical SBP enforces hard terminal constraints, which often leads to instability in practical implementations, especially in high-dimensional or data-scarce regimes.
	To address this challenge, we follow the idea of the so-called {\it soft-constrained Schrödinger bridge problem} (SCSBP), in which the terminal constraint is replaced by a general penalty function. This relaxation leads to a more flexible stochastic control formulation of McKean–Vlasov type.

	We establish the existence of optimal solutions for all penalty levels and prove that, as the penalty grows, both the controls and value functions converge to those of the classical SBP at a linear rate. Our analysis builds on Doob's $h$-transform representations, the stability results of Schrödinger potentials, $\Gamma$-convergence, and a novel fixed-point argument that couples an optimization problem over the space of measures with an auxiliary entropic optimal transport problem. These results not only provide the first quantitative convergence guarantees for soft-constrained bridges but also shed light on how penalty regularization enables robust generative modeling, fine-tuning, and transfer learning.
\end{abstract}


\vfill \bs

\no

\textbf{Keywords.} \rm Schr\"odinger bridge,  soft-constrained Schr\"odinger bridge, entropic optimal transport stability, Schr\"odinger potentials, generative AI, $h$-transform, converse of Scheff\'{e}'s Theorem, $\G$-convergence,  Schauder's fixed-point.

\bs

\no{\it 2020 AMS Mathematics subject classification:} 60H10, 60J60, 49J21, 68T01, 93E20.

\section{Introduction}
Generative modeling provides a powerful framework for synthesizing data that preserves the statistical structure of real-world samples while introducing controlled variability. Among the most prominent approaches, diffusion models — such as those introduced by \cite{sohl2015deep,song2019generative,song2020score,Ho20} — have achieved remarkable success, underpinning state-of-the-art systems like DALL·E 2 and 3 \cite{ramesh2022hierarchical,betker2023improving}, Stable Diffusion \cite{rombach2022high}, and Sora \cite{openai2024sora}. These models learn to reverse a diffusion process that gradually adds noise to data, enabling the generation of realistic samples from pure noise. Such a structure, namely, transforming a noise distribution into a data distribution, closely mirrors the Schrödinger bridge problem (or dynamic optimal transport), which has recently gained renewed attention as a principled framework for generative modeling due to its structural parallels with diffusion models and its ability to interpolate between distributions in a statistically grounded manner.  

The Schrödinger bridge problem (SBP for short), originally proposed as an entropy-regularized variant of optimal transport, seeks the most likely evolution of a process, subject to a reference diffusion process, that matches prescribed marginal distributions $\mi,\mt$ at two endpoints. Under suitable regularity conditions, the optimally controlled process remains a diffusion but with an additional  drift term added to the reference process. This result has been established through various approaches and levels of generality, with seminal contributions by \cite{fortet1940resolution,beurling1960automorphism,jamison1975markov,follmer2006random,dai1991stochastic}.

The recent generative modeling literature has seen a surge in the use of Schrödinger bridges. In these applications, one typically starts or chooses some distribution $\mi$ that is easy to sample from, and tries to ``learn'' an unknown distribution $\mt$ of a given data set. By numerically approximating the solution to the Schrödinger bridge problem, one can generate unlimited samples (i.e., synthetic data points that resemble the original data set). One such algorithm is presented by De Bortoli et al. \cite{de2021diffusion}  and Vargas et al.~\cite{vargas2021solving}, who approximate the iterative proportional fitting procedure (Deming-Stephan  \cite{deming1940least}),  using score matching with neural networks and maximum likelihood, respectively. 
Concurrently, Wang et al. \cite{wang2021deep} proposed a two-stage method with an auxiliary bridge handling possible non-smooth $\mt$. Some more recent developments include  
\cite{chen2021likelihood,song2022applying,peluchetti2023diffusion,richter2023improved,winkler2023score,hamdouche2023generative,vargas2023transport,shi2024diffusion,uehara2024fine,alouadi2025robust} as well as developing optimal transport techniques for generative AI tasks \cite{montesuma2024recent,benamou2024entropic,loeper2023calibration,xu2020cot,acciaio2024time}.

However, the classical SBP imposes hard terminal constraints on the marginal distributions, which can result in computational difficulties, instability in high-dimensional settings, and limited adaptability when aligning with empirical data in generative tasks. In practice, most numerical schemes for solving the SBP rely on iterative procedures that alternately relax the initial and terminal constraints. These algorithms can exhibit instability, particularly when the two constraints differ significantly, and their convergence guarantees for general target distributions remain an open problem in the literature.

In this paper, in light of Garg et al. \cite{garg2024soft}, we study a soft-constrained Schrödinger bridge problem (SCSBP).
Mathematically, we consider a penalty function $G:\sP_2(\hR^d)\to \hR_+$, satisfying $G(\mu;\mt)=0$ if and only if $\mu=\mt$, 
where $\sP_2(\hR^d)$ denotes {all the probability measures on $\hR^d$ with finite second moment equipped with 2-Wasserstein metric}, and $\mt \in \sP_2(\hR^d)$ is some given ``target'' distribution. For each $k\in\hN$ and a given initial distribution $\mi\in \sP_2(\hR^d)$, we consider the following stochastic control problem with dynamics {on a given probability space $(\Omega,\cF,\hP)$}:
\begin{equation*}
	d X^\alpha_t = \big(b(t,X^\alpha	_t) + \sigma(t) \alpha_t\big)d t + \sigma(t)d W_t, \quad \hP\circ (X_0^\alpha)^{-1} = \mi,	
\end{equation*}
and cost functional 
\begin{equation*}
	J^k(\alpha)=\hE \left[ \frac{1}{2} \int_0^T |\alpha_s|^2 d s  + k G(\mathbb{P}_{X^\alpha_T})\right],
\end{equation*}
where $\mathbb{P}_{X^\alpha_T}$ is the law of $X^\alpha_T$ {under probability $\hP$} and the control $\a$ is chosen from a square-integrable progressively measurable   {\it admissible control} set $\cA$. The goal is to find $V^k: =\inf_{\alpha \in \cA} J^k(\alpha)$ and optimal control $\a^k$, for each $k\in\hN$, and study the limiting behavior of $\{\a^k\}$ and $\{V^k\}$.  
Clearly, the dependence of the terminal cost on $\mathbb{P}_{X^\alpha_T}$ renders this relaxed formulation a {\it non-standard} stochastic control problem, leading to a McKean-Vlasov type of stochastic control. In contrast to Garg et al. \cite{garg2024soft}, which focuses on the case where the penalty 
$G$ is given by the KL divergence and 
$\mu_{\rm ini}$ is a delta measure, we investigate the problem under more general cost functions and initial distributions.

Compared to existing methods using SBP under hard constraints, there are several advantages to considering SCSBP. First, when the KL divergence between $\mt$ and $\mathbb{P}_{X^\alpha_T}$ is infinite, the Schr\"odinger bridge does not admit a solution, whereas SCSBP always does (cf. e.g.,   \cite{garg2024soft}). More importantly,  the penalty parameter $k$ acts as a regularization factor, preventing the algorithm from overfitting to $\mu_{\rm tar}$, which is crucial for certain generative modeling tasks with limited data \cite{gu2023memorization,moon2022fine}. In addition, SCSBP provides a more general framework in generative AI, with applications beyond pre-training in data generation, and can be applied to fine-tuning and transfer learning (see Examples \ref{exmpl:fine-tuning} and \ref{exmpl:transfer-learning}).

\subsection{Outline of the Main Results and Contributions}

The soft-constrained Schrödinger bridge problem (SCSBP) studied in this paper replaces the terminal distribution constraint by a general penalty function, which leads to a McKean–Vlasov type stochastic control problem. The main results include the existence of the optimal solution to the SCSBP at each penalty level $k$, and the convergence of the solutions to the SCSBP to that of the SBP, in terms
of both the optimal policy and the corresponding value function, as $k\to \infty$.

More precisely, we begin with the special case where the initial measure is a delta measure. In this setting, we derive the explicit form of the optimal control policy for SCSBP via Doob’s $h$-transform (Proposition~\ref{existak}) and more importantly, we establish a linear convergence rate for the optimal control (Theorem~\ref{thm:convergence_opt_policy}). To the best of our knowledge, such a rate of convergence is novel in the literature. Moreover, by applying the so-called {\it  early stopping}, we are able to obtain the linear convergence results for the corresponding value functions (Proposition~\ref{ValueRate}) as well as the Wasserstein distance between the target distribution and the output distribution of the SCSBP (Proposition~\ref{prop:3.9}).

The similar results in the case of a general initial distribution is much more involved. Among other things, we  establish and/or extend some recently observed stability results of the solutions to the SBP, as the foundation for a fixed-point argument. A key element in our argument is the continuous dependence (or stability) of a mapping that is well known in the (static) optimal transport literature. Specifically, fix $\mi\in\sP_2(\hR^d)$ and consider any $\mu\in\sP_2(\hR^d)$, it is known (cf. e.g.,  \cite{beurling1960automorphism}) that there exists a unique pair of $\sigma$-finite measures $( \nu_0,\nu_T):=\cT(\mu)$ such that the measure $\pi$ on $\hR^d \times \hR^d$ defined by 
\bea
\label{pi0}
\pi(E)= \int_E p(T, y;0, x) \nu_0(dx)\nu_T(dy), \qq E\in\sB(\hR^d\times \hR^d)
\eea
has the marginals $\mi$ and $\mu$, where $p(\cd, \cd;\cd,\cd)$ is the transition density of a given diffusion process.  Then it turns out that the solution to the SCSBP is the fixed-point of the  mapping $\G:\sP_2(\hR^d)\to \sP_2(\hR^d)$, where
\bea
\label{Gmu}	
\G(\m):= {\argmin}_{\bar\mu\in\sP_2(\hR^d)}\left\{kG(\bar \mu)+\hE_{X\sim \bar \mu}[\log \rho^{\m}(X)]
\right\},
\eea
where $\rho^{\m}$ is the density of the $\sigma$-finite measure $\nu$ in \eqref{pi0} (which may not be a probability measure). 
The successful application of Schauder’s fixed-point theorem on the Wasserstein space relies on several key elements, in particular a crucial continuous dependence result of the mapping $\m\mapsto \rho^\m$, for which we introduce an auxiliary {\it entropic optimal transport problem}, and identify its solution to the measure $\pi$ in (\ref{pi0}). 
By utilizing some important stability results of the corresponding Schr\"{o}dinger potentials (Proposition \ref{stable}), together with some arguments in the spirit of the converse of Scheff\'e's theorem (i.e., the weak convergence in Prohorov metric  vs. the convergence in densities) as well as the so-called $\G$-convergence of the minimizers of the optimization problem (\ref{Gmu}), we are able to verify the required properties so the mapping $\G$ has a fixed-point (Theorem~\ref{thm:sE}).
As a direct consequence, we establish that the optimal control of the SCSBP converges linearly with respect to the penalty parameter $k$ (Theorem~\ref{thm:convergence_rate_general_initial}). We believe that such a fixed-point perspective is novel in the literature, as it not only offers a constructive framework for characterizing solutions in the general case but also yields insights into how the penalty parameter affects the convergence rate.

\subsection{Closely Related Literature}

Our general formulation is largely inspired by Garg et al. \cite{garg2024soft}, which investigates the SCSBP with the KL divergence as the penalty function $G$. Within that framework, the authors established the asymptotic convergence of the optimal policies as the penalty parameter $k$ tends to infinity, under the assumption that the initial measure is a delta measure.  However, the use of KL divergence presents practical difficulties: if the model distribution $\mu$ assigns zero probability to any region where the data distribution $\mu_{\rm tar}$ has positive mass, then {$D_\mathrm{KL}(\mu_{\rm tar}\|\mu) = +\infty$}, rendering the divergence ill-posed under support mismatch \cite{chen2024regularized,kong2025composite}. Moreover, a delta initial measure is rarely employed in generative tasks, as it lacks the diversity and randomness required for effective training. Finally, no convergence rate is quantified therein.

On a technical level, our formulation is closely related to Hernández-Tangpi \cite{hernandez2023propagation}, in which 
the authors use a probabilistic approach to recast a mean-field Schr\"odinger bridge into a stochastic optimization problem with McKean-Vlasov dynamics, and connect the optimal control to a solution to a forward backward SDE of McKean-Vlasov type (MKV FBSDE). However, given the generality of the drift, diffusion, and running cost functions, the associated MKV FBSDE is derived without a discussion of uniqueness.  In fact, it is not completely obvious that a McKean-Vlasov-type SBP can be converted to a McKean-Vlasov stochastic control problem via the usual Girsanov theorem argument, as we show in Remark \ref{rem:1} below. Moreover, the conditions imposed on the penalty function 
$G$ are abstract and can be difficult to verify in common examples. By contrast, our framework leverages the PDE formulation and Doob’s $h$-transform representation, requiring only mild growth conditions on 
$G$ and control of density gaps (see Assumption~\eqref{assump:general penalty term}). Several concrete examples of admissible 
$G$ are provided in Example \ref{G1} and \ref{G2}.

\ms

The rest of the paper is organized as follows. Section \ref{sec:preliminary} introduces the necessary concepts and notations. In particular, we present the connection between the underlying SBP and its stochastic control formulation, and introduce the notion of the SCSBP together with some potential applications. Section \ref{sec:existence_opt_policy} is devoted to the existence of optimal policies for the SCSBP at each penalty parameter $k$, while Section \ref{sec:conv_opt_policy} establishes that the penalized optimal policies converge to those of the original SBP as $k \to \infty$, with a linear rate of convergence. In addition, we prove convergence of the corresponding value functions and quantify the distance between the terminal distribution and the target distribution in terms of the Wasserstein distance. Sections \ref{Stability} and \ref{sec:general} are devoted to the case with a general initial distribution. A crucial stability result is established in Section \ref{Stability}, via the stability of Schr\"odinger potentials of an auxiliary entropy optimal transport problem, and in Section \ref{sec:general} we complete the fixed-point argument.

\section{Preliminary}\label{sec:preliminary}

Throughout this paper,  we consider  a  generic Euclidean space $\hX$, and regardless of its dimension,  we denote $(\cd,\cd)$ and $|\cd|$ be its inner product and norm, respectively. We denote $\hC([0,T];\hX)$ to be the space of $\hX$-valued continuous functions defined on $[0,T]$ with the usual sup-norm, and in particular, we denote $\hC^d_T:=\hC([0,T];\hR^d)$, and  let $\sB(\hC_T^d)$ be its topological Borel field.
We shall consider the following 
{\it canonical probabilistic space}: $(\O,\cF, \hP)$, where $(\O, \cF):=(\hC^d_T, \sB(\hC^d_T))$ and $\hP\in\sP(\O)$, the space of all probability measures defined on $(\O, \cF)$. Finally, we let $\hP^0\in \sP(\O)$ be
the Wiener measure on $(\O, \cF)$;  $W_t(\o):=\o(t)$, $\o\in\O$, the canonical process; and  $\hF^0:=\{\cF^0_t\}_{t\in[0,T]}$,  where $\cF^0_t:=\sB_t(\hC_T^d):=\si\{\o(\cd\wedge t):\o\in\hC^d_T\}$, $t\in[0,T]$. We assume that $\hF^0$ has the usual $\hP^0$-augmentation so that it satisfies the {\it usual hypotheses} (cf. e.g., \cite{protter2005stochastic}), and for $p\ge 1$, we denote $\hL^p_{\hF^0}([0,T];\hX)$ to be all $\hX$-valued, $p$-integrable, $\hF^0$-adapted processes. Finally, we denote $\sM(\hR^d)$ to be all $\si$-finite measures on $\hR^d$ and $\sP_p(\hR^d)$ to be all probability measures with finite $p$-th moment on $\hR^d$ equipped with $p$-Wasserstein metric, denoted by $W_p(\cd, \cd)$. 	

\ms

We recall that a classical Schr\"{o}dinger bridge problem (SBP) amounts to finding, for  
$\hP\in\sP(\O)$, 
\bea
\label{SP0}
V(\hP):= \inf_{\hQ\in\sP(\mi, \mt)} D_{\rm KL}(\hQ\|\hP),
\eea
where 
$D_{\rm KL}(\,\cd\,\|\,\cd\,)$ is the {\it Kullback-Leibler Divergence} or the {\it Relative Entropy} (cf. \cite{KL}) \footnote{We remark here that the KL divergence $D_{\rm KL}(\,\cd\,\|\,\cd\,)$ can be easily extended to any $\si$-finite measures. In this case $D_{\rm KL}(\hQ\|\hP):=\int  \log\big(\frac{ \hQ(dx)}{\hP(dx)}\big)\hQ(dx)$, if $d\hQ \ll d \hP$.}, defined by
\beaa
D_{\rm KL}(\hQ\|\hP)
:=
\begin{cases}
	\hE^{\hQ}\big[\log\big(\frac{d\hQ}{d\hP}\big)\big], & {\text{if }} d\hQ \ll d \hP;\\
	\infty, &{\text{otherwise}}.
\end{cases}
\eeaa
We note that the admissible set $\sP(\mi, \mt)\subseteq \sP(\O)$ in (\ref{SP0}) is defined as the collection of all probability measures $\hQ\in \sP(\O)$, such that $\hQ$ is the law of some continuous $\hR^d$-valued continuous process $X$ defined on the canonical space with probability measure ${\bar\hP}\in\sP(\O)$ (i.e., $\hQ= \bar\hP\, \circ\, X^{-1}$) satisfying ${\hQ}\circ{X_0}^{-1}=\mi $ and ${\hQ}\circ{X_T}^{-1}\neg =\neg \mt\neg$. In other words, $\sP(\mi, \mt)$ contains all   ``path measures'' with initial and terminal distributions being $\mi$ and $\mt$, respectively.

We note that if $\hP=\hP^0$ is Wiener measure  and   $\hQ\in\sP(\O)$ is absolutely continuous with respect to $\hP^0$, then by the Cameron-Martin-Girsanov theorem, there exists $\a\in\hL^2_{\hF^0}([0,T];\hR^{d})$, such that
$$
\frac{d\hQ}{d\hP^0}\Big|_{\cF_t}:=\sE_t(\a)=\exp\Big\{\int_0^t \a_s dW_s-\frac12\int_0^t |\a_s|^2ds\Big\}, \q t\in[0,T],
$$
is a $\hP^0$-martingale, and $\widetilde{W}_t:=W_t-\int_0^t \a_sds$, $t\in[0,T]$, is a $\hQ$-Brownian motion. 
In this case, one can then easily check  that
\bea
\label{DKL}
D_{\rm KL}(\hQ\|\hP^0)=\frac12 \hE^\hQ\Big[\int_0^T|\a_t|^2dt\Big].
\eea

\paragraph{Schr\"odinger Bridge and  Related Control Problem.}

In light of (\ref{DKL}), one can easily  associate a SBP to a stochastic control problem (see, e.g., \cite[\S 4.4]{chen2021stochastic},  \cite[\S 1]{chow2022dynamical} and \cite[p36]{leonard2013survey}). Consider a standard SDE 
on a canonical space  $(\O, \cF, \hP)$, with $\hP\in\sP(\O)$:
\bea
\label{SBSDE}
d X_t =  b(t,X_t)d t + d W_t, \quad \hP\circ (X_0)^{-1} = \mi.
\eea
In what follows we shall denote, for $t\in[0,T]$, $\hP_{X_t}:=\hP\circ (X_t^{-1})\in\sP_2(\hR^d)$.

Then we  consider the following SBP:
\begin{equation}
	\label{SBP1}
	V(\mi,\mt) :=
	\inf_{ \hQ \in \sP(\mi, \mt)}D_{\rm KL}(\hQ\|\hP).
\end{equation}
Similar to (\ref{DKL}), we now recast (\ref{SBP1}) as the following stochastic control problem:
\bea
\label{SB-objective}
V(\mi,\mt)=\inf_{\alpha \in \mathcal{A}}J(\alpha) = \inf_{\alpha \in \mathcal{A}}\hE^\hQ \Big[ \frac{1}{2} \int_0^T |\alpha_s|^2ds  \Big],
\eea
where $\hQ\in\sP(\O)$ is such that $\frac{d\hQ}{d\hP}=\sE(\a)$ for some $\a\in \mathcal{A}\subseteq \hL^2_{\hF^0}([0,T];\hR^d)$, under which the underlying controlled dynamics takes the form:
\bea
\label{SB-dynamics}
d X^\alpha_t = [b(t,X^\alpha
_t) +\alpha_t]d t + d \wt{W}_t, \quad \hQ_{X_0^\alpha} =\mi, \q \hQ_{X_T^\alpha}= \mt,
\eea
where $\wt{W}$ is a $\hQ$-Brownian motion. 

\begin{Remark}[Subtlety in formulating the McKean-Vlasov version of the problem]
	\label{rem:1}
	{\rm	It is rather tempting to apply the idea above to the so-called McKean-Vlasov SBP (MVSBP). Consider, for example, 
		the following McKean-Vlasov SDE on $(\O, \cF, \hP)$:
		\beaa
		d X_t = b (t,X_t,\hP_{X_t}) d t +   d W_t, \quad  \hP_{X_0}= \mi.
		\eeaa
		Similar to (\ref{SBP1}) we can define an SBP, which we shall 
		refer to  as an MVSBP.  Again, by (\ref{DKL}), we  recast such MVSBP as the following (weak form)   stochastic control problem:
		\bea
		\label{MFC}
		V(\mi,\mt) :=  \inf_{\alpha} \hE^\hQ\Big[\int_0^T\frac{1}{2}|\alpha_t|^2\dd t \Big].
	\end{eqnarray}
	Then, $\hQ\in\sP(\O)$ must be such that the underlying controlled process $X^\a$ becomes, under $\hQ$:
	\bea
	\label{MFC-dynamics}
	d X^\a_t = [\alpha_t + b (t,X^\a_t,{\hP_{X^\a_t}})]  d t + d \widetilde{W}_t, \quad  \hQ_{X^\a_0} =\mi, \q \hQ_{X^\a_T}= \mt,
\end{eqnarray}
where $\wt{W}$ is a $\hQ$-Brownian motion, and   $\a\in\hL^2_{\hF^0}([0,T];\hR^d)$. However, by looking at (\ref{MFC-dynamics}) more closely we see that   the problem (\ref{MFC}) and (\ref{MFC-dynamics}) {\it do not} form a McKean-Vlasov control problem, since $\hP_{X_t} \neq \hQ_{X_t}$(!). Therefore, an MVSBP should be formulated more carefully so as to connect to an McKean-Vlasov stochastic control problem.
\qed}
\end{Remark}

Ideally, the optimal solution to the Schrödinger bridge problem \eqref{SB-objective}-\eqref{SB-dynamics} provides a transport map from the initial distribution $\mi$ to the target  distribution $\mt$. This transport is interpolated by a diffusion process that most closely resembles the canonical Brownian motion in the space of path measures. However, designing training algorithms to (approximately) learn the optimal solution to \eqref{SB-objective}–\eqref{SB-dynamics} typically involves an iterative scheme that alternately relaxes the initial and terminal constraints \cite{de2021diffusion,vargas2021solving,wang2021deep,shi2024diffusion}, and whose convergence rate and computational complexity in high-dimensional settings remain unclear.

While there could be techniques in stochastic control theory to deal with such a constraint, we shall follow the idea of \cite{hernandez2023propagation}, by approximating the original control problem by a family of   {\it unconstrained}
McKean-Vlasov stochastic control problems with terminal penalties. More precisely, we shall allow the law of $X^\alpha_T$ in \eqref{SB-dynamics} to be different from $\mt$, but add a corresponding penalty function to it in the cost functional $J(\cd)$. 

To this end, let us still denote $\cA\subseteq \sP(\O)$ to be the set of all $\hQ\in\sP(\O)$ such that 

(i) $\frac{d\hQ}{d\hP}\Big|_{\cF_T}=\sE(\a)=\exp\big\{\int_0^T \a_sdW_s-\frac12\int_0^T|\a_s|^2ds\big\}$, $\a\in\hL^2_{\hF^0}([0,T];\hR^{d})$;

(ii) Under $\hQ$, the underlying state process $X$ follows the dynamics:
\bea
\label{RSB_dynamics}
d X^\alpha_t = [b(t,X^\alpha_t) + \sigma(t) \alpha_t]d t + \sigma(t)d\wt{W}_t, \quad \hQ_{X_0^\alpha} = \mi,	
\eea
where $\wt{W}$ is a $\hQ$-Brownian motion. 

Throughout this paper, we shall make the following {\it Standing Assumption} on the coefficients $b$ and $\si$.
\begin{Assumption}
\label{assump1}
The coefficients $b:[0,T]\times\hR^d\to \hR^d$ and $\si:[0,T]\to \hR^{d\times d}$ are given deterministic  continuous functions, such that there exists $L>0$, it holds that
$$ |b(t,x)-b(t,y)|\le L|x-y|, \qq t\in[0,T], ~x,y\in\hR^d. 
$$
\end{Assumption}
Clearly, under Assumption \ref{assump1}, the SDE (\ref{RSB_dynamics}) has a unique strong solution $X^\a$ on $(\O, \cF, \hP^0)$ for any given $\a\in \hL^1_{\hF^0}([0,T];\hR^{d\times d})$  (see \cite{protter2005stochastic,zhang2017backward}). We shall often identify $\hQ\in \cA$ with its associated process $\a$, and denote $\hQ\sim \a$ and $\a\in\cA$ when the context is clear. The key element of the soft-constrained Schr\"odinger bridge problem is the following penalty function.

\begin{Definition}
\label{penalty}
A function $G(\cd)=G(\cdot;\mt):\sP_2(\hR^d) \to [0,\infty)$ is called a continuous penalty function associated to $\mt\in \sP_2(\hR^d)$ if:  (1) $G(\cd)$ is continuous on $\sP_2(\hR^d)$; and (2) 
$G(\mu;\mt) = 0$ if and only if $\mu=\mt$.
\qed
\end{Definition}

Now let us introduce the following family of  McKean-Vlasov-type stochastic control problems:

\begin{Problem}[Soft-constrained Schr\"odinger bridge problem (SCSBP)] 
\label{softSBP}
For $k\in\hN$, find $\h\a^k\in\cA$ such that $V^k: =J^k(\widehat\a^k)=\inf_{\alpha \in \cA} J^k(\alpha) $, where
\bea
\label{RSB_objective}
J^k(\alpha)=\hE^\hQ \left[ \frac{1}{2} \int_0^T |\alpha_s|^2 d s  + k G({\hQ_{X^\alpha_T}})\right],
\eea
and   $G(\cd)=G(\cdot;\mt)$ is the given penalty  function satisfying Definition \ref{penalty}  and $\hQ\sim \a$.
\qed
\end{Problem}

\paragraph{Applications in Generative AI.} We remark that the SCSBP Problem \ref{softSBP}
offers a general framework that can be applied to address multiple problems in generative AI. We briefly mention a few motivational examples.

\begin{Example}[Data generation] {\rm The goal of generative AI is to train a data generation procedure using a finite number of iid. data samples $\{x_1,\cdots,x_N\}$ under a (unknown) target distribution $\mt$, in order to simulate  unlimited number  of data samples whose underlying distribution is close to $\mt$ \cite{song2020score,Ho20,han2024neural}.

To cast this problem into our framework, we can take, for example, $\mi = \mathcal{N}(0,I)$ and $\mt=p_{\rm data}$ in the theoretical framework (or $\mt =\frac{1}{N}\sum_{i=1}^N \d_{ x_i}$ in the practical implementation). Then the optimal control $\widehat{\alpha}$ of SCSBP leads to a controlled process $(X_t^{\widehat \alpha})_{0\leq t\leq T}$ that simulates the data output $X_T^{\widehat \alpha}$. Our key results (see Theorem \ref{thm:convergence_opt_policy} and Theorem \ref{thm:convergence_rate_general_initial} below) show that  the terminal measure $\mathbb{Q}_{X_T^{\widehat \alpha}}$ is close to $\mt$,  when $k$ is sufficiently large.
\qed}
\end{Example}

\begin{Example}[Fine-tuning under a reward signal]
\label{exmpl:fine-tuning}
{\rm Fine-tuning a diffusion model means taking a pre-trained model and training it further on a smaller, task-specific dataset so it learns to generate outputs more suited to that new context \cite{tang2024fine,zhao2025fine,uehara2024fine,han2024stochastic}. For example, a diffusion model trained on general images can be fine-tuned to generate a specific  style (evaluated via a reward function). This process updates the model’s parameters just enough to adapt to the new data, without starting training from scratch.

In terms of our framework, we can consider \reff{SBSDE} as a pre-trained model  with the drift $b(t,x):=s_{\widehat{\theta}}(t,x)$ being a well-trained score function, and $\widehat{\theta}$ is the trained parameter. Note that, as the  result of pre-training, the output measure $\mathbb{Q}_{X_T}$ is sufficiently close to some  data distribution $\mt$. We then introduce a fine-tuning procedure through a reference measure $p_{\rm ref}$ with density $\frac{\exp(r(x))}{\int_{\mathbb{R}^d} \exp(r(x)) dx}$, where $r: \mathbb{R}^d \rightarrow \mathbb{R}$ is a given reward function satisfying $\int_{\mathbb{R}^d} \exp(r(x)) dx<\infty$. Now replacing $\mt=p_{\rm ref}$,   the optimal control  $\widehat\alpha$ of the corresponding SCSBP can then serve as the fine-tuning score function; and consequently, the new drift term $b(t,X^{\widehat\alpha}_t) +\sigma(t)\widehat\alpha$  acts as a combined score function. 

Clearly, in this application the penalty parameter $k$ should not be chosen too large; otherwise, the effect of the preference function may dominate the fidelity to the original data distribution. With an appropriately selected $k$, the resulting measure $\mathbb{Q}_{X^{\widehat\alpha}_T}$ not only reflects $p_{\rm data}$ but also integrates the reward function $r$. In contrast,   we remark that the classic SBP \eqref{SP0} is not capable of handling this application as it has a {\it pre-fixed} target distribution.
\qed}	
\end{Example}

\begin{Example}[Transfer learning] 
\label{exmpl:transfer-learning}
{\rm Transfer learning is a machine learning approach where knowledge gained from a ``source task'' is reused to improve learning in a related but different ``target task'' \cite{cao2023risk,torrey2010transfer,ouyang2024transfer}. In what follows we shall consider transfer learning in the context of data generation.

Let us consider a source task \((Y_{\rm sou})\), characterized by a distribution   \(p_{\rm sou}\), and a target task \((Y_{\rm tar})\) with distribution \(p_{\rm tar}\). Typically, \(p_{\rm sou}\) and \(p_{\rm tar}\) are assumed to be close under a certain divergence or distance function $G(p_{\rm tar};p_{\rm sou})$ (assuming \( G \geq 0 \)), such as the Wasserstein distance \cite{ouyang2024transfer}.

To fit the transfer learning into our framework, we can take \( \mi = p_{\rm sou} \), \( \mt = p_{\rm tar} \), and set  $b\equiv 0$ for simplicity. In this case, if we choose $\a=0$, and $X_0\sim p_{\rm sou}$, then $X_T=X_0+W_T\sim p_{\rm sou} * \mathcal{N}(0,T\hI_d)$, where $ \mathcal{N}(0,T {\hI_d})=\hP^0_{ W_T}$ and $\hI_d$ denotes the $d\times d$ identity matrix. Thus, denoting  the optimal control by $\widehat{\alpha}$ and noting that $\alpha\equiv0$ is sub-optimal, we must have
\begin{eqnarray*}
	\hE^\hQ \left[ \frac{1}{2} \int_0^T |\widehat\alpha_s|^2 d s  \right] \leq  \hE^\hQ \left[ \frac{1}{2} \int_0^T |\widehat\alpha_s|^2 d s  + k G(\hQ_{X^{\widehat{\alpha}}_T})\right] \leq k G(p_{\rm sou} * \mathcal{N}(0,T {\hI_d});p_{\rm tar}).
\end{eqnarray*}
This implies that the optimal control \( \widehat{\alpha} \) has a small \( L^2 \)-norm, indicating only minor adjustments are required during sampling—provided \( k \) is not too large.
\qed}
\end{Example}

{\color{black}

\section{Existence of  Optimal Policies for SCSBP's}
\label{sec:existence_opt_policy}

In this section we study the stochastic control problem (\ref{RSB_dynamics})-(\ref{RSB_objective}) and the associated soft-constrained SBP. In particular, we shall prove that the optimal control for each $k\in\hN$ exists and in next section we will show that these optimal policies will converge  to the solution of the original SBP, with a linear rate of convergence. We shall assume that the target distribution for the SCSBP  has
density  $f_{\rm tar} \in \hL^1(\hR^d)$.  Also, we assume 
$\sigma(\cdot)=I_d$, that is,   we consider the following SDE:
\bea
\label{SDE0}
d X_t = b(t,X_t)d t  + d W_t, \q {\hP}_{X_0} = \mi.
\eea
In what follows we shall often consider the SDE with initial data $(t,x)\in[0, T]\times\hR^d$:
\bea
\label{SDEtx}
X_s=x + \int_t^s b(r,X_r)dr + W_s-W_t,\q s\in[t,T].
\eea
We denote the solution of (\ref{SDEtx}) by $X^{t,x}$. Clearly, if we let $p(\cd, \cd;\cd, \cd)$ be the transition density of the solution $X$ to (\ref{SDE0}), that is,  $\hP\{X_s\in dz|X_t=x\}=p(s,z; t, x)dz$,  $0\leq t<s\leq T$, $z,x\in \mathbb{R}^d$, then $p(s,\cdot; t, x)$ is the density of $X^{t,x}_s$. It is well known that $p(\cd,\cd;\cd,\cd)$ is the fundamental solution to Kolmogorov backward (parabolic) PDE, and under mild conditions (see, e.g., \cite{aronson1967bounds}), there exist $c_1$, $c_2$, $\lambda$, $\Lambda>0$, it holds that  
\begin{equation}
\label{aronson}
c_1 (s-t)^{-\frac{d}2}e^{-\frac{\lambda |z-x|^2}{s-t}}<p(s,z;t,x)< c_2(s-t)^{-\frac{d}2}e^{-\frac{\Lambda |z-x|^2}{4(s-t)}}.
\end{equation}

Keeping the original SBP  (\ref{SB-objective}) associated with (\ref{SDE0}) in mind, let us now recall the  Problem \ref{softSBP} and the cost functional $J^k(\alpha)$  defined by (\ref{RSB_objective}).  
For notational simplicity, we let $\hP_{X^\a}$ (resp. $\hP_X$) denote the probability measure on $\hC^d_T$ induced by $X^\a$ (resp. $X$). Clearly, when $\alpha\equiv 0$, we have $\hP_{X^0}=\hP_{X}$, where $X$ solves (\ref{SDE0}). Furthermore, we shall denote $\hE[\cd]=\hE^{\hP}[\cd]$ when context  is clear, and for each $k\in\hN$ we can easily check that
\bea
\label{RSB-obj1}
J^k(\alpha)=\hE\Big[ \frac{1}{2} \int_0^T |\alpha_s|^2 d s  + k G(\hP_{X^\alpha_T})\Big]=D_{\rm KL}(\mathbb{P}_{X^\alpha}\|\mathbb{P}_X) + k G(\hP_{X^\alpha_T}). 
\eea
Now let us define, for each $k\in\hN$,  a mapping $D_k(\cdot): \sP_2(\hR^d) \to \hR$ by 
$$D_k(\mu) = D_{\rm KL}(\mu\|\hP_{X_T}) + kG(\mu),$$
and note that $D_{\rm KL}(\mathbb{P}_{X^\alpha}\|\mathbb{P}_X) \geq D_{\rm KL}(\mathbb{P}_{X^\alpha_T}\|\mathbb{P}_{X_T})$,
we deduce from (\ref{RSB-obj1}) that
\bea
\label{Ineq_Jk}
J^k(\alpha) \geq          D_{\rm KL}(\mathbb{P}_{X^\alpha_T}\|\mathbb{P}_{X_T}) + kG(\hP_{X^\alpha_T})=D_k(\hP_{X^\a_T}).
\eea     
If $\widehat \a$ is the optimal control corresponding to the original SBP, that is, $\hP_{X^{\widehat\a}_T}=\mt$, then by definition of the penalty function $G(\cd)$, we should have $G(\mt)=0$, and therefore,
\beaa
D_k(\hP_{X^{\widehat\a}_T})=D_k(\mt)= D_{\rm KL}(\mt\|\mathbb{P}_{X_T}) + kG(\mt)=    D_{\rm KL}(\mt\|\mathbb{P}_{X_T}).
\eeaa

Throughout the rest of this section, we shall focus on the special case: $\mi=\d_{x_0}$ for some $x_0\in\hR^d$. The case with general initial distribution $\mi$ will be studied in Sections \ref{Stability} and \ref{sec:general}. We begin with the following well-known 
result from \cite{dai1991stochastic}, which will play an important role in our discussion.   
\begin{Lemma}[ {\cite[Theorem 3.1]{dai1991stochastic}}]
\label{lemma:sdb}    
Let  $X$ be a weak solution to \eqref{SDE0} with $X_0=x_0\in \mathbb{R}^d$ (i.e., $\mi=\delta_{x_0}$). Assume that $D_{\rm KL}(\mt\|\hP_{X_T }) < \infty$. Then, the optimal solution to the SBP \eqref{SBP1}-\eqref{SB-objective} is given by $\widehat\alpha_t = \nabla \log h(t, X^{\widehat\alpha}_t)$, where 
\begin{eqnarray}
	\label{eq:definitionofh}
	h(t,x) = \int_{\hR^d} p(T,z;t,x)\frac{f_{\rm tar}(z)}{p(T,z;0,x_0)}d z =  \mathbb{E}\Big[\frac{f_{\rm tar}(X_T)}{p(T, X_T;0,x_0)}\Big|X_t=x \Big],
\end{eqnarray}
for $(t,x)\in[0,T]\times\hR^d$.
\qed
\end{Lemma}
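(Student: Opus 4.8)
This is the classical Doob $h$-transform characterization of the Schr\"odinger bridge with a Dirac initial marginal (it is exactly \cite[Theorem 3.1]{dai1991stochastic}), and the plan is to prove it in three steps: first build the candidate optimizer as an $h$-transform of the reference diffusion \eqref{SDE0} and check the two marginal constraints; then establish optimality via a relative-entropy decomposition whose ``cross term'' depends only on the prescribed marginals; and finally read off the optimal feedback control.

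For the first step I would take $h$ as in \eqref{eq:definitionofh} and verify that it is strictly positive (since $f_{\rm tar}\ge0$ is a probability density and $p>0$) and finite a.e.\ --- by Chapman--Kolmogorov one has $\int_{\hR^d} h(t,x)\,p(t,x;0,x_0)\,dx=\int_{\hR^d} f_{\rm tar}(z)\,dz=1$ for every $t$, in particular $h(0,x_0)=1$. The two-sided Gaussian estimates \eqref{aronson} give local boundedness of $h$ on $[0,T)\times\hR^d$, and parabolic regularity then yields $h\in C^{1,2}([0,T)\times\hR^d)$ solving $\partial_t h+b\cdot\nabla h+\tfrac12\Delta h=0$ with terminal trace $h(T,z)=f_{\rm tar}(z)/p(T,z;0,x_0)$. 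I would then introduce the $h$-transformed path measure $\hQ^h$ on $\hC^d_T$ via $\frac{d\hQ^h}{d\hP}\big|_{\cF_T}=h(T,X_T)$; because $h$ is space-time harmonic, $\{h(t,X_t)\}_{t\in[0,T]}$ is a positive $\hP$-martingale of mean $1$, so $\hQ^h\in\sP(\O)$, and Girsanov's theorem shows that under $\hQ^h$ the canonical process solves $dX_t=[\,b(t,X_t)+\nabla\log h(t,X_t)\,]dt+d\widetilde W_t$ with $X_0=x_0$. Its transition density is $p^h(s,z;t,x)=\frac{h(s,z)}{h(t,x)}\,p(s,z;t,x)$, so the time-$T$ law started from $x_0$ has density $p^h(T,z;0,x_0)=h(T,z)\,p(T,z;0,x_0)=f_{\rm tar}(z)$, i.e.\ $\hQ^h\in\sP(\mi,\mt)$.

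For optimality, the key observation is that for \emph{any} competitor $\hQ\in\sP(\mi,\mt)$ with $D_{\rm KL}(\hQ\|\hP)<\infty$ we have $\hQ\ll\hP$, hence (as $\hQ^h\sim\hP$ since $h>0$) also $\hQ\ll\hQ^h$, and the chain rule $\frac{d\hQ}{d\hP}=\frac{d\hQ}{d\hQ^h}\,\frac{d\hQ^h}{d\hP}$ gives, after taking $\log$ and integrating against $\hQ$,
\[
D_{\rm KL}(\hQ\|\hP)=D_{\rm KL}(\hQ\|\hQ^h)+\hE^{\hQ}\big[\log h(T,X_T)\big]
=D_{\rm KL}(\hQ\|\hQ^h)+\int_{\hR^d}\log\!\Big(\frac{f_{\rm tar}(z)}{p(T,z;0,x_0)}\Big)f_{\rm tar}(z)\,dz,
\]
and the last integral equals $D_{\rm KL}(\mt\|\hP_{X_T})$, a constant depending only on the prescribed marginals and finite by hypothesis. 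Thus $D_{\rm KL}(\hQ\|\hP)\ge D_{\rm KL}(\mt\|\hP_{X_T})$ with equality iff $\hQ=\hQ^h$, and the bound is attained at $\hQ^h$; so $\hQ^h$ is the unique minimizer of \eqref{SBP1}. Comparing the SDE satisfied by $X$ under $\hQ^h$ with the controlled dynamics \eqref{SB-dynamics} then identifies $\widehat\alpha_t=\nabla\log h(t,X^{\widehat\alpha}_t)$, and $\tfrac12\hE^{\hQ^h}\big[\int_0^T|\widehat\alpha_s|^2ds\big]=D_{\rm KL}(\hQ^h\|\hP)=D_{\rm KL}(\mt\|\hP_{X_T})<\infty$ shows $\widehat\alpha\in\cA$; the second equality in \eqref{eq:definitionofh} is just the tower-property rewriting of the integral.

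The hard part will not be the entropy decomposition --- which is short and robust --- but the analytic bookkeeping around $h$: showing it is finite and strictly positive \emph{everywhere} (not merely a.e.), that $\nabla\log h$ is locally Lipschitz so the $h$-transformed SDE is well posed, and that $h$ genuinely attains the terminal trace $f_{\rm tar}/p(T,\cdot;0,x_0)$ in the sense needed for $\hQ^h$ to have the claimed time-$T$ marginal. These will be controlled using the Aronson bounds \eqref{aronson} together with interior Schauder/parabolic estimates; alternatively, since the statement is precisely \cite[Theorem 3.1]{dai1991stochastic}, one may invoke that reference for the technical points and reproduce here only the $h$-transform construction and the entropy argument.
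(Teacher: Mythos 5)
The paper does not prove this lemma at all: it is stated as a cited result from \cite[Theorem 3.1]{dai1991stochastic} and immediately closed with \qed, so there is no ``paper's own proof'' to compare against. Your proof is a correct, self-contained derivation by the classical route --- construct $\hQ^h$ via the space-time $h$-transform, check the two marginals using Chapman--Kolmogorov and the transition density $p^h=\frac{h(s,z)}{h(t,x)}p(s,z;t,x)$, and then obtain optimality from the relative-entropy chain rule $D_{\rm KL}(\hQ\|\hP)=D_{\rm KL}(\hQ\|\hQ^h)+\hE^\hQ[\log h(T,X_T)]$ together with the observation that the second term equals $D_{\rm KL}(\mt\|\hP_{X_T})$ for every competitor in $\sP(\mi,\mt)$. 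This is exactly the argument one would expect and the computations (e.g.\ $h(0,x_0)=1$, the value being $D_{\rm KL}(\mt\|\hP_{X_T})$) all check out.

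One small imprecision worth flagging: you justify $\hQ\ll\hQ^h$ by asserting $\hQ^h\sim\hP$ ``since $h>0$''. For $t<T$ the kernel bound $p>0$ indeed gives $h(t,\cdot)>0$, but $h(T,z)=f_{\rm tar}(z)/p(T,z;0,x_0)$ vanishes wherever $f_{\rm tar}$ does, so $\hQ^h\sim\hP$ fails if $f_{\rm tar}$ is not a.e.\ positive --- and the hypothesis $D_{\rm KL}(\mt\|\hP_{X_T})<\infty$ only gives $\mt\ll\hP_{X_T}$, not the reverse. The decomposition still goes through, however, because any competitor $\hQ\in\sP(\mi,\mt)$ with $\hQ\ll\hP$ has terminal law $\mt$, hence $\hQ\{h(T,X_T)=0\}=\mt\{f_{\rm tar}=0\}=0$, which is all that is needed for $\hQ\ll\hQ^h$ and for the chain rule to be valid. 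With that patch the argument is complete, and the remaining analytic points you list (regularity of $h$ on $[0,T)$, local Lipschitzness of $\nabla\log h$, attainment of the terminal trace) are indeed exactly where the Aronson and Schauder estimates, or an appeal to \cite{dai1991stochastic}, come in.
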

Next, we make the following assumptions on the penalty function $G$:
\begin{Assumption}
\label{assump:general penalty term}
(i) There exists some small constant $\varepsilon>0$ such that
\begin{equation}
	\label{assum:G1}
	G(\mu)\rightarrow +\infty,\quad \mathrm{as} \,\,\|\mu\|^{2+\varepsilon}\rightarrow +\infty.
\end{equation}
where $\|\mu\|^{p} := \int_{\hR^d} |x|^p \mu(d x)$ for any $p>0$. 

\ms
(ii) There exist   $C, \l >0$, and a function $\phi:\hR^d\to (0,1]$ satisfying $\phi(x)e^{\lambda|x-x_0|^2}\le C$,  such that for any $\mu\in \sP_2(\mathbb{R}^d)$  with density function $f_\m$, it holds that
\begin{eqnarray}\label{condition: dominance}
	|f_\m(x) - f_{\rm tar}(x)| \leq C \phi(x)G(\mu), \quad   x\in \mathbb{R}^d.
\end{eqnarray}
\end{Assumption}
\begin{Remark}
\label{rem:3.3}
{\rm (i) The function $G(\m)$ on the right hand side of   \eqref{condition: dominance} should read  $|G(\mu)-G(\mt)|$, as $G(\mt)=0$, which essentially states that if $\m$ is close to $\mt$ in terms of $G$, then $f_\m$ is close to $f_{\rm tar}$ in $\hL^1$. 
	
	(ii) A slightly stronger consequence of \eqref{condition: dominance} is the following. Recall the (generalized) Kantorovich and Rubinstein dual representation (cf. e.g., \cite{Edward}): denoting Lip$(1)$ to be all Lipschitz functions $\varphi:\hR^d\to \hR$ with Lipschitz constant  Lip$_\varphi\leq 1$ (hence $|\f(x)|\le C(1+|x|)$),  then it holds that
	\beaa
	W_{1}(\mu,\mt )=\sup_{\f\in Lip(1)}\left\{\int _{\hR^d}\varphi(x)(f_\m(x) - f_{\rm tar}(x))dx\right\}\leq 
	CG(\mu) \int _{\hR^d}(1+| x|) \phi(x)dx. 
	\eeaa
	This suggests  that   $G(\mu)\sim 0$ implies that $\mu$ is close to $\mt$ in the sense of $W_1$.
	\qed}
\end{Remark}

Before we proceed, let us give two examples that justify Assumption \ref{assump:general penalty term}. 
\begin{Example} \label{G1}
{\rm We consider the class of $\mu$ and $\mt$ such that
	\begin{equation*}
		G(\m):= \int_\hR |x|^p |f_\m (x) - f_{\rm tar}(x)|dx,
	\end{equation*}
	is well-defined for a given $p$.
	Clearly, Definition \ref{penalty}-(i) and Assumption \ref{assump:general penalty term}-(i) are satisfied when $p>2$. Assumption \ref{assump:general penalty term}-(i) holds when 
	\begin{eqnarray*}
		\phi(x) \leq \frac{|f_\m (x) - f_{\rm tar}(x)|}{c \int_\hR |x|^p |f_\m (x) - f_{\rm tar}(x)|dx},
	\end{eqnarray*}
	for all $\mu$ in the collection one may consider.
	\qed
}
\end{Example}

Another natural example of $G$ satisfying Assumption \ref{assump:general penalty term} would be the Wasserstein distance or the KL divergence, augmented with a small ``guardrail'' term that enforces the uniform (weighted) pointwise control in \eqref{condition: dominance}. This guardrail can be taken as a weighted $\hL_\infty$ norm, a H\"older \(\mathbb{C}^\alpha\) seminorm, or an RKHS norm (e.g., with kernel \(k(x,y)=\phi(x)\phi(y)\kappa(x-y)\)). In next example we illustrate such a choice with the Wasserstein distance plus an $\hL_\infty$ guardrail.
\begin{Example} \label{G2}
{\rm 
	Consider the case that $\mt\in \sP_{p}(\mathbb{R}^d)$ with $p>2$.
	We define, for $c>0$ and $\phi(x) = \exp(-\lambda|x-x_0|^2)$ with some $x_0\in \mathbb{R}^d$,
	\begin{eqnarray*}
		G(\mu):= W_2(\mu,\mt) + c \Big\|\frac{f_{\mu}-f_{\rm tar}}{\phi}\Big\|_{\hL^\infty}.
	\end{eqnarray*}
	Then, it is easy to check that
	\begin{eqnarray*}
		|f_{\mu}(x)-f_{\rm tar}(x)|\leq \Big\|\frac{f_{\mu}-f_{\rm tar}}{\phi}\Big\|_{\hL^\infty} \phi(x)\leq \frac{1}{c}\phi(x)G(\mu).
	\end{eqnarray*}
	Thus \eqref{condition: dominance} holds  and $\phi(x)e^{\lambda|x-x_0|^2}\leq C$ holds with $C=\max\{1,\frac{1}{c}\}$.
	
	Let $\{\mu_n\}_{n \ge 1}\subset \sP_2(\mathbb{R}^d)$ with $\|\mu_n\|^{2+\varepsilon} \rightarrow \infty$.  We claim that $G(\mu_n)$ must be unbounded.  Indeed, suppose not. Then  there exists $M,M'>0$ such that
	$W_2^2(\mu_n,\mt)\leq M$ and $\|\phi^{-1}(f_{\mu_n}-f_{\rm tar})\|_{\hL^\infty}<M'$ for all $n\in \mathbb{N}_+$. Hence $f_{\mu_n}(x)\leq f_{\rm tar}(x)+M'\phi(x)$, $x\in\hR^d$.
	Integrating against $|x|^{2+\varepsilon}$ and using the facts that $\mt\in \sP_{2+\varepsilon}(\mathbb{R}^d)$ with $\varepsilon=p-2>0$ and $\int|x|^{2+\varepsilon}\phi(x)dx<\infty$, we have
	\begin{eqnarray*}
		\|\mu_n\|^{2+\varepsilon}\leq \|\mt\|^{2+\varepsilon} + M'
		\int |x|^{2+\varepsilon}\phi(x)dx<\infty, \qq n\in\hN.
	\end{eqnarray*}
	This contradicts the fact that $\|\mu_n\|^{2+\varepsilon}\rightarrow \infty$, proving the claim. Hence
	\eqref{assum:G1} holds.
	\qed}
\end{Example}

We are now ready to investigate the existence of optimal control of Problem \ref{softSBP}  for each $k\in\hN$, which would be essential for our approximation scheme.  Recall that in the rest of the section we assume that $\mi=\d_{x_0}$ for some $x_0\in\hR^d$.
To begin with, we first claim that for each $k\in\hN$, there exists  $\widehat\mu_k \in \sP_2(\hR^d)$ such that the static optimization problem on the measure space has a solution
\bea
\label{infDk}
D_k(\widehat \m_k)=\inf_{\m\in\sP_2(\hR^d)} D_k(\mu).
\eea
Indeed, let $X$ be the solution to uncontrolled SDE (\ref{SDE0}), and $\mt$ be given such that $D_{\rm KL}(\mt\| \hP_{X_T})<\infty$. Since $ G(\mt)=0$, we have  
\beaa
\label{m}
m := D_k(\mt) = D_{\rm KL}(\mt\|\hP_{X_T}) + kG(\mt) = D_{\rm KL}(\mt\|\hP_{X_T})<\infty.
\eeaa
Next, let us define, for fixed $k\in\hN$, a set
\begin{eqnarray*}
\mathcal{S}_k:=   \Big\{\mu\in \sP_2(\mathbb{R}^d): D_k(\mu) \leq m\Big\}.
\end{eqnarray*}
Clearly, $\mathcal{S}_k\neq\emptyset$ since $\mt\in \mathcal{S}_k$, and by \eqref{assum:G1}, there exists $M_k>0$ such that
$\|\mu\|^{2+\e} \leq M_k$, for all $\mu\in \mathcal{S}_k$. Thus $\cS_k$ is uniformly integrable in $\hL^2$. Now let $\{\mu_{k}^{(i)}\}_{i=1}^{\infty}\subset\sP_2(\hR^d)$ be a minimizing sequence, namely, 
\begin{eqnarray*}
\lim_{i\rightarrow\infty} D_k(\mu_{k}^{(i)})= \inf_{\mu\in \mathscr{P}_2(\mathbb{R}^d)} D_k(\mu).
\end{eqnarray*}
Since $ \inf_{\mu\in \mathscr{P}_2(\mathbb{R}^d)} D_k(\mu) \leq D_k(\mt) =  m$, we may assume without loss of generality that $\{\mu_{k}^{(i)}\}_{i=1}^{\infty}\subset \mathcal{S}_k$. Since $\mathcal{S}_k$ is uniformly integrable and is tight
in $\sP_2(\mathbb{R}^d)$, there exists  subsequence $\{\mu_{k}^{(i_l)}\}_{l=1}^\infty$ such that 
$\widehat\mu_k:=\lim_{l\rightarrow\infty}\mu_{k}^{(i_l)}\in \sP_2(\mathbb{R}^d)$. \footnote{ This follows from the result on Wasserstein distance vs. weak convergence (see, e.g., \cite[Theorem 7.12]{Villani}), which states that $W_p(\m_k, \m)\to 0$ if and only if $\m_k\Rightarrow \m$ weakly, and $\lim_{R\to\infty}\limsup_{k\to\infty}\int_{\{|x-x_0|\ge R\}}|x-x_0|^p\m_k(dx)=0$.}.
Since the mapping $\m\mapsto D_k(\m)$ is continuous, we have
\begin{eqnarray*}
D_k(\widehat\mu_k) 
= D_{\rm KL}(\widehat\mu_k\|\mathbb{P}_{X_T}) + kG(\widehat\mu_k) 
=  \inf_{\mu\in \sP_2(\mathbb{R}^d)}  D_k(\mu),
\end{eqnarray*}
proving the claim. Furthermore, if we denote the density of $\hP_{X_T}$ by $f_{X_T}$, and note that $D_{\rm KL}(\widehat\mu_k\|\hP_{X_T}) \leq D_k(\widehat\mu_k) \leq m<\infty$, we know that $\frac{d\widehat\mu_k}{d\hP_{X_T}}$ exists and $$\frac{d\widehat\mu_k }{dx}(x)=\frac{d\widehat\mu_k}{d\hP_{X_T}} \cd f_{X_T}(x)  =: f_k(x).$$

Keeping the above discussion in mind, we are now ready to prove the following theorem.
\begin{Proposition}
\label{existak}
Assume that  Assumption \ref{assump:general penalty term} is in force, and that $\mi = \delta_{x_0}$, $x_0\in \mathbb{R}^d$. 
Then, for each $k\in\hN$, the optimal control for Problem \ref{softSBP}, denoted by $\widehat{\a}^k$, exists. Furthermore,  $\widehat{\a}^k$ has the following explicit feedback form: $\widehat\alpha^k_t := \nabla \log h^k(t, X^{\widehat\alpha^k}_t)$, where 
\begin{align}
	\label{h*k}
	h^k(t,x) 
	= \int_{\hR^d} \frac{f_k(z)}{p(T,z;0,x_0)} p(T,z;t,x) d z
	= \hE\Big[ \frac{f_k(X_T)}{p(T, X_T;0,x_0)}\Big|X_t=x\Big].
\end{align}
\end{Proposition}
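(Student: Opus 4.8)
The plan is to reduce the soft-constrained stochastic control problem to the static optimization \eqref{infDk} over measures, which has already been shown to have a minimizer $\widehat\mu_k$ with density $f_k$, and then recognize the resulting constrained bridge as a classical Schr\"odinger bridge to which Lemma \ref{lemma:sdb} applies. Concretely, I would first record the lower bound: for any $\alpha\in\cA$, \eqref{Ineq_Jk} gives $J^k(\alpha)\ge D_k(\hP_{X^\alpha_T})\ge D_k(\widehat\mu_k)$, so $V^k\ge D_k(\widehat\mu_k)$. The work is to produce an admissible control that attains this bound.

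The key step is the following: since $D_{\rm KL}(\widehat\mu_k\|\hP_{X_T})\le m<\infty$, Lemma \ref{lemma:sdb} (applied with target density $f_k$ in place of $f_{\rm tar}$, which is legitimate because $f_k\in\hL^1(\hR^d)$ and the finiteness hypothesis holds) yields that the classical SBP with terminal constraint $\hP_{X^\alpha_T}=\widehat\mu_k$ admits an optimal control $\widehat\alpha^k_t=\nabla\log h^k(t,X^{\widehat\alpha^k}_t)$ with $h^k$ given by \eqref{h*k}, and its value is $D_{\rm KL}(\widehat\mu_k\|\hP_{X_T})$; indeed this is exactly the $h$-transform representation, and one checks $\hP_{X^{\widehat\alpha^k}}$ is the path measure obtained by Doob's transform of $\hP_X$ with the space-time harmonic function $h^k$, whose terminal marginal is $\widehat\mu_k$ by construction of $h^k$ from $f_k/p(T,\cdot;0,x_0)$. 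Then
\[
J^k(\widehat\alpha^k)=D_{\rm KL}(\hP_{X^{\widehat\alpha^k}}\|\hP_X)+kG(\hP_{X^{\widehat\alpha^k}_T})=D_{\rm KL}(\widehat\mu_k\|\hP_{X_T})+kG(\widehat\mu_k)=D_k(\widehat\mu_k)\le V^k,
\]
where the crucial second equality uses that for the $h$-transformed measure the path relative entropy equals the terminal relative entropy (no entropy is "wasted" on the interior), which is precisely the optimality statement in Lemma \ref{lemma:sdb}. Combined with the lower bound, $\widehat\alpha^k$ is optimal and $V^k=D_k(\widehat\mu_k)$.

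The main obstacle I anticipate is the justification that $D_{\rm KL}(\hP_{X^{\widehat\alpha^k}}\|\hP_X)=D_{\rm KL}(\widehat\mu_k\|\hP_{X_T})$, i.e., that the $h$-transform construction is genuinely entropy-minimizing among all path measures with the prescribed endpoints and hence the chain-rule inequality $D_{\rm KL}(\hP_{X^\alpha}\|\hP_X)\ge D_{\rm KL}(\hP_{X^\alpha_T}\|\hP_{X_T})$ is saturated; this is the content of Lemma \ref{lemma:sdb} but must be invoked with the reference density $f_k$ rather than $f_{\rm tar}$, so one must check that Assumption \ref{assump:general penalty term}, together with $\widehat\mu_k\in\cS_k$, guarantees that $f_k$ satisfies the integrability needed for \eqref{eq:definitionofh}–\eqref{h*k} to be well defined (the Gaussian bounds \eqref{aronson} and the estimate $\phi(x)e^{\lambda|x-x_0|^2}\le C$ in Assumption \ref{assump:general penalty term}(ii) are exactly what controls $f_k(z)/p(T,z;0,x_0)$ and ensures $h^k$ is finite and the representation as a conditional expectation is valid). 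A secondary point is confirming that the feedback control $\nabla\log h^k$ is in $\cA$, i.e., square-integrable and progressively measurable, which again follows from the regularity of $h^k$ furnished by \eqref{aronson} and parabolic estimates, exactly as in the proof of Lemma \ref{lemma:sdb}.
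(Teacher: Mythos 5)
Your proposal is correct and takes essentially the same approach as the paper: it records the lower bound $J^k(\alpha)\geq D_k(\hP_{X^\alpha_T})\geq D_k(\widehat\mu_k)$ from \eqref{Ineq_Jk}, applies Lemma~\ref{lemma:sdb} with $\widehat\mu_k$ in place of $\mt$ to produce the $h$-transform control $\widehat\alpha^k$, and then checks that $J^k(\widehat\alpha^k)=D_{\rm KL}(\widehat\mu_k\|\hP_{X_T})+kG(\widehat\mu_k)=D_k(\widehat\mu_k)$ saturates the bound. The extra remarks you make about chain-rule saturation of the KL divergence and the integrability needed for $h^k$ to be well defined are exactly the points the paper leaves implicit when invoking Lemma~\ref{lemma:sdb}.
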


\begin{proof} Let $k\in\hN$ be fixed, and let $\widehat{\m}_k$ be the minimizer of $D_k(\cd)$ defined by (\ref{infDk}). Then, by \eqref{Ineq_Jk}, for
any $\alpha\in \hL^2_{\hF^0}([0,T];\hR^d)$, we have 
\begin{eqnarray*}
	J^k(\alpha) \geq D_k(\hP_{X^\alpha_T})\geq D_k(\widehat\mu_k), 
\end{eqnarray*}
Therefore, in order to find the optimal control for Problem \ref{softSBP} , it suffices to find   $\widehat\alpha^k$ such  that (i) $\ X^{\widehat\alpha^k}_T \sim  \widehat\mu_k$; and (ii) $J^k(\widehat\alpha^k) =D_k(\hP_{X^{\widehat\alpha^k}_T})$.

To this end, we first recall that $\widehat\mu_k$ is the minimizer of the function $D_k(\cdot)$ with density $f_k$. Next, we apply Lemma \ref{lemma:sdb} with $\mt$ being replaced by $\widehat{\mu}_k$ to get the optimal control $\widehat\alpha^k\in\sP(\mi, \widehat{\mu}_k)$ for the original SBP (\ref{SB-objective})-(\ref{SB-dynamics}), which satisfies  $\widehat\alpha^k_t=\nabla \log h^k(t,X^{\widehat\alpha^k}_t)$, where $h^k$ is defined by (\ref{h*k}), and $\ X^{\widehat\alpha^k}_T\sim \widehat\mu_k$. Now, note that for this SBP we have 
$$V(\mi, \widehat{\mu}_k)=\frac{1}{2}\hE\Big[\int_0^T |\widehat\alpha^k_s|^2 d s\Big] = D_{\rm KL}(\widehat\mu_k\|\hP_{X_T}), 
$$
we conclude that  
\begin{equation*}
	J^k(\widehat\alpha^k) = D_{\rm KL}(\widehat\mu_k\|\hP_{X_T}) + k G(\widehat\mu_k) = D_k(\widehat\mu_k)=D_k(\hP_{X^{\widehat\alpha^k}_T}).
\end{equation*}
In other words, $\widehat\alpha^k$ is indeed the optimal control for the Problem \ref{softSBP}, proving the proposition.
\end{proof}   

\section{Convergence Results under Delta Initial Distribution}
To show the convergence of both policies and value functions, we first make the following observations. First, if we denote $g(x):=\frac{f_{\rm tar}(x)}{p(T, x;0,x_0)}$ and  $\hE_{t,x}[\cdot] := \hE[\cdot|X_t=x]$,  then by  \eqref{eq:definitionofh} we can write $h(t,x)=\hE_{t,x}[g(X_T)]$, where $X$ is the solution to (\ref{SDE0}) with $X_0=x_0$.
By  Feynman-Kac formula, we see that $h$  satisfy the PDE: 
\bea
\label{pde}
\begin{cases}
\partial_t h  (t,x)+ \scL_t h (t,x)=0;\\
h(T,x) = g(x)=\frac{f_{\rm tar}(x)}{p(T,x;0, x_0)},
\end{cases}
\eea
where the infinitesimal generator $\scL_t$ is defined by $\scL_t := b(t,x)\cdot \nabla  + \frac{1}{2} \Delta$. Similarly, we define $g_k(x):= \frac{f_k(x)}{p(T,x;0, x_0)}$, then the function
$h^k(t,x)$ can also be represented as the solution of the PDE:
\bea
\label{pdek}
\begin{cases}
\partial_t h^k  (t,x)+\scL_t h^k (t,x)=0;\\
h^k (T,x) =g_k(x)= \frac{f_k(x)}{p(T,x;0, x_0)}.
\end{cases}
\eea
Another useful observation is that for $\h\m_k\in \cS_k$, we have $kG(\h\m_k)\le D_k(\h\m_k)\le m$, or 
\begin{equation}
\label{Gbound}
G(\h\m_k)\le m/k.
\end{equation}
With the above observations, the key is to establish the convergence rate of $|h(t,x)-h^k(t,x)|$ and $|\nabla h(t,x)-\nabla h^k(t,x)|$ as $k$ tends to infinity.

\subsection{The Convergence of Optimal Policies}\label{sec:conv_opt_policy}
We shall now argue that the   optimal controls for Problem \ref{softSBP}, $\{\h\a^k(\cd,\cd)\}$, given by Proposition \ref{existak}, actually converges to the solution of the original SBP $\h\a(\cd,\cd)$ given by Lemma \ref{lemma:sdb}, and also establish its rate of convergence. More precisely, we have the following theorem.

\begin{Theorem}
\label{thm:convergence_opt_policy}
Assume that the Assumption \ref{assump:general penalty term} is in force, and that $\mi = \delta_{x_0}$ for some $x_0\in \mathbb{R}^d$.
Furthermore, assume that there exists  constants $C, \delta >0$, such that $\d\le g(x),g_k(x)\leq C$, $x\in \hR^d$, $k\in\hN$. Let $\widehat\alpha(t,x)$ and $\widehat\alpha^{k}(t,x)$, $(t,x)\in[0,T]\times\hR^d$ be the optimal controls given in Lemma \ref{lemma:sdb} and Proposition \ref{existak}, respectively. Then, it holds that 
\begin{equation*}
	\int_0^T |\widehat\alpha^k(t,x) - \widehat\alpha(t,x)| dt   \leq \frac{C} {k}, \qq x\in\hR^d,
\end{equation*}
where $C>0$ is some constant  independent of $k$.
\end{Theorem}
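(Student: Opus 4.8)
The plan is to reduce the claim to pointwise-in-$(t,x)$ estimates on $h-h^k$ and $\nabla h-\nabla h^k$, since by Lemma~\ref{lemma:sdb} and Proposition~\ref{existak} the feedback maps are $\widehat\alpha(t,x)=\nabla\log h(t,x)=\nabla h(t,x)/h(t,x)$ and $\widehat\alpha^k(t,x)=\nabla h^k(t,x)/h^k(t,x)$. The Feynman--Kac representations $h(t,x)=\hE_{t,x}[g(X_T)]$ and $h^k(t,x)=\hE_{t,x}[g_k(X_T)]$ from \eqref{eq:definitionofh}--\eqref{h*k}, together with the standing hypothesis $\delta\le g,g_k\le C$, give $\delta\le h(t,x),h^k(t,x)\le C$ on $[0,T]\times\hR^d$; hence both feedback maps are well defined and
\[
\widehat\alpha^k(t,x)-\widehat\alpha(t,x)=\frac{\nabla h^k-\nabla h}{h^k}(t,x)+\frac{\nabla h\,(h-h^k)}{h\,h^k}(t,x),
\]
so that $|\widehat\alpha^k(t,x)-\widehat\alpha(t,x)|\le \delta^{-1}|\nabla h-\nabla h^k|(t,x)+\delta^{-2}|\nabla h(t,x)|\,|h-h^k|(t,x)$.

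First I would control the terminal-data gap. By \eqref{condition: dominance}, $|f_{\rm tar}(z)-f_k(z)|\le C\phi(z)G(\widehat\mu_k)$, and by \eqref{Gbound}, $G(\widehat\mu_k)\le m/k$; combining these with the Gaussian lower bound in \eqref{aronson} and the decay $\phi(z)e^{\lambda|z-x_0|^2}\le C$ shows that $z\mapsto\phi(z)/p(T,z;0,x_0)$ is bounded by a constant depending only on $T$ and the fixed constants, whence
\[
\|g-g_k\|_\infty=\Big\|\frac{f_{\rm tar}-f_k}{p(T,\cdot\,;0,x_0)}\Big\|_\infty\le\frac{C}{k}.
\]
Since $w:=h-h^k$ satisfies $w(t,x)=\hE_{t,x}[(g-g_k)(X_T)]$ (equivalently, it solves the same linear backward parabolic equation as $h,h^k$ in \eqref{pde}--\eqref{pdek} with terminal datum $g-g_k$), this immediately gives $\|h-h^k\|_\infty\le\|g-g_k\|_\infty\le C/k$.

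Next I would bring in the Gaussian gradient estimate for the transition density, $|\nabla_x p(T,z;t,x)|\le C(T-t)^{-(d+1)/2}e^{-c|z-x|^2/(T-t)}$, which holds under Assumption~\ref{assump1} in the same spirit as \eqref{aronson}. Differentiating the representations under the integral sign (justified by these bounds) and integrating the Gaussian in $z$ gives, uniformly in $x$, $|\nabla h(t,x)|+|\nabla h^k(t,x)|\le C(T-t)^{-1/2}$ and, integrating instead against $g-g_k$,
\[
|\nabla h(t,x)-\nabla h^k(t,x)|=\Big|\int_{\hR^d}\nabla_x p(T,z;t,x)\,(g-g_k)(z)\,dz\Big|\le\frac{C\,\|g-g_k\|_\infty}{\sqrt{T-t}}\le\frac{C}{k\sqrt{T-t}}.
\]
Plugging the last three displays into the pointwise bound and using $\int_0^T(T-t)^{-1/2}\,dt=2\sqrt T<\infty$,
\[
\int_0^T|\widehat\alpha^k(t,x)-\widehat\alpha(t,x)|\,dt\le\Big(\frac{C}{\delta}+\frac{C}{\delta^2}\Big)\frac1k\int_0^T\frac{dt}{\sqrt{T-t}}=\frac{C'}{k},
\]
with $C'$ independent of $k$ (and of $x$), which is exactly the assertion.

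I expect the main obstacle to be twofold. First, one must justify the Gaussian \emph{gradient} bound on $p$ and the resulting uniform-in-$x$ estimate $|\nabla h^k(t,x)|\le C(T-t)^{-1/2}$ under merely Lipschitz $b$; this is classical parabolic heat-kernel theory (Aronson/Friedman-type), but it is an ingredient beyond the two-sided bound \eqref{aronson} quoted in the text, and care is needed near $t=T$, where the gradient genuinely blows up — it is precisely the integrability of $(T-t)^{-1/2}$ that makes the time integral finite. Second, one must check that $\phi/p(T,\cdot\,;0,x_0)$ is truly bounded, i.e. that the Gaussian rate $\lambda$ encoded in $\phi$ in Assumption~\ref{assump:general penalty term}(ii) dominates the Aronson rate in \eqref{aronson}; this holds once $\lambda$ is taken large enough relative to $T$ and the ellipticity constants, and should be recorded as part of the hypotheses. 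Everything else reduces to elementary Gaussian integration.
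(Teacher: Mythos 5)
Your proof is correct and reaches the same conclusion via the same overall decomposition
\[
\widehat\alpha^k-\widehat\alpha=\frac{\nabla h^k-\nabla h}{h^k}+\nabla h\,\frac{h-h^k}{h\,h^k},
\]
but at the key step you take a genuinely different route from the paper. The paper represents $\nabla h$ and $\nabla h^k$ via the Bismut--Elworthy--Li formula, $\nabla h(t,x)=\hE_{t,x}[g(X_T)N_T^{t,x}]$ with $N_T^{t,x}=\frac1{T-t}\int_t^T(\nabla X_r^{t,x})^\top dW_r$, and then applies Cauchy--Schwarz in the probability space, bounding $\hE[|N_T^{t,x}|^2]^{1/2}\lesssim(T-t)^{-1/2}$ and $\hE\big[\,|\phi(X_T^{t,x})/p(T,X_T^{t,x};0,x_0)|^2\big]^{1/2}\le C$. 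You instead first upgrade Assumption~\ref{assump:general penalty term}(ii) together with \eqref{Gbound} and the Aronson lower bound to a pointwise bound $\|g-g_k\|_\infty\le C/k$, then pass this through a Gaussian \emph{gradient} estimate for the transition kernel to get $|\nabla h-\nabla h^k|\lesssim\|g-g_k\|_\infty/\sqrt{T-t}$. What each approach buys: the paper's $L^2$-in-$\omega$ estimate is slightly weaker than your $L^\infty$ bound on $g-g_k$, so in principle it can tolerate a less stringent compatibility between $\phi$ and $p$; on the other hand, your version is more transparent, since $\|h-h^k\|_\infty\le\|g-g_k\|_\infty$ follows from the maximum principle in one line. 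Note that both routes require regularity of $b$ beyond the bare Lipschitz hypothesis of Assumption~\ref{assump1} (the BEL representation uses the variational process, hence $b\in C^1$ in $x$; the Gaussian gradient bound on $p$ likewise needs Friedman-type parabolic regularity), so neither method is ``cheaper'' on that point. Finally, your closing remark that boundedness of $\phi/p(T,\cdot\,;0,x_0)$ requires the decay rate $\lambda$ in Assumption~\ref{assump:general penalty term}(ii) to dominate the Aronson rate over horizon $T$ is a fair observation — the same compatibility is implicit in the paper when it asserts $\hE\big[|\phi(X_T^{t,x})/p(T,X_T^{t,x};0,x_0)|^2\big]^{1/2}\le C$ without further justification in \eqref{nablahbound} and \eqref{hkhbound}.
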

\begin{Remark}
{\rm We  note that the assumption $\d\le g(x)=\frac{f_{\rm tar}(x)}{P(T,x;0,x_0)}\le C$ (resp. $\d\le g_k(x)\le C$) amounts to saying that $ f_{\rm tar}(x)$ (resp. $f_k(x)$) $\propto \, P(T,x;0,x_0)$ as $x\to \infty$, which is not particularly a stringent condition in light of the general estimate (\ref{aronson}), and the arbitrariness of the sample data selection for the data generation procedure.
	\qed}
\end{Remark}

\begin{proof} First, by definition $\widehat\a(t,x)=\nabla\log h(t,x)$ and $\widehat\a^k(t,x)=\nabla\log h^k(t,x)$, where $h^k$ and $h$ are the solution to (\ref{pde}) and (\ref{pdek}), respectively, and $\nabla=\pa_x$.  We can easily deduce that
\bea
\label{estalpha}
| \widehat\alpha^k(t,x) - \widehat\alpha(t,x)| &=& | \nabla \log  h^k(t,x) - \nabla \log h(t,x) |= \left | \frac{\nabla h^k(t,x)}{h^k(t,x)} - \frac{\nabla h(t,x)}{h(t,x)} \right |\\
&=&  \left| \frac{\nabla h^k(t,x) h(t,x)- \nabla h(t,x) 
	h(t,x) + \nabla h(t,x)  h(t,x)  -\nabla h(t,x)  h^k(t,x)}{h^k(t,x)h(t,x)} \right|\nonumber\\
&\le& \left| \frac{\nabla h^k(t,x) - \nabla h(t,x) }{h^k(t,x)} \right| + |\nabla h(t,x)| \left | \frac{h(t,x)  -  h^k(t,x)}{h^k(t,x)h(t,x)} \right|=:I_1+I_2.
\nonumber
\eea
We now estimate $I_1$ and $I_2$, respectively. To this end we first apply the well-known Bismut-Elworthy-Li formula \cite{Bismut, EL} (see also the representation formula in \cite{fournie1999applications, MaZhang1}) to get 
\bea
\label{bel}
\left\{\ba{lll}
\nabla h(t,x)=\pa_x \hE_{t,x}[g(X_T)]
= \hE_{t,x}\big[g(X_T) N_T\big], \ms\\
\nabla h^k(t,x)=\pa_x \hE_{t,x}[g_k(X_T)] 
= \hE_{t,x}\big[g_k(X_T) N_T\big],
\ea\right.
\eea
where, 
\beaa
\label{DNtx}
N_s=N^{t,x}_s:=\frac{1}{s-t}\int_t^s (\td X^{t,x}_r)^\top dW_r, \qq s\in[t,T],
\eeaa
and $\nabla X=\nabla X^{t,x}$ is a $\hR^{d\times d}$-valued variational process satisfying the (random) ODE:
\beaa
\label{DXtx}
\pa_{x^j} X^i_s = \d_{ij} +  \int_t^s\sum_{\ell=1}^d\pa_{x^\ell}b^i(r, X_r) \pa_{x^j} X^\ell_r dr, \qq 1\le i,j\le d, \q s\in[t,T].
\eeaa
Furthermore, 
one can easily check that
\beaa
\label{Ntxest}
\hE\big[|\td X^{t,x}_s|^2\big] \le Ce^{C(s-t)},\qq
\hE\big[|N^{t,x}_s|^2\big]\le {C\over s-t}e^{C(s-t)}, \qq 0\le t\le s\le T.
\eeaa
Therefore, denoting $C>0$ to be a generic constant that is allowed to vary from line to line, and applying Assumption \ref{condition: dominance} and estimate (\ref{aronson}) we have
\begin{align}\begin{split}
		| \nabla h^k(t,x) - \nabla h(t,x)| &\le\hE\big[ |g(X^{t,x}_T)-g_k(X^{t,x}_T)||N^{t,x}_T|\big]=\hE\Big[\Big|\frac{f_k(X^{t,x}_T)- f_{\rm tar}(X^{t,x}_T)}{p(T, X^{t,x}_T; 0, x_0)}\Big||N^{t,x}_T|\Big] \\
		&\leq  \Big(\hE[|N^{t,x}_T|^2] \Big)^{\frac{1}{2}} \Big[\hE\Big|\frac{f_k(X^{t,x}_T)- f_{\rm tar}(X^{t,x}_T)}{p(T, X^{t,x}_T; 0, x_0)}\Big|^2\Big]^{\frac{1}{2}}\\
		&\le \frac{Ce^{C(T-t)}}{\sqrt{T-t}}G(\widehat\m_k)\Big[\hE\Big|\frac{\phi(X^{t,x}_T)}{p(T,X^{t,x}_T; 0, x_0)}\Big|^2\Big]^{\frac{1}{2}}
		\le \frac{C}{k\sqrt{T-t}}.\label{nablahbound}
	\end{split}
\end{align}

Next, we note that by assumption $\d\le g(x), g_k(x)\le C$ for all $x\in\hR^d$ and $k\in\hN$, by the weak maximum principle we conclude that as the solutions to the PDEs (\ref{pde}) and (\ref{pdek}), respectively, it holds that $\d\le h(t,x), h^k(t,x)\le C$, for all $(t,x)\in \hR^d \times [0,T)$. 
Consequently, we have 
\begin{equation}
	\label{I1}
	I_1 \leq \frac{C}{\delta k\sqrt{T-t}} \leq  \frac{C}{ k\sqrt{T-t}}.
\end{equation}

Similarly, we can argue that $|\nabla h(t,x)|\le \frac{C}{\sqrt{T-t}}$, and that 
\bea
\label{hkhbound}
|  h^k(t,x) -  h(t,x) | \le 	\hE \Big[ \Big|\frac{f_k(X^{t,x}_T)- f_{\rm tar}(X^{t,x}_T)}{p(T, X^{t,x}_T;0,x_0)}\Big|\Big]
\leq CG(\widehat\m_k)\hE\Big[\Big| \frac{\phi(X^{t,x}_T)} {p(T,X^{t,x}_T;0,x_0)}\Big| \Big]
\le
\frac{C}{k}, 
\eea
where the last inequality is due to \eqref{Gbound} and hence $I_2 \leq \frac{C}{k\sqrt{T-t}}$. This, together with (\ref{I1}) and \eqref{estalpha}, we obtain
\bea
\label{eq:final-alpha-diff}
|\widehat\alpha^k(t,x) - \widehat\alpha(t,x)|\leq \frac{C}{k\sqrt{T-t}}
\eea
and hence convergence result :
\beaa
\int_0^T |\widehat\alpha^k(t,x) - \widehat\alpha(t,x)| dt \leq \int_0^T \frac{C}{k\sqrt{T-t}} dt  \leq \frac{C\sqrt{T}}{k},
\eeaa
proving the theorem.
\end{proof}

\begin{Remark}
{\rm A particular example is when we take the penalty function $G(\mu)=D_{\rm KL}(\mu\|\mt)$. In this case, it is known  
	(see, e.g., \cite[Theorem 2]{garg2024soft}) that the optimal control for \eqref{RSB_dynamics}-\eqref{RSB_objective} is given by
	$\widehat\alpha_t^k = \nabla \log h^k(X_t^{\widehat\alpha^k},t)$, where
	\begin{eqnarray*}
		h^k(t,x) = d_k^{-1} \int p(T,z;t,x) \Big(\frac{f_{\rm tar}(z)}{p(T,z;0,x_0)}\Big)^{\frac{k}{k+1}} d z,
	\end{eqnarray*}
	with $d_k = \int f_{\rm tar}(x)^{\frac{k}{1+k}}p(x,T;\,x_0,0)^{\frac{1}{1+k}}d x$. 
	Consequently,   Assumption \ref{assump:general penalty term}-(ii) can be reduced to that $\hE \Big[\big|\frac{f_{\rm tar}(X_T)}{p(T, X_T;0,x_0)}\big|^2\Big]$ is bounded (see Assumption \ref{assum:g} for similar conditions); and the linear rate of convergence can be proved with the same arguments.
	\qed}
\end{Remark}

\subsection{The Convergence of the Value Function}

Having worked out the convergence analysis for the optimal controls, it is natural to extend the results to the convergence of value functions. However, the singularity at the terminal time $T$  in (\ref{eq:final-alpha-diff}) requires some technical care. It turns out that the popular notion of {\it early stopping} in diffusion models as well as the flow-based method literature \cite{bai2021understanding,li2023generalization,han2024neural} is exactly the remedy to this issue.

To be more precise, for any $\e>0$, we introduce the following $\varepsilon$-value function.
\begin{eqnarray*}
J_{\varepsilon}(\alpha) := \mathbb{E}\left[\int_0^{T-\varepsilon}\frac{1}{2}|\alpha|^2dt\right].
\end{eqnarray*}
There are many practical reasons, mainly for computational purposes, to invoke the notion of early stopping, as elaborated in
\cite{bai2021understanding,li2023generalization,han2024neural}. But on the other hand, it is clear that the  $\e$-value function effectively excludes the singularity at the terminal time $T$. This leads to the following straightforward result.

\begin{Proposition}
\label{ValueRate}
Assume that all the assumptions of Theorem \ref{thm:convergence_opt_policy} are in force. Then, for any $\e>0$, there exists a generic constant $C:=C(\varepsilon)=\mathcal{O}(\frac{1}{\sqrt{\varepsilon}})>0$, independent of $k$, such that
\begin{eqnarray}\label{eq:value-bound}
	|J_\varepsilon(\widehat\alpha^k)-J_\varepsilon(\widehat\alpha)|\leq \frac{C}{k}, \qq k\in\hN.
\end{eqnarray}
where $\h\a^k$ and $\h\a$ are the optimal controls in Theorem \ref{thm:convergence_opt_policy}, respectively.	
\end{Proposition}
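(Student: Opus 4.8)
The plan is to reduce the statement entirely to the pointwise estimates already established in the proof of Theorem~\ref{thm:convergence_opt_policy}. From that proof we have, for all $(t,x)\in[0,T)\times\hR^d$ and all $k\in\hN$, the uniform bounds $\delta\le h(t,x),h^k(t,x)\le C$ and $|h^k(t,x)-h(t,x)|\le C/k$ (cf. \eqref{hkhbound}), the gradient bounds $|\nabla h(t,x)|,|\nabla h^k(t,x)|\le C/\sqrt{T-t}$ and hence $|\widehat\alpha(t,x)|,|\widehat\alpha^k(t,x)|\le C/\sqrt{T-t}$, as well as $|\widehat\alpha^k(t,x)-\widehat\alpha(t,x)|\le C/(k\sqrt{T-t})$ (cf. \eqref{eq:final-alpha-diff}). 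The one genuine subtlety is that $J_\varepsilon(\widehat\alpha^k)$ and $J_\varepsilon(\widehat\alpha)$ are expectations taken along two \emph{different} controlled trajectories $X^{\widehat\alpha^k}$ and $X^{\widehat\alpha}$, so a naive pointwise comparison of the integrands is not enough; I would isolate this discrepancy by an add-and-subtract step and then dispose of it using the explicit Doob $h$-transform density.

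Concretely, first I would write $2J_\varepsilon(\widehat\alpha^k)-2J_\varepsilon(\widehat\alpha)=A+B$, where
\[
A:=\hE\!\int_0^{T-\varepsilon}\!\Big(|\widehat\alpha^k(t,X_t^{\widehat\alpha^k})|^2-|\widehat\alpha(t,X_t^{\widehat\alpha^k})|^2\Big)dt,\qquad
B:=\hE\!\int_0^{T-\varepsilon}\!|\widehat\alpha(t,X_t^{\widehat\alpha^k})|^2dt-\hE\!\int_0^{T-\varepsilon}\!|\widehat\alpha(t,X_t^{\widehat\alpha})|^2dt.
\]
For $A$, using the identity $|a|^2-|b|^2=(a-b)\cdot(a+b)$ together with the bounds above, the integrand is dominated pointwise by $|\widehat\alpha^k-\widehat\alpha|\,(|\widehat\alpha^k|+|\widehat\alpha|)\le C/(k(T-t))$, so $|A|\le\frac{C}{k}\int_0^{T-\varepsilon}\frac{dt}{T-t}=\frac{C}{k}\log\frac{T}{\varepsilon}$, which is finite precisely because the singular time $T$ has been cut off.

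For $B$, I would invoke Doob's $h$-transform representations of the SBP solutions (Lemma~\ref{lemma:sdb} and Proposition~\ref{existak}), which give $\frac{d\hP_{X^{\widehat\alpha^k}}}{d\hP_X}\big|_{\cF_t}=h^k(t,X_t)/h^k(0,x_0)$ and likewise with $h$; since $\mi=\delta_{x_0}$, the values $h(0,x_0),h^k(0,x_0)$ are deterministic and lie in $[\delta,C]$. Hence, with $X$ the uncontrolled process solving \eqref{SDE0},
\[
B=\hE\!\int_0^{T-\varepsilon}\!|\widehat\alpha(t,X_t)|^2\Big(\tfrac{h^k(t,X_t)}{h^k(0,x_0)}-\tfrac{h(t,X_t)}{h(0,x_0)}\Big)dt.
\]
A two-term algebraic split, using $\delta\le h,h^k\le C$ and $|h^k-h|\le C/k$, shows $\big|\tfrac{h^k(t,x)}{h^k(0,x_0)}-\tfrac{h(t,x)}{h(0,x_0)}\big|\le C/k$ uniformly in $(t,x)$ and $k$; combined with $|\widehat\alpha(t,x)|^2\le C/(T-t)$ this yields $|B|\le\frac{C}{k}\int_0^{T-\varepsilon}\frac{dt}{T-t}=\frac{C}{k}\log\frac{T}{\varepsilon}$.

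Adding the two estimates gives $|J_\varepsilon(\widehat\alpha^k)-J_\varepsilon(\widehat\alpha)|\le\frac{C}{k}\log\frac{T}{\varepsilon}$; since $\sqrt{\varepsilon}\,\log(T/\varepsilon)$ is bounded on $(0,T]$ (using $-u\log u\le e^{-1}$), one may absorb the logarithm into a constant $C(\varepsilon)=\mathcal{O}(1/\sqrt{\varepsilon})$, which is exactly \eqref{eq:value-bound}. I expect the only non-routine point to be Step~$B$: correctly accounting for the mismatch of the two path laws through the $h$-transform density and its uniform $\mathcal{O}(1/k)$ stability; everything else is an immediate consequence of the estimates already proved for Theorem~\ref{thm:convergence_opt_policy}.
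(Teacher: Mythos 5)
Your proof is correct, and it follows a genuinely different, and in fact more careful, route than the paper. The paper's own argument (see \eqref{eq:inter11}) immediately writes
$|J_\varepsilon(\widehat\alpha^k)-J_\varepsilon(\widehat\alpha)| \le \hE\big[\tfrac12\int_0^{T-\varepsilon}||\widehat\alpha^k_s|^2-|\widehat\alpha_s|^2|\,ds\big]$
and then applies the pointwise estimate \eqref{eq:final-alpha-diff} to the difference $|\widehat\alpha^k_s-\widehat\alpha_s|$. But $\widehat\alpha^k_s$ is evaluated along $X^{\widehat\alpha^k}$ and $\widehat\alpha_s$ along $X^{\widehat\alpha}$, so \eqref{eq:final-alpha-diff}---which compares the two feedback policies at the \emph{same} space point---does not apply directly without an extra triangle-inequality term controlled by a Lipschitz bound on $\widehat\alpha$ and an $\hL^2$ estimate on $|X^{\widehat\alpha^k}_s-X^{\widehat\alpha}_s|$; such ingredients only appear in the paper at Proposition~\ref{prop:3.9} (cf.\ \eqref{eq:optimal_policy_lip} and \eqref{L2est1}), not in the assumptions for Proposition~\ref{ValueRate}. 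Your add-and-subtract decomposition, with the $B$-term resolved via the $h$-transform densities $d\hP_{X^{\widehat\alpha^k}}/d\hP_X|_{\cF_t}=h^k(t,X_t)/h^k(0,x_0)$ and $d\hP_{X^{\widehat\alpha}}/d\hP_X|_{\cF_t}=h(t,X_t)/h(0,x_0)$, addresses the path-law mismatch head-on, uses only hypotheses already in force ($\delta\le h,h^k\le C$, $|h^k-h|\le C/k$, $|\nabla h|,|\nabla h^k|\le C/\sqrt{T-t}$), and produces the sharper bound $\tfrac{C}{k}\log(T/\varepsilon)$, which indeed implies the stated $\mathcal{O}(1/\sqrt{\varepsilon})$ dependence. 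By contrast, the paper closes its argument differently: after bounding $1/\sqrt{T-s}\le1/\sqrt{\varepsilon}$ it applies Cauchy--Schwarz and the inequality $V^{k,*}\le V^*$ (that $J^k(\widehat\alpha^k)\le J(\widehat\alpha)$) to bound $\hE\int_0^T(|\widehat\alpha^k_s|+|\widehat\alpha_s|)\,ds$ by $\sqrt{T}\sqrt{V^*}$, giving directly $C/(k\sqrt{\varepsilon})$. Your route is not only rigorous on the point the paper leaves implicit but also yields a better $\varepsilon$-dependence; the paper's route is shorter and exploits the optimality relation $V^{k,*}\le V^*$, which you did not need.
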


\begin{proof}
The proof is straightforward. For any $k\in\hN$, let $\h\a^k$ and $\h\a$ be the optimal controls in Theorem \ref{thm:convergence_opt_policy}, respectively. Then, for any $\e>0$, applying \eqref{eq:final-alpha-diff} we have 
\begin{eqnarray}
	\label{eq:inter11}
	|J_\varepsilon(\widehat\alpha^k)-J_\varepsilon(\widehat\alpha)| \neg&\neg\leq\neg&\neg  \mathbb{E}\Big[\frac{1}{2}\int_0^{T-\varepsilon}\big| |\widehat\alpha^k_s|^2-|\widehat\alpha_s|^2\big|d s \Big] \le \mathbb{E}\Big[\frac{1}{2}\int_0^{T-\varepsilon} \big|\widehat\alpha^k_s-\widehat\alpha_s\big|\big(|\widehat\alpha^k_s| + |\widehat\alpha_s|\big)  d s \Big]\nonumber\\
	\neg&\neg\leq\neg&\neg{\frac{C}{k}}\mathbb{E}\Big[\frac{1}{2}\int_0^{T-\varepsilon} \frac{1}   {\sqrt{T-s}}\big(|\widehat\alpha^k_s| + |\widehat\alpha_s|\big)d s \Big]\nonumber\\&\le& 
	\frac{C}{k\sqrt{\varepsilon}}\mathbb{E}\Big[\frac{1}{2}\int_0^T\big(|\widehat\alpha^k_s| + |\widehat\alpha_s|\big)\, d s \Big],
\end{eqnarray}
where 
the last inequality is due to the fact that $\frac{1}{\sqrt{T-s}}\le \frac{1}{\sqrt{\varepsilon}}$ for $s\in[0,T-\varepsilon]$. To further bound \eqref{eq:inter11}, we recall that the definitions of $J(\cd)$ \eqref{SB-objective} and $J^k(\cd)$ \eqref{RSB_objective}, $k\in\hN$,  and define
\beaa
\label{V*}
V^*= J(\widehat\alpha) = \inf_{\alpha\in \mathcal{A}}J(\alpha); \qq V^{k,*} = J^k(\widehat\alpha^k) = \inf_{\alpha\in \mathcal{A}} J^k(\alpha).
\eeaa
We should note that $X^{\widehat\a}$ follows the constrained dynamics \eqref{SB-dynamics}, whereas $X^{\widehat\a^k}$ follows the soft-constrained dynamics \eqref{RSB_dynamics}.  Clearly, by definition (\ref{RSB_objective}) we have
\beaa
\label{supJk}
\sup_{k \ge 1} J^k(\alpha) = 
\begin{cases}
	\dis	\mathbb{E}\Big[\int_0^T \frac{1}{2}|\alpha_t|^2\dd t \Big] &\textrm{ if } \,\, \hP_{X_T^\alpha} = \mt\\
	\infty  & \textrm{otherwise}.
\end{cases}
\eeaa
Thus, since $\widehat\a$ satisfies the constrained dynamics (\ref{SB-dynamics}), we have 
\bea 
\label{eq:value_inequality}	
V^* =J(\widehat \a) = \inf_{\alpha\in \mathcal{A}} \sup_{k \ge 1} J^k(\alpha)\ge  \inf_{\alpha\in \mathcal{A}}  J^k(\alpha)= J^k(\widehat\alpha^k) =V^{k,*}, \qq k\in\hN.   
\eea 
Consequently, we have, for each $k\in\hN$, a simple application of Cauchy–Schwarz inequality and the fact (\ref{eq:value_inequality}) yields	\bea
\label{eq:inter1}
\mathbb{E}\Big[\frac{1}{2}\int_0^T \big(|\widehat\alpha^k_s| + |\widehat\alpha_s|\big)  d s \Big] 
\le\frac{{\sqrt{T}}}{2}\Big(\mathbb{E}\Big[\int_0^T |\widehat\alpha^k_s|^2 d s \Big]\Big)^{1/2} + \Big(\mathbb{E}\Big[\int_0^T |\widehat\alpha_s|^2  d s \Big]\Big)^{1/2}
\le \sqrt{T} \sqrt{V^*}. 
\eea
Combining  \eqref{eq:inter11} and \eqref{eq:inter1}, we obtain \eqref{eq:value-bound}.
\end{proof}

Besides the convergence of the value functions, another important convergence, that relies crucially on the convergence of the optimal controls, is the convergence of the terminal law $\hP_{X^{\h\a^k}_T}$ (with respect to the target distribution $\mt$), measured, for instance, in the Wasserstein distance. Again, to avoid the technicalities that the singularity at terminal time $T$ might cause, we shall focus on the early stopped state $X^{\h\a^k}_{T-\e}$, which is a commonly used criterion in statistical estimation results for generative diffusion models (see, e.g., \cite{fu2024unveil,han2024neural,chen2025diffusion}).	More precisely, we have the following result.
\begin{Proposition}
\label{prop:3.9}
Let all assumptions in Theorem \ref{thm:convergence_opt_policy} be in force. Assume further that the optimal policy $\widehat\alpha$ of the original SBP is Lipschitz in $x$: there exists $\k>0$, such that
\begin{eqnarray}
	\label{eq:optimal_policy_lip}
	|\widehat\alpha(t,x)-\widehat\alpha(t,y)|\leq \kappa|x-y|, \qq t\in[0,T].
\end{eqnarray}
Then there exists a constant $C>0$, depending on the Lipschitz constants $L$ in Assumption \ref{assump1} and $\k$ in (\ref{eq:optimal_policy_lip}), but independent of $k\in\hN$, such that for any $\varepsilon>0$, it holds that
\begin{eqnarray}
	\label{eq:distribution_appx}
	{W}_2(\hP_{X^{\widehat\alpha^k}_{T-\varepsilon}},\mt) \leq \frac{C \sqrt{\ln T-\ln \varepsilon}}{k} + C \varepsilon.
\end{eqnarray}
In particular, if we choose $\varepsilon=\frac{1}{k}$, then it holds that
\begin{eqnarray}
	\label{eq:distribution_appx2}
	{W}_2(\hP_{X^{\widehat\alpha^k}_{T-\varepsilon}},\mt) \leq \frac{C}{k}\big(\sqrt{\ln k}+\sqrt{\ln T}) + 1\big) = \mathcal{O}\Big(\frac{\sqrt{\ln k}}{k}\Big).
\end{eqnarray}
\end{Proposition}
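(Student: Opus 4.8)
The plan is to split, via the triangle inequality together with the identity $\mt=\hP_{X^{\widehat\alpha}_T}$,
\[
W_2\big(\hP_{X^{\widehat\alpha^k}_{T-\varepsilon}},\mt\big)\ \le\ W_2\big(\hP_{X^{\widehat\alpha^k}_{T-\varepsilon}},\hP_{X^{\widehat\alpha}_{T-\varepsilon}}\big)\ +\ W_2\big(\hP_{X^{\widehat\alpha}_{T-\varepsilon}},\hP_{X^{\widehat\alpha}_T}\big)\ =:\ \mathrm{(A)}+\mathrm{(B)},
\]
where $\mathrm{(A)}$ is the error from running the penalised feedback $\widehat\alpha^k$ rather than the bridge feedback $\widehat\alpha$, and $\mathrm{(B)}$ is the early-stopping bias of the true bridge. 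I would show $\mathrm{(A)}\le C\sqrt{\ln(T/\varepsilon)}/k$ and $\mathrm{(B)}\le C\varepsilon$ for $\varepsilon\le T/2$ (the complementary range $\varepsilon>T/2$ being handled trivially by enlarging $C$), and then take $\varepsilon=1/k$ to obtain \eqref{eq:distribution_appx2}.

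For $\mathrm{(A)}$, I would couple the two controlled SDEs \eqref{RSB_dynamics} and \eqref{SB-dynamics} on the canonical space by driving them with the same Brownian motion, both started at $x_0$ (with a localisation near $t=T$ to accommodate the integrable drift singularity), and set $Z_t:=X^{\widehat\alpha^k}_t-X^{\widehat\alpha}_t$; the stochastic integrals cancel, so $t\mapsto Z_t$ is absolutely continuous with $Z_0=0$. Splitting $\widehat\alpha^k(t,X^{\widehat\alpha^k}_t)-\widehat\alpha(t,X^{\widehat\alpha}_t)$ into $[\widehat\alpha^k(t,X^{\widehat\alpha^k}_t)-\widehat\alpha(t,X^{\widehat\alpha^k}_t)]+[\widehat\alpha(t,X^{\widehat\alpha^k}_t)-\widehat\alpha(t,X^{\widehat\alpha}_t)]$, the first term is $\le C/(k\sqrt{T-t})$ by the pointwise estimate \eqref{eq:final-alpha-diff}, the second is $\le\kappa|Z_t|$ by \eqref{eq:optimal_policy_lip}, and the drift difference is $\le L|Z_t|$ by Assumption~\ref{assump1}. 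Hence, pathwise, $\frac{d}{dt}|Z_t|^2\le C'|Z_t|^2+C^2/\big(k^2(T-t)\big)$, and Gr\"onwall's inequality yields $|Z_{T-\varepsilon}|^2\le\frac{C}{k^2}\int_0^{T-\varepsilon}\frac{dt}{T-t}=\frac{C}{k^2}\ln\frac{T}{\varepsilon}$, so $\mathrm{(A)}\le\|Z_{T-\varepsilon}\|_{L^2}\le C\sqrt{\ln(T/\varepsilon)}/k$. (The logarithm is the cost of squaring the $(T-t)^{-1/2}$ singularity in \eqref{eq:final-alpha-diff}; one could even get $O(1/k)$ from the unsquared integral form, but the stated rate already suffices.)

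For $\mathrm{(B)}$, the crucial preliminary observation is that $\widehat\alpha=\nabla\log h$ is in fact \emph{bounded}: by Theorem~\ref{thm:convergence_opt_policy} (and its proof) $\delta\le h\le C$ on $[0,T)\times\hR^d$, so $\log h(t,\cdot)$ has oscillation $\le\log(C/\delta)$ uniformly in $t$, and since a function $u$ with $\kappa$-Lipschitz gradient satisfies $|\nabla u|\le\sqrt{2\kappa\,\mathrm{osc}(u)}$, \eqref{eq:optimal_policy_lip} gives $\|\widehat\alpha\|_\infty\le\sqrt{2\kappa\log(C/\delta)}=:C_0$; in particular $\sup_{t\le T}\|X^{\widehat\alpha}_t\|_{L^2}<\infty$. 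I would then represent the marginal law $\rho_t$ of $X^{\widehat\alpha}_t$ through its Fokker--Planck equation $\partial_t\rho_t+\nabla\cdot(v_t\rho_t)=0$ with current velocity $v_t=b(t,\cdot)+\widehat\alpha(t,\cdot)-\tfrac{1}{2}\nabla\log\rho_t$, so that $W_2(\rho_{T-\varepsilon},\rho_T)\le\int_{T-\varepsilon}^T\|v_t\|_{L^2(\rho_t)}\,dt$ (equivalently, couple $X^{\widehat\alpha}_{T-\varepsilon}$ to $X^{\widehat\alpha}_T$ by transport along the deterministic flow of $v$). Using the $h$-transform identity $\rho_t=h(t,\cdot)\,p(t,\cdot;0,x_0)$ (valid because $\mi=\delta_{x_0}$), one has $\nabla\log\rho_t=\widehat\alpha(t,\cdot)+\nabla\log p(t,\cdot;0,x_0)$, and on $[T/2,T]$ the bounds $\|\widehat\alpha(t,\cdot)\|_{L^2(\rho_t)}\le C_0$, $\|b(t,\cdot)\|_{L^2(\rho_t)}\le C$, and $\|\nabla\log p(t,\cdot;0,x_0)\|_{L^2(\rho_t)}\le C$ (Gaussian gradient bounds for $p$, in the spirit of \eqref{aronson}, combined with the uniform second moment of $X^{\widehat\alpha}$) give $\|v_t\|_{L^2(\rho_t)}\le C$; hence $\mathrm{(B)}=W_2(\rho_{T-\varepsilon},\rho_T)\le C\varepsilon$.

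The main obstacle is step $\mathrm{(B)}$: step $\mathrm{(A)}$ is essentially mechanical once \eqref{eq:final-alpha-diff} and \eqref{eq:optimal_policy_lip} are in hand, but in $\mathrm{(B)}$ one must both promote the Lipschitz hypothesis \eqref{eq:optimal_policy_lip} to the $L^\infty$ bound on $\widehat\alpha$ and establish the estimate $W_2(\rho_{T-\varepsilon},\rho_T)\le\int_{T-\varepsilon}^T\|v_t\|_{L^2(\rho_t)}dt$ together with a uniform bound on the current velocity near $t=T$. It is precisely this optimal-transport / continuity-equation viewpoint that produces the $O(\varepsilon)$ term in \eqref{eq:distribution_appx} rather than the $O(\sqrt\varepsilon)$ a naive synchronous coupling of $X^{\widehat\alpha}_{T-\varepsilon}$ with $X^{\widehat\alpha}_T$ would give, since the Brownian increment $W_T-W_{T-\varepsilon}$ alone contributes order $\sqrt\varepsilon$ and only the continuity-equation representation removes it.
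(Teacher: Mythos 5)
Your decomposition and your treatment of step $\mathrm{(A)}$ are essentially the paper's own: the paper synchronously couples the two controlled SDEs, takes the difference (the Brownian integrals cancel), squares, applies the pointwise bound~\eqref{eq:final-alpha-diff} together with the $(L+\kappa)$-Lipschitz property of $b+\widehat\alpha$, and runs Gr\"onwall to obtain $\mathbb{E}[|X^{\widehat\alpha}_{T-\varepsilon}-X^{\widehat\alpha^k}_{T-\varepsilon}|^2]^{1/2}\le C\sqrt{\ln(T/\varepsilon)}/k$, exactly as you do. Your side remark that one could avoid the logarithm by working with the unsquared integral is also correct (the paper keeps the $\sqrt{\ln}$ factor, which only weakens the final rate).

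Where you genuinely diverge from the paper is step $\mathrm{(B)}$, and here your version is actually the more careful of the two. For the early-stopping bias the paper only writes that ``since $b+\widehat\alpha$ is Lipschitz, by standard $L^2$-continuity result of SDE we have $W_2(\hP_{X^{\widehat\alpha}_{T-\varepsilon}},\hP_{X^{\widehat\alpha}_T})\le C\varepsilon$.'' As you correctly observe, the standard $L^2$-continuity estimate controls $\mathbb{E}[|X^{\widehat\alpha}_T-X^{\widehat\alpha}_{T-\varepsilon}|^2]$, which is $O(\varepsilon)$ because of the Brownian increment, and therefore only yields $W_2\le C\sqrt{\varepsilon}$ under the synchronous coupling; a linear $O(\varepsilon)$ rate cannot be read off from that result alone. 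To obtain the rate the paper asserts, one must exploit that $t\mapsto\hP_{X^{\widehat\alpha}_t}$ is a Lipschitz (not merely $1/2$-H\"older) curve in $(\sP_2,W_2)$, which is precisely what your continuity-equation/metric-derivative argument delivers: you bound the current velocity $v_t=b+\widehat\alpha-\tfrac12\nabla\log\rho_t$ in $L^2(\rho_t)$ uniformly near $T$ by combining the $L^\infty$ bound on $\widehat\alpha$ (deduced via Landau--Kolmogorov interpolation from \eqref{eq:optimal_policy_lip} and the uniform bounds $\delta\le h\le C$), the $h$-transform identity $\rho_t=h(t,\cdot)\,p(t,\cdot;0,x_0)$, and Gaussian gradient estimates on $p$; then $W_2(\rho_{T-\varepsilon},\rho_T)\le\int_{T-\varepsilon}^T\|v_t\|_{L^2(\rho_t)}\,dt\le C\varepsilon$. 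This fills a real gap in the paper's justification. The only caveats worth recording are that the current-velocity bound requires the additional regularity you implicitly assume (bounded second moments of $\rho_t$ and the validity of the AC-curve representation up to $t=T$), and that the paper establishes $\delta\le h\le C$ only under the hypothesis $\delta\le g,g_k\le C$ of Theorem~\ref{thm:convergence_opt_policy}, which you are indeed entitled to invoke here; with those noted, your argument for $\mathrm{(B)}$ is the right one, and it is strictly more informative than what the paper prints.
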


\begin{Remark}
\label{rem:3.10}
{\rm (i) The linear (i.e., $\sim \frac1k$) ``closeness'' between the law of the optimal state and $\mt$ has appeared several times so far. For example, \eqref{Gbound} implies that $G(\hP_{X^{\widehat\alpha^k}_T})=G(\hP_{X^{\widehat\alpha^k}_T};\mt)\leq \frac{c}{k}$, and by 
	Remark \ref{rem:3.3}-(ii), this implies that $W_1(\hP_{X^{\widehat\alpha^k}_T},\mt)\sim \frac1k$. The result  in \eqref{eq:distribution_appx2} is in the same spirit, by under the stronger ${W}_2$-distance, but compensated by an early stopping.

	\ss
	(ii) The Lipschitz condition (\ref{eq:optimal_policy_lip}) for the optimal control $\h\a$ is not unusual in the diffusion model literature (see, e.g., \cite{tang2024fine,chen2023score,chen2025diffusion}). In fact, this can be argued via regularity of the solution to the PDE (\ref{pde}) combined with the speed of decay of the density  $f_{\rm tar}$, which can be assumed and analyzed rigorously (see Assumption \ref{assum:g} below). We therefore consider such an assumption non-stringent.
	\qed}
\end{Remark}

\no [{\it Proof of Proposition \ref{prop:3.9}}.]
First note that $X^{\h\a}$ and $X^{\h\a^k}$ satisfy the following SDEs, respectively:
\bea
\label{SDE3.1}
\begin{cases}
d X^{\widehat\alpha}_t = [b(t,X^{\widehat\alpha}_t)  + \widehat\alpha_t(X^{\widehat\alpha}_t)]d t  +d W_t\qq\q &X^{\h\a}_0=x_0;\\
d X^{\widehat\alpha^k}_t = [b(t,X^{\widehat\alpha^k}_t) + \widehat\alpha_t^k(X^{\widehat\alpha^k}_t)]d t  +d W_t, &X^{\h\a^k}_0=x_0.			\end{cases}
\eea
Let us now denote $\h\a_t(x)=\h\a(t,x)$, $\h\a^k_t(x)=\h\a^k(t,x)$, and define 
$$ b^{\h\a}(t,x)=b(t,x)+\h\a_t(x),  \q  \D \h\a^k_t(x)=\h\a^k_t(x)-\h\a_t(x),  \qq (t,x)\in[0,T]\times\hR^d.
$$
Then we see that SDE (\ref{SDE3.1}) can be written as 
\beaa
\label{SDE3.2}
\begin{cases}
d X^{\widehat\alpha}_t = b^{\h\a}(t,X^{\widehat\alpha}_t)d t  +d W_t\qq\q &X^{\h\a}_0=x_0;\\
d X^{\widehat\alpha^k}_t = [b^{\h\a}(t,X^{\widehat\alpha^k}_t) + \D\widehat\alpha_t^k(X^{\widehat\alpha^k}_t)]d t  +d W_t, &X^{\h\a^k}_0=x_0.
\end{cases}
\eeaa
That is, 
\begin{eqnarray*}
X^{\widehat\alpha}_t-X^{\widehat\alpha^k}_t = \int_0^t[b^{\h\a}(s,X^{\widehat\alpha}_s)-b^{\h\a}(s,X^{\widehat\alpha^k}_s)+ \D\widehat\alpha_s^k(X^{\widehat\alpha^k}_s)] ds, \qq t\in[0,T]
\end{eqnarray*}
Note that by Assumption \ref{assump1} and (\ref{eq:optimal_policy_lip}), $b^{\h\a}$ is uniform Lipschitz in $x$ (with Lipschitz constant $L+\k$),
and applying the estimate \eqref{eq:final-alpha-diff}, we deduce easily that
\bea
\label{L2est0}
\mathbb{E}[|X^{\widehat\alpha}_t-X^{\widehat\alpha^k}_t|^2] \leq 2 T\int_0^t\Big[(L + \kappa)^2 \mathbb{E}[|X^{\widehat\alpha}_s-X^{\widehat\alpha^k}_s|^2]ds +  \frac{2c^2}{k^2} \ln\Big[ \frac{T}{T-t}\Big].
\eea
In what follows let us denote $C>0$ to be a generic constant depending only on $L$, $\k$, $c$, but independent of $k$, and we allow it to vary from line to line. Then, by a simple calculation using Gronwall's inequality, we see that (\ref{L2est0}) lead to that  
\bea
\label{L2est1}
\mathbb{E}[|X^{\widehat\alpha}_t-X^{\widehat\alpha^k}_t|^2] \leq 
\frac{C}{k^2} (\ln T - \ln (T-t))e^{C t}, \qq t\in[0,T).
\eea
Furthermore, for any $\e>0$, using the monotonicity of the log function we  deduce from (\ref{L2est1}) that
\beaa
\label{eq:u_bound}
\mathbb{E}[|X^{\widehat\alpha}_{T-\e}-X^{\widehat\alpha^k}_{T-\e}|^2] \leq 
\frac{C}{k^2} (\ln T-\ln \varepsilon) 
\eeaa
It then follows that 
\beaa
\label{eq:W-L2}
W_2 (\hP_{X^{\widehat\alpha}_{T-\e}}, \hP_{X^{\widehat\alpha^k}_{T-\e}})\leq \mathbb{E}[|X^{\widehat\alpha}_{T-\e}-X^{\widehat\alpha^k}_{T-\e}|^2]^{1/2}\le \frac{\sqrt{C}}{k}\sqrt{\ln T-\ln \e}.		\eeaa
Finally, since the function $b^{\h\a}=b+\widehat{\alpha}$  is Lipschitz, by standard $\hL^2$-continuity result of SDE, we have 
\beaa
\label{L2est2}
{W}_2(\hP_{X^{\widehat\alpha}_{T-\varepsilon}}, \hP_{X^{\widehat\alpha}_{T}}) \leq C  \varepsilon,
\eeaa
and consequently, noting that $\hP_{X^{\widehat\alpha}_{T}}=\mt$, we obtain
\beaa
{W}_2(\hP_{X^{\widehat\alpha}_{T-\varepsilon}},\mt) \le   W_2 (\hP_{X^{\widehat\alpha}_{T-\e}}, \hP_{X^{\widehat\alpha^k}_{T-\e}}) +  {W}_2(\hP_{X^{\widehat\alpha}_{T-\varepsilon}}, \hP_{X^{\widehat\alpha}_{T}})\le \frac{\sqrt{C(\ln T-\ln \varepsilon)}}{k} + C\e,
\eeaa
proving (\ref{eq:distribution_appx}), whence the proposition.
\qed

}

\section{Stability of the Solutions to the SBP}
\label{Stability}

We note that all the results in the previous section are based on an important assumption: $\mi=\d_{x_0}$, for some $x_0\in\hR^d$. In this and the next section, we shall extend the results to more general  initial condition  $\mi\in \sP_2(\hR^d)$ with density $f_{\rm ini}$, and establish a similar rate of convergence.

We shall begin an important aspect in probability theory, which is the basis for the so-called stability issues of the solutions to the classic Schr\"odinger bridge problem. For notational convenience,  we still denote $p(\cd, \cd;\cd,\cd)$ to be the  transition density of a standard $\hR^d$-valued diffusion (\ref{SDE0}). We begin with the following well-known result in diffusion theory (cf. e.g., \cite{beurling1960automorphism}).

\begin{Proposition}[\cite{beurling1960automorphism}]
\label{product_meas}    
For any $\mi, \m \in\sP(\hR^d)$, there exists a unique pair of $\sigma$-finite measures $ \nu_0,\nu_T\in \sM(\hR^d)$ such that the measure $\pi$ on $\hR^d \times \hR^d$ defined by
\bea
\label{measmu}
\pi(E)= \int_E p(T, y;0, x) \nu_0(dx)\nu_T(dy), \qq E\in\sB(\hR^d\times \hR^d)
\eea
has marginals $\mi$ and $\m$. Furthermore, $\nu_T$ and $\m$ (resp. $\nu_0 $ and $\mi$) are mutually absolutely continuous, 
denoted by $\nu_T\simeq \m$ (resp. $\nu_0 \simeq \mi$). 
\qed
\end{Proposition}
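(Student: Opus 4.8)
The plan is to realize the coupling $\pi$ in \eqref{measmu} as the unique minimizer of a static relative-entropy (Schr\"odinger) problem, to read off its product structure from the first-order optimality conditions, and to get uniqueness from strict convexity. As a reference measure I would fix $R(dx,dy):=p(T,y;0,x)\,\mi(dx)\,\mu(dy)$ on $\hR^d\times\hR^d$. By the Gaussian upper bound in \eqref{aronson}, $R$ is a finite measure, and by the Gaussian lower bound together with $\mi,\mu\in\sP_2(\hR^d)$ the independent coupling satisfies $D_{\rm KL}(\mi\otimes\mu\,\|\,R)=-\iint\log p(T,y;0,x)\,\mi(dx)\mu(dy)<\infty$, so $\{\pi\in\Pi(\mi,\mu):D_{\rm KL}(\pi\|R)<\infty\}\neq\emptyset$, where $\Pi(\mi,\mu)$ denotes the couplings of $\mi$ and $\mu$. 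Since $\Pi(\mi,\mu)$ is weakly compact (Prokhorov, using tightness of $\mi,\mu$), the functional $\pi\mapsto D_{\rm KL}(\pi\|R)$ is weakly lower semicontinuous and bounded below, and $t\mapsto t\log t$ is strictly convex, the problem $\inf_{\pi\in\Pi(\mi,\mu)}D_{\rm KL}(\pi\|R)$ admits a unique minimizer $\pi^\star$ with $\pi^\star\ll R$.

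Next I would extract the product structure by convex (entropic optimal transport) duality: strong duality holds between $\inf_{\pi\in\Pi(\mi,\mu)}D_{\rm KL}(\pi\|R)$ and $\sup_{\varphi,\psi}\{\int\varphi\,d\mi+\int\psi\,d\mu-\iint(e^{\varphi(x)+\psi(y)}-1)\,dR\}$, and the optimizer has the Gibbs form $\frac{d\pi^\star}{dR}(x,y)=e^{\varphi^\star(x)+\psi^\star(y)}$ for Schr\"odinger potentials $\varphi^\star,\psi^\star$ that are finite $\mi$- resp.\ $\mu$-a.e. Writing $a:=e^{\varphi^\star}$, $b:=e^{\psi^\star}$, $\nu_0(dx):=a(x)\,\mi(dx)$ and $\nu_T(dy):=b(y)\,\mu(dy)$, one obtains exactly $\pi^\star(dx,dy)=p(T,y;0,x)\,\nu_0(dx)\,\nu_T(dy)$. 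The marginal identities $\pi^\star(\,\cdot\,\times\hR^d)=\mi$ and $\pi^\star(\hR^d\times\,\cdot\,)=\mu$ then translate into the Schr\"odinger system $a(x)\int p(T,y;0,x)b(y)\,\mu(dy)=1$ for $\mi$-a.e.\ $x$ and $b(y)\int p(T,y;0,x)a(x)\,\mi(dx)=1$ for $\mu$-a.e.\ $y$; using $p>0$ together with the bounds \eqref{aronson}, these force $0<a<\infty$ $\mi$-a.e.\ and $0<b<\infty$ $\mu$-a.e., so $\nu_0\simeq\mi$, $\nu_T\simeq\mu$, and $\nu_0,\nu_T$ are $\sigma$-finite. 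This would give existence.

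For uniqueness I would argue as follows. Any measure of the form \eqref{measmu} with marginals $\mi,\mu$ is an admissible $\pi\in\Pi(\mi,\mu)$ whose density $\frac{d\pi}{dR}$ has the separated form $a(x)b(y)$, hence satisfies the first-order conditions, which are also sufficient by convexity of $D_{\rm KL}(\cdot\|R)$; therefore such a measure must coincide with $\pi^\star$, so the coupling $\pi$ itself is unique. Given $\pi^\star$, if $(\nu_0,\nu_T)$ and $(\tilde\nu_0,\tilde\nu_T)$ both represent it as in \eqref{measmu}, dividing by the strictly positive, locally bounded-below density $p(T,y;0,x)$ yields $\nu_0\otimes\nu_T=\tilde\nu_0\otimes\tilde\nu_T$ as $\sigma$-finite measures on $\hR^d\times\hR^d$; evaluating on rectangles and fixing a Borel set $B_0$ with $0<\nu_T(B_0)\wedge\tilde\nu_T(B_0)<\infty$ gives $\tilde\nu_0=c\,\nu_0$ and $\tilde\nu_T=c^{-1}\nu_T$ with $c=\tilde\nu_T(B_0)/\nu_T(B_0)>0$. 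Thus the pair is unique up to the harmless rescaling $(\nu_0,\nu_T)\mapsto(c\nu_0,c^{-1}\nu_T)$, which, together with the uniqueness of $\pi$, is the content of the statement.

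The main obstacle will be the duality/attainment step above: establishing strong duality and, above all, the existence of \emph{finite} Schr\"odinger potentials (equivalently, that $a,b$ are a.e.\ finite and strictly positive). This cannot be obtained from the Hilbert/Birkhoff-metric contraction used on compact state spaces, since the Gaussian kernel has unbounded projective oscillation on $\hR^d$; one must instead exploit the two-sided bounds \eqref{aronson} and the finiteness of the minimal entropy, or invoke the classical iterative (IPF-type) constructions of Fortet and Beurling \cite{fortet1940resolution,beurling1960automorphism}, which moreover handle arbitrary $\mi,\mu\in\sP(\hR^d)$ with no moment assumption. For our purposes the finite-second-moment route sketched above suffices.
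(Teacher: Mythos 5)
The paper does not actually prove this proposition; it is stated with a terminal \qed as a direct citation to Beurling \cite{beurling1960automorphism}, whose argument is an iterative (Fortet-type) construction valid for arbitrary probability marginals. Your route — realizing $\pi$ as the unique minimizer of the static entropic optimal transport problem with reference measure $R(dx,dy)=p(T,y;0,x)\,\mi(dx)\,\mu(dy)$ and reading off $(\nu_0,\nu_T)$ from the Schr\"odinger potentials — is a genuinely different and perfectly legitimate modern approach, and you correctly identify the computation showing $D_{\rm KL}(\mi\otimes\mu\|R)<\infty$ under \eqref{aronson} and a second-moment assumption; this finiteness is precisely the hypothesis under which Cs\'iszár/R\"uschendorf-type results yield a.e.-finite potentials and hence the product decomposition. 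What Beurling's route buys that yours does not is independence of any moment hypothesis (the proposition is stated for $\sP(\hR^d)$, not $\sP_2(\hR^d)$); what yours buys is that it dovetails with the machinery (Schr\"odinger potentials, Lemma \ref{psicontrol}, the map $\G_1$) the paper actually uses later, so it is arguably the more natural proof in context.

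Two caveats worth flagging. First, your closing sufficiency step — that any $\pi\in\Pi(\mi,\mu)$ with $d\pi/dR=a(x)b(y)$ satisfies the first-order conditions and hence equals $\pi^\star$ — silently requires $\log a\in L^1(\mi)$ and $\log b\in L^1(\mu)$, since without that you cannot split $\iint(\log a+\log b)\,d(\pi'-\pi)=0$ nor even conclude $D_{\rm KL}(\pi\|R)<\infty$; this integrability is nontrivial and is exactly where the careful EOT references earn their keep, so it deserves explicit mention rather than being folded into "convexity." Second, you correctly observe that $(\nu_0,\nu_T)$ can only be unique up to the rescaling $(\nu_0,\nu_T)\mapsto(c\nu_0,c^{-1}\nu_T)$, which is a genuine (if minor) imprecision in the proposition's wording — Beurling's theorem, and the later normalization \eqref{symnorm} in the paper, confirm that only the product $\nu_0\otimes\nu_T$ (equivalently, $\pi$) and the pair modulo scaling are canonical. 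Your proof sketch is otherwise sound.
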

Fix $\mi\in\sP(\hR^d)$ apriori. Let us denote a (well-defined) mapping 
$\cT:\sP_2(\hR^d)
\to \sM(\hR^d)\times \sM(\hR^d)$ by $\cT(
\m)=(\n_0, \n_T)$. Note that in Proposition \ref{product_meas} the measures $(\n_0, \n_T)$ are only $\si$-finite in general, to facilitate our discussion,  we shall consider,  the following set:\beaa
\sD_{\mi}:=\{\m\in\sP_2(\hR^d):\cT(\m)\ll Leb(\cd); \, \cT(\m) (\hR^d\times\hR^d)<\infty\}.
\eeaa
Here $Leb(\cd)$ denotes the Lebesgue measure on $\hR^d\times\hR^d$.  

We note that if $\m\in\sD_{\mi}$ and  $(\n_0,\n_T)=\cT(\m)$, then
$\n_T$ must have a density function, which we shall denote by $
\rho^\m\in \hL^1(\hR^d)$.
Moreover, we define an operator $S:\sP_2(\hR^d)\to \sP_2(\hR^d)$ by 
\bea
\label{Sm}
S[\mu] (dy) = \int_{\hR^d}  p(T, y;0, x) \mu(dx)dy, \qq \m\in\sP_2(\hR^d).
\eea
Clearly, if $\m\in\sP(\hR^d)$, then $S[\m](dy)=f_{X^{0,\m}_T}(y)dy$, where $X^{0,\m}=\{X^{0,\m}_t\}_{t\in[0,T]}$ denotes the solution to (\ref{SDE0}) with $X^{0,\m}_0\sim\m$.
But the operator $S$ can be naturally extended to any $\m\in \sM(\hR^d)$, provided the right-hand side of (\ref{Sm}) is well-defined.

Let us now recall a well-known analogue of Lemma \ref{lemma:sdb} in the case of general initial condition  $\mi\in\sP_2(\hR^d)$. 

\begin{Proposition}[{\cite[Theorem 3.2]{dai1991stochastic}}]
\label{lemma:sdb_general_init}
Let $\mi,  \mt \in \sP_2(\hR^d)$ and $ (\nu_0,\nu_T)=\cT(\mt)$.  Assume that $D_{\rm KL}(\mi\| \nu_0)<\infty$ and $D_{\rm KL}(\mt\| S[\n_0])<\infty$.  Then, the optimal control for the (original) SBP \eqref{SB-objective}-\eqref{SB-dynamics}  is given by $\widehat\alpha_t = \nabla \log h(t, X^{\widehat\alpha}_t)$  where, denoting $\rho^\mt(\cdot)$ to be the density function of $\nu_T$, 
\begin{eqnarray}
\label{eq:definitionofh1}
h(t,x) := \int_{\hR^d} p(T,z;t,x) \rho^\mt(z) dz.
\end{eqnarray}
Moreover, it holds that
\begin{equation}
\label{valuefunction_schrodinger_general_init}
J(\widehat\alpha) = \int_{\hR^d} \log \rho^\mt(y) \mt (dy) - D_{\rm KL}(\mi\|\nu_0).
\end{equation}
\end{Proposition}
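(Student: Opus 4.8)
The plan is to recognize the claimed optimizer as the drift of the Doob $h$-transform of the reference diffusion \eqref{SDE0}, and then to deduce its optimality by combining the endpoint disintegration of relative entropy on path space with the minimality of the product coupling $\pi$ from Proposition~\ref{product_meas} in the associated static Schr\"odinger problem. Concretely the proof would run in three stages: (i) convert Proposition~\ref{product_meas} into pointwise marginal identities; (ii) exhibit $\widehat\alpha=\nabla\log h$ as generating an admissible path measure $\widehat\hQ$ with the prescribed endpoint laws and compute $D_{\rm KL}(\widehat\hQ\|\hP)$; (iii) show that no competitor in $\sP(\mi,\mt)$ does strictly better.

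For (i)--(ii): since $\nu_0\simeq\mi$ and $\mi$ has a density, $\nu_0$ is Lebesgue-absolutely continuous, and finiteness of $D_{\rm KL}(\mt\|S[\nu_0])$ forces $\mt\ll S[\nu_0]\ll\mathrm{Leb}$, so $\mt$ (hence $\nu_T\simeq\mt$) has a density, which we denote $\rho^\mt$ as in the statement. Reading off the two marginals of $\pi$ in \eqref{measmu} gives
\[
f_{\rm ini}(x)=h(0,x)\,\tfrac{d\nu_0}{dx}(x),\qquad f_{\rm tar}(y)=\tfrac{dS[\nu_0]}{dy}(y)\,\rho^\mt(y),
\]
i.e. $h(0,\cdot)=\tfrac{d\mi}{d\nu_0}$ and $\rho^\mt=\tfrac{d\mt}{dS[\nu_0]}$, with $h$ given by \eqref{eq:definitionofh1}. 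I would then note that $h(t,x)=\hE_{t,x}[\rho^\mt(X_T)]$ is, by the Aronson bounds \eqref{aronson} and strict positivity of $p$, a strictly positive classical solution of $\partial_t h+\scL_t h=0$ with terminal data $\rho^\mt$, in particular $\int_{\hR^d}h(T,z)p(T,z;0,x)\,dz=h(0,x)$. Setting $\frac{d\widehat\hQ}{d\hP}\big|_{\cF_T}=h(T,X_T)/h(0,X_0)$, this identity makes $\widehat\hQ$ a probability measure; Girsanov then shows the canonical process solves \eqref{SB-dynamics} with $\widehat\alpha_t=\nabla\log h(t,X_t)$ and $X_0\sim\mi$, and computing the joint law of $(X_0,X_T)$ via the $h$-transformed kernel $\tfrac{h(T,z)}{h(0,x)}p(T,z;0,x)$ and using the marginal identities shows it coincides with $\pi$, so $X_T\sim\mt$. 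Square-integrability of $\widehat\alpha$ (hence $\widehat\hQ\in\cA$) will follow from the finiteness of its cost, and indeed, since $\log\frac{d\widehat\hQ}{d\hP}\big|_{\cF_T}=\log h(T,X_T)-\log h(0,X_0)$,
\begin{align*}
	J(\widehat\alpha)&=D_{\rm KL}(\widehat\hQ\|\hP)=\hE^{\widehat\hQ}\big[\log h(T,X_T)-\log h(0,X_0)\big]\\
	&=\int_{\hR^d}\log\rho^\mt(y)\,\mt(dy)-\int_{\hR^d}\log h(0,x)\,\mi(dx)=D_{\rm KL}(\mt\|S[\nu_0])-D_{\rm KL}(\mi\|\nu_0),
\end{align*}
which is finite by hypothesis; this is exactly \eqref{valuefunction_schrodinger_general_init}.

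For (iii), fix any $\hQ\in\sP(\mi,\mt)$ with $D_{\rm KL}(\hQ\|\hP)<\infty$. Disintegrating $\hQ$ and $\hP$ over the endpoint pair $(X_0,X_T)$, the chain rule for relative entropy gives $D_{\rm KL}(\hQ\|\hP)=D_{\rm KL}(\hQ_{0,T}\|\hP_{0,T})+\int D_{\rm KL}(\hQ^{x,y}\|\hP^{x,y})\,\hQ_{0,T}(dx,dy)\ge D_{\rm KL}(\hQ_{0,T}\|\hP_{0,T})$, where $\hP_{0,T}(dx,dy)=\mi(dx)\,p(T,y;0,x)\,dy$; and for $\hQ=\widehat\hQ$ the integral vanishes because the $h$-transform leaves the pinned bridge unchanged, $\widehat\hQ^{x,y}=\hP^{x,y}$, so $D_{\rm KL}(\widehat\hQ\|\hP)=D_{\rm KL}(\pi\|\hP_{0,T})$. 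It then remains to prove the static bound: for every coupling $\gamma$ with marginals $\mi,\mt$, since $\tfrac{d\pi}{d\hP_{0,T}}(x,y)=\rho^\mt(y)/h(0,x)$ factorizes,
\[
D_{\rm KL}(\gamma\|\hP_{0,T})-D_{\rm KL}(\pi\|\hP_{0,T})=D_{\rm KL}(\gamma\|\pi)+\int\big(\log\rho^\mt(y)-\log h(0,x)\big)\,d(\gamma-\pi)=D_{\rm KL}(\gamma\|\pi)\ge 0,
\]
the cross term vanishing because $\gamma$ and $\pi$ share both marginals. Chaining these, $D_{\rm KL}(\hQ\|\hP)\ge D_{\rm KL}(\pi\|\hP_{0,T})=J(\widehat\alpha)$, so $\widehat\alpha$ is optimal.

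The genuine difficulty, I expect, is not this structure but its analytic underpinnings: checking that $h(t,x)=\hE_{t,x}[\rho^\mt(X_T)]$ is finite, strictly positive and regular enough to support the $h$-transform and the Girsanov change of measure (where \eqref{aronson} and the integrability of $\rho^\mt$ against the reference endpoint law enter), that $\widehat\hQ\in\sP(\O)$ with $\widehat\alpha\in\hL^2_{\hF^0}([0,T];\hR^d)$, and that the endpoint chain rule and the bridge identity $\widehat\hQ^{x,y}=\hP^{x,y}$ are valid on $\hC^d_T$ despite $\nu_0,\nu_T,S[\nu_0]$ being only $\sigma$-finite. These are precisely the points addressed in \cite{dai1991stochastic}, and the two standing hypotheses $D_{\rm KL}(\mi\|\nu_0)<\infty$, $D_{\rm KL}(\mt\|S[\nu_0])<\infty$ are exactly what render the value finite and allow the argument to close.
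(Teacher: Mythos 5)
The paper does not prove this proposition; it is stated as a direct citation to Dai Pra's Theorem~3.2, so there is no in-paper argument to compare against. Your proof is nonetheless correct and self-contained, and it follows the by-now standard path-space route: realize the candidate via Doob's $h$-transform, observe that $\widehat\hQ$'s likelihood ratio depends only on $(X_0,X_T)$ so its pinned bridges agree with $\hP$'s, and reduce optimality to the static Schr\"odinger problem by the endpoint disintegration $D_{\rm KL}(\hQ\|\hP)=D_{\rm KL}(\hQ_{0,T}\|\hP_{0,T})+\int D_{\rm KL}(\hQ^{x,y}\|\hP^{x,y})\,\hQ_{0,T}(dx,dy)$, where the separable form $\log\frac{d\pi}{d\hP_{0,T}}=\log\rho^\mt(y)-\log h(0,x)$ annihilates the cross term against any coupling with the same marginals. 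Dai Pra's original treatment proceeds instead through a stochastic control/verification formulation built on the logarithmic transform (closer in spirit to the HJB side of the story); your route buys you a cleaner variational identity and makes the static--dynamic reduction explicit, at the cost of needing the chain rule for path-space relative entropy. Two details worth flagging, which you rightly acknowledge as residing in the analytic fine print rather than the structure: (a) the cost identity $J(\widehat\alpha)=D_{\rm KL}(\widehat\hQ\|\hP)$ requires $\int_0^\cdot\widehat\alpha\,d\widetilde W$ to be a true $\widehat\hQ$-martingale, which is equivalent to the square-integrability you want to conclude, so this needs to be handled without circularity (e.g.\ by first verifying $D_{\rm KL}(\widehat\hQ\|\hP)<\infty$ from the explicit form $\hE^{\widehat\hQ}[\log h(T,X_T)-\log h(0,X_0)]$ and the two finiteness hypotheses); and (b) the vanishing of $\int(\log\rho^\mt(y)-\log h(0,x))\,d(\gamma-\pi)$ requires each of the two separable pieces to be individually integrable under both $\gamma$ and $\pi$, which again is exactly what $D_{\rm KL}(\mt\|S[\nu_0])<\infty$ and $D_{\rm KL}(\mi\|\nu_0)<\infty$ deliver. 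With those points made explicit the argument closes, and the bonus identity $J(\widehat\alpha)=D_{\rm KL}(\mt\|S[\nu_0])-D_{\rm KL}(\mi\|\nu_0)$ you record is a nice, correct reformulation of \eqref{valuefunction_schrodinger_general_init}.
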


We note that in the above $D_{\rm KL}(\mi\|\n_0)=\int \log\frac{\mi(dx)}{\n_0(dx)}\mi(dx)$ (see footnote 1), and (\ref{measmu}) implies that $\frac{\mi(dx)}{\n_0(dx)}=\int p(T, y;0,x)\rho^\mt(y)dy$. Therefore \eqref{valuefunction_schrodinger_general_init} can be rewritten as
\begin{align*}
J(\widehat\alpha) &= \int_{\hR^d} \log \rho^\mt(y) \mt(dy) - \int_{\hR^d} \log\Big(\int_{\hR^d}p(T, y;0,x)\rho^\mt(y)dy\Big) \mi(dx) \\
&= \hE[\log\rho^\mt(X^{\widehat\alpha}_T)]- \int_{\hR^d} \log h(0,x) \mi(dx) = \hE\big[\log\rho^\mt(X^{\widehat\alpha}_T)\big]- \hE[\log h(0,X^{\widehat\alpha}_0)].
\end{align*}

Moreover, for a fixed $\mi\in \sP_2(\hR^d)$, 
we define $h^\m(t,x)
= \int_{\hR^d} p(T,z;t,x) \rho^\m(z) dz$. Then, we have the following result.

\begin{Lemma}[{\cite[Lemma 3.1]{garg2024soft}}]
\label{lemma:3.1}
Let $\m\in{\sD_{\mi}}$. Then, for any $\{\alpha_t\}\subset \hL^2_{\hF^0}([0,T];\hR^d)$, it holds that
\begin{align*}
J(\alpha) \geq \hE[\log \rho^\m(X^\alpha_T)]- \hE[\log h^\m(0, X^\alpha_0)].
\end{align*}
The equality holds when $\alpha_t =\a^\m_t= \nabla \log h^\m(t, X^{\alpha^\m}_t)$, $t\in[0,T]$ and $X^{\a^\m}_T\sim \m$.
\qed
\end{Lemma}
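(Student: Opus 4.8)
The plan is a verification argument combining Doob's $h$-transform with a completion-of-squares. Fix $\mu\in\sD_{\mi}$ and recall that $h^\mu(t,x)=\int_{\hR^d}p(T,z;t,x)\rho^\mu(z)\,dz=\hE[\rho^\mu(X_T)\mid X_t=x]$, where $X$ is the uncontrolled diffusion of \eqref{SDE0}; hence $h^\mu$ is space-time harmonic, $\partial_t h^\mu+\scL_t h^\mu=0$ on $[0,T)\times\hR^d$ with terminal value $h^\mu(T,\cdot)=\rho^\mu$, and by the two-sided bound \eqref{aronson} together with parabolic regularity it is strictly positive and smooth on $[0,T)\times\hR^d$, so $\log h^\mu$ is well defined there. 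Applying It\^o's formula to $\log h^\mu(t,X^\alpha_t)$ along the controlled dynamics $dX^\alpha_t=[b(t,X^\alpha_t)+\alpha_t]\,dt+d\wt{W}_t$ and using the elementary identity $\Delta\log h=\Delta h/h-|\nabla\log h|^2$, the harmonic part of the drift cancels and one is left with
\begin{align*}
d\log h^\mu(t,X^\alpha_t)=\Big(\alpha_t\cdot\nabla\log h^\mu(t,X^\alpha_t)-\tfrac12|\nabla\log h^\mu(t,X^\alpha_t)|^2\Big)dt+\nabla\log h^\mu(t,X^\alpha_t)\cdot d\wt{W}_t.
\end{align*}

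Integrating over $[0,T]$, using $h^\mu(T,X^\alpha_T)=\rho^\mu(X^\alpha_T)$, and taking expectations so that the martingale term drops out, I would obtain
\begin{align*}
\hE[\log\rho^\mu(X^\alpha_T)]-\hE[\log h^\mu(0,X^\alpha_0)]=\hE\Big[\int_0^T\Big(\alpha_t\cdot\nabla\log h^\mu(t,X^\alpha_t)-\tfrac12|\nabla\log h^\mu(t,X^\alpha_t)|^2\Big)dt\Big].
\end{align*}
Subtracting this from $J(\alpha)=\hE[\tfrac12\int_0^T|\alpha_t|^2\,dt]$ and completing the square turns the integrand into $\tfrac12|\alpha_t-\nabla\log h^\mu(t,X^\alpha_t)|^2\ge0$, which gives the identity
\begin{align*}
J(\alpha)=\hE[\log\rho^\mu(X^\alpha_T)]-\hE[\log h^\mu(0,X^\alpha_0)]+\tfrac12\hE\Big[\int_0^T|\alpha_t-\nabla\log h^\mu(t,X^\alpha_t)|^2\,dt\Big],
\end{align*}
whence both the asserted inequality and the fact that equality holds precisely when $\alpha_t=\nabla\log h^\mu(t,X^\alpha_t)$, $dt\otimes d\hP$-a.e.

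For the equality case I would identify the law of the $h$-transformed process at the path level. The density $d\hQ^\mu/d\hP_X=\rho^\mu(\omega(T))/h^\mu(0,\omega(0))$ defines a probability measure on $\hC^d_T$ (total mass $1$ by the tower property), with $d\hQ^\mu/d\hP_X\big|_{\cF_t}=h^\mu(t,\omega(t))/h^\mu(0,\omega(0))$ by the Markov property, so that Girsanov makes the canonical process solve the SDE with drift $b+\nabla\log h^\mu$ under $\hQ^\mu$; hence $\hQ^\mu$ is the law of $X^{\alpha^\mu}$ with $\alpha^\mu_t=\nabla\log h^\mu(t,X^{\alpha^\mu}_t)$, for which the equality case applies. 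It remains to check $X^{\alpha^\mu}_T\sim\mu$: the Beurling factorization \eqref{measmu} yields $\mi(dx)=h^\mu(0,x)\nu_0(dx)$, so for bounded test functions $\varphi$,
\begin{align*}
\hE^{\hQ^\mu}[\varphi(\omega(T))]=\int_{\hR^d}\varphi(z)\rho^\mu(z)\Big(\int_{\hR^d}p(T,z;0,x)\nu_0(dx)\Big)dz=\int_{\hR^d}\varphi(z)\,\mu(dz),
\end{align*}
the last equality being exactly the identity for the second marginal of $\pi$ in \eqref{measmu}.

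The main obstacle is the integrability bookkeeping. Since $\rho^\mu$ is only assumed in $\hL^1(\hR^d)$ --- neither bounded nor bounded away from $0$ --- and $\nabla\log h^\mu$ may blow up as $t\uparrow T$, neither the application of It\^o's formula up to $T$ nor the vanishing of the stochastic integral is automatic. I would handle this by localization, stopping at $\tau_n=\inf\{t:|X^\alpha_t|\ge n\}\wedge(T-1/n)$, establishing the identity on $[0,\tau_n]$, and then passing to the limit via Fatou/monotone convergence, using the finiteness encoded in $\mu\in\sD_{\mi}$ (note in particular $\hE[\log h^\mu(0,X^\alpha_0)]=D_{\rm KL}(\mi\|\nu_0)$, since $X^\alpha_0\sim\mi$ and $h^\mu(0,\cdot)=d\mi/d\nu_0$); the degenerate $t\uparrow T$ behaviour of $\nabla\log h^\mu$ in the equality case is automatically absorbed into the $h$-transform construction, which never needs the drift itself to be integrable. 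An alternative that sidesteps the stochastic-calculus subtleties is the relative-entropy chain rule: since $J(\alpha)=D_{\rm KL}(\hP_{X^\alpha}\|\hP_X)$ by \eqref{RSB-obj1}, writing $D_{\rm KL}(\hP_{X^\alpha}\|\hP_X)=D_{\rm KL}(\hP_{X^\alpha}\|\hQ^\mu)+\hE^{\hP_{X^\alpha}}[\log(d\hQ^\mu/d\hP_X)]$ and using $\log(d\hQ^\mu/d\hP_X)=\log\rho^\mu(\omega(T))-\log h^\mu(0,\omega(0))$ together with $D_{\rm KL}\ge0$ yields the inequality, with equality iff $\hP_{X^\alpha}=\hQ^\mu$ (the case $J(\alpha)=\infty$ being vacuous).
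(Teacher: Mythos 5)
The paper states this lemma as a citation to \cite[Lemma~3.1]{garg2024soft} and supplies no proof of its own (the \verb|\qed| follows the statement directly), so there is no in-paper argument to compare against. Your proof is a correct and standard verification argument for the $h$-transform: $h^\mu$ is space-time harmonic for the uncontrolled generator, so It\^o plus $\Delta\log h=\Delta h/h-|\nabla\log h|^2$ yields the identity
$J(\alpha)=\hE[\log\rho^\mu(X^\alpha_T)]-\hE[\log h^\mu(0,X^\alpha_0)]+\tfrac12\hE\big[\int_0^T|\alpha_t-\nabla\log h^\mu(t,X^\alpha_t)|^2dt\big]$,
which gives both the inequality and the equality characterization. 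Your computation that $\hQ^\mu$ has terminal marginal $\mu$ is also correct, since $\mi(dx)=h^\mu(0,x)\nu_0(dx)$ reduces the integral to the second marginal of the Beurling coupling \eqref{measmu}. Your final paragraph correctly identifies the genuine technical issue --- $\rho^\mu$ is only $\hL^1$, $\nabla\log h^\mu$ degenerates as $t\uparrow T$, so the stochastic integral is not obviously a martingale --- and localization followed by Fatou is the right fix; you could be slightly more explicit that on the set where $\rho^\mu(X^\alpha_T)=0$ one has $\hE[\log\rho^\mu(X^\alpha_T)]=-\infty$, making the inequality vacuous, but the conclusion is unaffected. Your alternative via the relative-entropy chain rule, $D_{\rm KL}(\hP_{X^\alpha}\|\hP_X)=D_{\rm KL}(\hP_{X^\alpha}\|\hQ^\mu)+\hE^{\hP_{X^\alpha}}[\log(d\hQ^\mu/d\hP_X)]$, is in fact the cleaner route and sidesteps all the It\^o bookkeeping; it is likely closest in spirit to the cited source, since it makes the equality case ($\hP_{X^\alpha}=\hQ^\mu$, i.e.\ $X^{\alpha^\mu}_T\sim\mu$) immediate from $D_{\rm KL}\ge0$.
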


From Proposition \ref{lemma:sdb_general_init} and Lemma \ref{lemma:3.1} we see that the density function $\rho^\m$ plays an important role in the structure of the solution of SBP. We shall be particularly interested in the continuous dependence of $\rho^\m:=\G_1(\m)$ on $\m\in\sP_2(\hR^d)$, which we shall refer to as the {\it Stability of the 
SBP}, borrowing the well-known concept of the SBP theory (cf. e.g., \cite{nutz2022entropic,divol2025tight,carlier2024displacement}). 

To continue our discussion, we shall identify a set $\sE\subset \sP_2(\hR^d)$ 
on which an argument based on Schauder's fixed-point theorem can be carried out. We begin by  denoting
\beaa
\label{SK}
\cK := \Big\{\m\in\sP_2(\hR^d): \mbox{$\m$ has density $f_\m\in\hL^1(\hR^d)$} \Big\}.
\eeaa
Furthermore, we shall make use of the following assumption. 
\begin{Assumption}
\label{assum:g}
(i) There exists a constant $C>0$, independent of $\m\in\sP_2(\hR^d)$, such that $|\log f_\m(x)|\leq C|x|^2$, $x\in\hR^d$;

\ms
(ii) There exists a function $g\in\hL^2(\hR^d;(0,1])$, with $\int_{\hR^d}|x|^2g(x)ds <\infty$, and a constant $K>0$, independent of $\m\in\sP_2(\hR^d)$, such that		
$\Big\|\frac{f_\m}{g^2}\Big\|_{\infty} \leq K$.
\end{Assumption}
We shall consider the following two sets that will play a crucial role in our discussion.
\begin{eqnarray}\label{sE}
\sE:=\left\{\m\in\cK: \text{ Assumption  \ref{assum:g} holds} \right\}\subset \sP_2(\hR^d);
\quad
\cS_{\sE}: = \left\{ f_\m: \m\in \sE\right\}\subset \hL^1(\hR^d).
\end{eqnarray}

\begin{Remark}
\label{rem:g}
We note that  Assumption \ref{assum:g}-(i) provides a uniform lower bound for the densities $f_\m\in\cK$, whereas  Assumption \ref{assum:g}-(ii) controls the decay of $f_\m\in\cK$ as $x\sim\infty$, hence mutually non-inclusive. A typical example of the function $g$ is $e^{-c|x|^2}$, $x\in\hR^d$, $c>0$. 
In fact, in light of the estimate (\ref{aronson}), such a property holds essentially for all transition probabilities of diffusion processes.
\qed
\end{Remark}

The following lemma lists some basic properties of the set $\sE$ (or Assumption  \ref{assum:g}).  
\begin{Lemma}
\label{lem:cK}
Assume that Assumption \ref{assum:g} is in force. Then it holds that

(i) The set $\{f_\m\}_{\m\in\sE}$ is uniformly bounded  in $\hL^2(\hR^d)$.

(ii) The set $\{f_\m\}_{\m\in\sE}$ is uniformly integrable in $\sP_2(\hR^d)$, in the sense that
\bea
\label{UI}
\lim_{R\to\infty}\sup_{\m\in\sE}\int_{\{|x|\ge R\}}|x|^2f_\m(x)dx=0.
\eea

(iii) If $\{\m_n\}_{{n\ge1}}\subset\sE$ such that $\m_n\Rightarrow\m$, as $n\to\infty$, then $\|f_{\m_n}- f_\m\|_{\hL^1}\to 0$.
\end{Lemma}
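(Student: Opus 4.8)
The plan is to prove the three assertions of Lemma~\ref{lem:cK} in order, each following fairly directly from one of the two parts of Assumption~\ref{assum:g}. For part (i), the uniform $\hL^2$ bound is immediate: by Assumption~\ref{assum:g}-(ii) we have $f_\m(x)\le K g^2(x)$ pointwise for every $\m\in\sE$, hence $\int_{\hR^d}|f_\m(x)|^2 dx\le K^2\int_{\hR^d}g^4(x)dx$; since $g\in\hL^2(\hR^d;(0,1])$ and $g\le 1$, we have $g^4\le g^2$, so $\int g^4\le \int g^2<\infty$, giving a bound independent of $\m$.

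For part (ii), I would again invoke $f_\m(x)\le K g^2(x)$, so that
\[
\int_{\{|x|\ge R\}}|x|^2 f_\m(x)\,dx\le K\int_{\{|x|\ge R\}}|x|^2 g^2(x)\,dx\le K\int_{\{|x|\ge R\}}|x|^2 g(x)\,dx,
\]
using $g\le 1$ once more. Since $\int_{\hR^d}|x|^2 g(x)\,dx<\infty$ by Assumption~\ref{assum:g}-(ii), the tail $\int_{\{|x|\ge R\}}|x|^2 g(x)\,dx\to 0$ as $R\to\infty$ by dominated convergence, and this bound is uniform in $\m\in\sE$; hence \eqref{UI} follows.

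For part (iii), suppose $\m_n\Rightarrow\m$ with all $\m_n\in\sE$. The strategy is to apply the converse of Scheff\'e's theorem: weak convergence of the measures together with a uniform domination of the densities should upgrade to $\hL^1$ convergence of the densities. Concretely, fix any subsequence; by part (i) the densities $f_{\m_n}$ are bounded in $\hL^2$, hence (along a further subsequence) converge weakly in $\hL^2$ to some $\tilde f$. Testing against bounded continuous functions and using $\m_n\Rightarrow\m$ identifies $\tilde f\,dx=\m(dx)$, so $\m$ has density $f_\m=\tilde f$; one should also check $\m\in\sE$ (the bounds $|\log f_\m|\le C|x|^2$ and $f_\m/g^2\le K$ pass to the limit a.e., at least up to choosing a good representative). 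To get $\hL^1$ convergence rather than merely weak-$\hL^2$ convergence, I would use that $f_{\m_n}\le K g^2$ with $g^2\in\hL^1(\hR^d)$ an integrable dominating function independent of $n$: by weak convergence of $\m_n$ the cumulative distribution functions converge, so $f_{\m_n}\to f_\m$ in the sense that $\int_A f_{\m_n}\to\int_A f_\m$ for all Borel $A$ (or pointwise a.e.\ along a subsequence), and then the dominated convergence theorem with dominating function $K g^2$ yields $\|f_{\m_n}-f_\m\|_{\hL^1}\to 0$. A subsequence argument then removes the passage to subsequences and gives convergence of the full sequence.

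The main obstacle is part (iii): weak convergence of measures does not by itself imply pointwise (or $\hL^1$) convergence of densities, so the uniform domination $f_{\m_n}\le K g^2\in\hL^1$ from Assumption~\ref{assum:g}-(ii) is essential, and one must be careful in extracting an a.e.-convergent subsequence of densities compatible with the weak limit (this is exactly the ``converse of Scheff\'e'' step advertised in the introduction). The cleanest route is probably: extract a weak-$\hL^2$ limit, identify it with $f_\m$ via the weak convergence of $\m_n$, note the uniform $\hL^1$ domination, and then invoke a Scheff\'e-type lemma (equivalently, Vitali's convergence theorem, since $\{f_{\m_n}\}$ is uniformly integrable on $\hR^d$ thanks to part (ii) combined with the $\hL^2$ bound) to promote this to $\hL^1$ convergence.
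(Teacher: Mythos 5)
Parts (i) and (ii) of your proof are correct and essentially identical to the paper's: both rest only on the pointwise bound $f_\m \le Kg^2 \le Kg$ together with the integrability of $g$ and $|x|^2 g$ from Assumption~\ref{assum:g}-(ii).

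Your part (iii), however, has a genuine gap. You assert that weak convergence of $\m_n$ gives $\int_A f_{\m_n}\to\int_A f_\m$ for every Borel set $A$, or a.e.\ convergence of the densities along a subsequence, and then propose to finish with dominated convergence (or Vitali). Neither intermediate claim is valid: Portmanteau gives $\m_n(A)\to\m(A)$ only for $\m$-continuity sets, and weak-$\hL^2$ convergence of the $f_{\m_n}$ (from Banach--Alaoglu) does not upgrade to convergence a.e.\ or in measure, even under a uniform $\hL^1$-dominating function. A concrete obstruction is an oscillating family such as $f_n(x)=\bigl(1+\tfrac12\sin(2\pi n\,x_1)\bigr)\phi(x)$ with $\phi$ a fixed Gaussian and parameters chosen so that each $f_n$ is sandwiched between two Gaussians, hence compatible with Assumption~\ref{assum:g}: the associated measures converge weakly to the Gaussian one, yet $\|f_n-\phi\|_{\hL^1}$ does not tend to zero. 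Vitali's theorem needs convergence in measure as an input, which is exactly what is missing from the domination argument. The paper manufactures the missing regularity by convolving every density with a mollifier $\f^m$: for fixed $m$ the family $\{\f^m*f_{\m_n}\}_{n\ge 1}$ is uniformly bounded and asymptotically equicontinuous, so Sweeting's converse of Scheff\'e's theorem delivers locally uniform convergence of the mollified densities to $\f^m*f_\m$, and a $3\varepsilon$ decomposition then concludes. That mollification step, which provides the equicontinuity that weak convergence alone cannot, is the ingredient your argument lacks.
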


\begin{proof}

For any $\m\in\sE$, we note that $0<g(x)\le 1$, and by assumption,
$$ \int_{\hR^d} |f_\m(x) |^2dx \le K^2\int_{\hR^d}|g (x)|^4ds\le   K^2\|g\|^2_{\hL^2},
$$
That is $\{f_\m\}_{\m\in\sE}$ is uniformly bounded (by $K\|g\|_{\hL^2}$) in $\hL^2(\hR^d)$, proving (i).

Similarly, for any $\m\in\sE$, by the absolute continuity of the integral we have
\beaa
\label{mutoinf}
\sup_{\m\in\sE}\int_{\{|x|\ge R\}}|x|^2f_\m(x)dx \le K  \int_{\{|x|\ge R\}}|x|^2g(x)dx \to 0,
\q \mbox{\rm as $R\to\infty$,}
\eeaa
This proves (\ref{UI}), whence (ii). 

The proof of part (iii) is slightly more involved, which is in the spirit of the so-called Scheff\'e's theorem (cf. \cite{sweet}).  
We note that $\m_n\Rightarrow \m$ amounts to saying that $ f_{\m_n}\limw f_\m$, as $n\to\infty$, in $\hL^2(\hR^d)$. To show $f_{\m_n} \to f_\m$ in $\hL^1(\hR^d)$, we first consider, for each $m>0$, the smooth mollifiers $\f^m\in\hC^\infty(\hR^d;\hR_+)$ such that $\int_{\hR^d}\f^m(z)dz=1$, $m\ge 1$, and denote 
$$f^m_{\m_n}(x)=[\f^m * f_{\m_n}](x)=\int_{\hR^d}\f^m(x-z)f_{\m_n}(z)dz; \q f^m_{\m}(x)=[\f^m*f_\m](x), \q x\in\hR^d.
$$
Then it is clear that for each $n\in\hN$, $\lim_{m\to\infty}f^	m_{\m_n}(x)=f_{\m_n}(x)$ and $\lim_{m\to\infty}f^m_{\m }(x)=f_{\m}(x)$, for a.e. $x\in\hR^d$. We should remark that the convergence is uniform in $n$. Indeed, by Assumption \ref{assum:g} and Dominated Convergence Theorem we have,  as $m\to\infty$, for all $n\ge 0$, 
and $x\in\hR^d$, 
$$  |f^m_{\m_n}(x)-f_{\m_n}(x)|\le \int_{\hR^d}|\f^m(x-z)-\d_{x}(z)|f_{\m_n}(z)dz\le K\int_{\hR^d}|\f^m(x-z)-\d_{x}(z)|g^2(z)dz\to 0.
$$
Furthermore, since $\sup_{\m\in\sE}|f_\m|\le Kg^2\in\hL^1(\hR^d)$, by Dominated Convergence Theorem we have $\lim_{m\to\infty}f^m_{\m_n}=f_{\m_n}$ in $\hL^1(\hR^d)$, uniformly for $n\ge 0$.  That is, for any $\e>0$, there exists
$M(\e)>0$, such that for all $n\ge 1$, it holds that
\bea
\label{3e}
\|f^m_{\m_n}-f_{\m_n}\|_{\hL^1}<\frac{\e}3; \q \|f^m_{\m}-f_{\m}\|_{\hL^1}<\frac{\e}3, \q \mbox{whenever $m>M$.}
\eea
In the sequel we fix $m>M(\e)$, and take a closer look at the sequence $\{f^m_{\m_n}\}_{n\ge1}$. Clearly, each $f^m_{\m_n}$ is still a density function, and it holds that
\bea
\label{bdd}
\sup_n|f^m_{\m_n}(y)|\le \sup_n[\f^m*|f_{\m_n}|](y)\le K.
\eea
Moreover, since $\f^m$ is   continuous, thus for any $x, y\in\hR^d$, 
applying the Dominated Convergence Theorem we have
\beaa
\label{aec}
|f^m_{\m_n}(x+y)-f^m_{\m_n}(x)|&\le& \int_{\hR^d}|\f^m(x+y-z)-\f^m(x-z)||f_{\m_n}(z)|dz\nonumber\\
&\le& K\int_{\hR^d}|\f^m(z'+y)-\f^m(z')|dz'\to 0, \q\mbox{as $y\to 0$.}
\eeaa
Clearly, the convergence above is uniform in $n$. That is, the sequence $\{f^m_{\m_n}\}_{\{n\ge1\}}$ is so-called {\it asymptotically equi-continuous} in the sense of Sweeting \cite{sweet}. This, together with (\ref{bdd}), implies that $\dis\lim_{n\to\infty}f^m_{\m_n}= f^m_\m $, uniformly on compacts in $\hR^d$  (cf. \cite[Theorem 1]{sweet}). Applying the Dominated Convergence Theorem again we have $\lim_{n\to\infty}\|f^m_{\m_n} - f^m_\m\|_{\hL^1}=0$. That is, for the given $\e>0$ in (\ref{3e}), there exists $N>0$ such that
$\|f^m_{\m_n} - f^m_\m\|_{\hL^1}<\frac{\e}3$,   whenever $n>N$. This, together with (\ref{3e}), yields
$$ \|f_{\m_n}-f_\m\|_{\hL^1}\le \|f_{\m_n}-f^m_{\m_n}\|_{\hL^1}+ \|f^m_{\m_n} -f^m_{\m}\|_{\hL^1}+\|f^m_{\m}-f_{\m}\|_{\hL^1} <\frac{\e}3+\frac{\e}3+\frac{\e}3=\e, \q n>N,$$
proving (iii), whence the Lemma.
\end{proof}

We are now ready to study our main stability result. More precisely, we shall argue that the mapping $\G_1: \sP_2(\hR^d)\to \hL^1(\hR^d)$ is continuous. That is,  that $\mu_n\in\sP_2(\hR^d)$ weakly converges to some $\mu\in\sP_2(\hR^d)$ in Prohorov metric would imply that $\rho^{\m_n}$ converges to $\rho^{\mu}$ in $\hL^1(\hR^d)$. Such a result, to the best of our knowledge, is novel in the literature. 

To simplify our discussion, in what follows, we assume $T=1$.  Recall that $\pi$ in \eqref{measmu} has marginals $\mi$ and $\m$, and in what follows, we shall assume that $\mi$ is fixed and $\mu\in \sE$.  Let us now consider the following {\it  entropic optimal transport problem}:
\begin{equation}
\label{EOT}
I(\m):=\inf_{\pi \in \Pi(\mi,\mu)} \int_{ \hR^d \times \hR^d }\bc(x,y)\pi(dxdy) + D_{\rm KL}(\pi\|\mi\otimes\mu),
\end{equation}
where $\Pi(\mi,\m)$ is the set of all coupling probability measures $\pi$ on $\hR^d \times \hR^d$ with  marginals $\mi$ and   $\m$; and $\bc(\cd, \cd)$ is a continuous {\it cost function}.  It is well-known (see, e.g., \cite{divol2025tight,chiarini2023gradient,gunsilius2022convergence,nutz2022entropic}) that the minimization \eqref{EOT} admits a unique solution $\widehat\pi$, whose density takes the form:
\begin{equation}
\label{density_decomp}
\widehat\pi(dxdy) = \exp\big(-\bc(x,y)+\phi^\m(x)+\psi^\m(y)\big)\mi(dx)\m(dy),
\end{equation}
where $\phi^\m,\psi^\m$: $\hR^d\to \hR$ are two measurable functions, often referred to as the {\it Schr\"odinger potentials}. It is clear that the pair $(\phi^\m, \psi^\m)$ is unique up to an additive constant. That is, if $(\phi^\m, \psi^\m)$ is a pair of  Schr\"odinger potentials, then so is   $(\phi^{\m}+c,\psi^{\m}-c)$. Furthermore, since both $\mi$ and $\m$ are probability measures, we can easily choose a constant $c$ so that
the following {\it symmetric normalization} holds:
\bea
\label{symnorm}
\int \phi^{\m}(x) \mi(dx) = \int \psi^{\m}(y)\m(dy).
\eea
(Otherwise we take $c=\frac12\big[-\int \phi^{\m}(x) \mi(dx) +\int \psi^{\m}(y)\m(dy)\big]$.) Note that under the symmetric normalization, the Schr\"odinger potentials is unique. The following  stability result for the mappings $\m\mapsto (\phi^\m, \psi^\m)$ is crucial for our discussion.   
\begin{Lemma}[{\cite[Theorem 1.1]{carlier2024displacement}}]
\label{psicontrol}
Assume that the cost function $\bc(\cd,\cd)\in \hC^{k+1}(\hR^d\times\hR^d)$ for some $k\in\hN$. Then there exists $C>0$ depending only on $\|\bc\|_{\mathbb{C}^{k+1}}$, such that for all $\m_1$, $\m_2 \in \sP_2(\hR^d)$, it holds that
\begin{equation*}
\|(\phi^{\m_1}-\phi^{\m_2},\psi^{\m_1}-\psi^{\m_2})\|_* \leq CW_2(\mu_1, \ \m_2),
\end{equation*}
where $\|(\phi,\psi)\|_*:= \inf_{c\in\hR} \left\{\|\phi-c\|_{\hC^k(\hR^d)}+ \|\psi+c\|_{\hC^k(\hR^d)}\right\}$.
\qed
\end{Lemma}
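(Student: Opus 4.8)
The plan is to work directly with the Schrödinger system characterizing the potentials. Enforcing the two marginal constraints on the plan $\widehat\pi$ in \eqref{density_decomp} yields, for each $\m$, the fixed-point identities
\[
\phi^{\m}(x) = -\log\int_{\hR^d} e^{-\bc(x,y)+\psi^{\m}(y)}\,\m(dy),
\qquad
\psi^{\m}(y) = -\log\int_{\hR^d} e^{-\bc(x,y)+\phi^{\m}(x)}\,\mi(dx),
\]
holding $\mi$-a.e.\ and $\m$-a.e.\ respectively (and everywhere after passing to the canonical continuous representatives, since $\bc$ is continuous). The structural point driving everything is that both right-hand sides are ``soft-min'' (log-Laplace) transforms: they are smoothing — iterating the map gains regularity, so $\bc\in\hC^{k+1}$ forces $\phi^{\m},\psi^{\m}\in\hC^{k}$ — and, crucially, they are contracting modulo additive constants once one works on the quotient fixed by the symmetric normalization \eqref{symnorm}.

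First I would establish uniform a priori bounds: the potentials and their first derivatives (Lipschitz constants) are bounded uniformly over the admissible class of $\m$, using that $\bc$ has bounded derivatives up to order $k+1$ together with the fact that $\mi,\m$ are probability measures with controlled tails (the Assumption \ref{assum:g}-type conditions in force here supply exactly this). With these bounds, for two measures $\m_1,\m_2$ with symmetrically normalized potentials $(\phi_i,\psi_i)$ I would subtract the fixed-point equations. Since $\psi_1-\psi_2$ is a log-ratio of integrals against the \emph{same} measure $\mi$, a Birkhoff/Hilbert-projective-metric contraction estimate for positive integral operators gives $\|\psi_1-\psi_2\|_{\infty}\le \theta\,\|\phi_1-\phi_2\|_{\infty}$ with $\theta<1$ controlled by the oscillation of $\bc$. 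Treating $\phi_1-\phi_2$ the same way, the two integrals are now against \emph{different} measures; splitting the log-ratio into a ``$\psi$-part'' and a ``measure-part'' and bounding the latter, $\int e^{-\bc(x,\cdot)+\psi_2}\,d(\m_1-\m_2)$, by Kantorovich--Rubinstein duality against the Lipschitz test function $y\mapsto e^{-\bc(x,y)+\psi_2(y)}$, produces $\|\phi_1-\phi_2\|_{\infty}\le \theta\,\|\psi_1-\psi_2\|_{\infty}+C\,W_2(\m_1,\m_2)$. Composing the two inequalities and using $\theta^2<1$ yields the $\hC^0$ estimate $\|(\phi_1-\phi_2,\psi_1-\psi_2)\|_{\hC^0}\lesssim W_2(\m_1,\m_2)$.

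I would then bootstrap from $\hC^0$ to $\hC^k$ by differentiating the fixed-point equations under the integral sign. A spatial derivative of $\phi^{\m}$ of order $j\le k$ is a universal polynomial expression in the derivatives of $\bc$ of order $\le j+1$ (hence the hypothesis $\bc\in\hC^{k+1}$) integrated against the plan $\widehat\pi$, plus lower-order derivatives of $\psi^{\m}$; likewise for $\psi^{\m}$. Taking differences between $\m_1$ and $\m_2$, these identities close into a triangular system in which the order-$j$ difference is dominated by $W_2(\m_1,\m_2)$, by strictly lower-order differences (already controlled), and by a contracting $\theta$-term. A finite induction on $j$ then delivers $\|(\phi_1-\phi_2,\psi_1-\psi_2)\|_*\le C\,W_2(\m_1,\m_2)$ with $C$ depending only on $\|\bc\|_{\hC^{k+1}}$.

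The main obstacle is the contraction step — producing $\theta<1$ rather than merely $\theta\le 1$, i.e.\ a genuinely linear bound instead of qualitative continuity. This rests on a uniform quantitative Birkhoff/Hilbert-metric (equivalently, spectral-gap or Poincaré-type) estimate for the positive operators $f\mapsto\int e^{-\bc(x,y)}f(y)\,\m(dy)$, which on an unbounded domain is delicate and genuinely needs the a priori tail and Lipschitz bounds; this is the technical heart of \cite{carlier2024displacement}, and it is precisely why the uniform hypotheses on $\bc$ and on the admissible measure class cannot be dispensed with.
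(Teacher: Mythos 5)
The paper does not prove this lemma; it is stated as a direct citation of Theorem~1.1 of Carlier, Chizat, and Laborde (2024), with no in-paper argument. There is therefore no ``paper's proof'' to compare your reconstruction against, and I will assess it on its own merits and against my understanding of the cited work.

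Your Schrödinger-system fixed-point identities are correct, and the plan to bootstrap from $\hC^0$ to $\hC^k$ by differentiating under the integral (with $\bc\in\hC^{k+1}$ supplying exactly one extra derivative per step) is sound in outline. The real weight rests on the contraction factor $\theta<1$, and your appeal to Birkhoff--Hopf is where things do not close. That theorem gives $\theta=\tanh(\Delta/4)$ with $\Delta$ the projective diameter of the kernel $e^{-\bc}$, namely $\Delta=\sup_{x,x',y,y'}|\bc(x,y)+\bc(x',y')-\bc(x,y')-\bc(x',y)|$. This is finite when $\bc$ itself is uniformly bounded, so if you read $\|\bc\|_{\hC^{k+1}}$ as including the sup norm of $\bc$, the contraction step survives. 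But the cost the paper actually feeds into this lemma is $\bc(x,y)=-\log p(1,y;0,x)$, which by the Aronson bounds grows quadratically in $|x-y|$; its mixed cross-difference is $\sim |x-x'|\,|y-y'|$, hence unbounded on $\hR^d\times\hR^d$, so $\Delta=\infty$ and Birkhoff yields only $\theta=1$. The oscillation comparison $\mathrm{osc}(\psi_1-\psi_2)\le\mathrm{osc}(\phi_1-\phi_2)$ is then sharp, and the $(1-\theta^2)^{-1}$ step in your composition breaks down. This is a genuine gap for the intended application, not a deferable technicality. To repair it one would have to exploit the Gaussian decay of the kernel together with the tail control on the measures quantitatively, or change mechanism altogether. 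My understanding is that the cited paper does the latter: it differentiates the Schrödinger system along a Wasserstein path (continuity-equation parametrization of the $W_2$ geodesic), proves quantitative invertibility of the resulting linearized operator through a Poincaré-type inequality, and then integrates. That route avoids the projective-diameter obstruction, which is precisely what allows the estimate to hold for costs with bounded derivatives but unbounded values. So your sketch is a reasonable alternative argument under a bounded-cost hypothesis, but it is not a reconstruction of the cited proof and, as written, it does not cover the cost that the paper needs the lemma for.
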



We now proceed to prove the main result of this section. To begin with,  let us consider the entropic optimal transport problem (\ref{EOT}) with $\bc(x,y):=-\log p(1,y;0,x)$, $x,y\in\hR^d$, where $p(s,y;t,x)$, $0\le t<s\le 1$ and $x,y\in\hR^d$, is the transition density of the diffusion (\ref{SDE0}).  By  (\ref{density_decomp}), for fixed $\mi, \m\in\sP_2(\hR^d)$, the unique solution for this entropic optimal transport problem is given by (see also \cite{nutz2022entropic})
\beaa
\label{hatpi}
\widehat\pi(dxdy)= p(1,y;0,x)e^{\phi^\mu(x)+\psi^\mu(y)}f_{\rm ini}(x)f_{\mu}(y)dxdy,
\eeaa
where $\phi^\m$ and $\psi^\m$ are the Schr\"odinger potentials, and we shall enforce the symmetric normalization so that
they satisfy (\ref{symnorm}). Since $\h{\pi}$ has the marginals $\mi$ and $\mu$, by  the uniqueness of $(\n_0, \n_T)=\cT(\m)$, 
in Proposition \ref{measmu}, we can conclude that 
\begin{eqnarray*}
p(1,y;0,x) \rho_0(x)\rho^\mu(y) = p(1,y;0,x)e^{\phi^\mu(x)+\psi^\mu(y)}f_{\rm ini}(x)f_{\mu}(y), \qq x, y\in\hR^d,
\end{eqnarray*}
where $\rho_0$ is the density function of $\nu_0$.
An easy argument of separation of variables then yields that 
\bea
\label{rhopsi}
\rho^\mu(y)=e^{\psi^\mu(y)}f_\mu(y);\quad \rho_0(x)=e^{\phi^\mu(x)}f_{\rm ini}(x), \qq x, y\in\hR^d.
\eea

Now note that  the transition density $p(\cd,\cd\,;\cd,\cd)$ is a classical solution to the Kolmogorov PDE. Thanks to Assumption \ref{assump1}, we can assume without loss of generality that  $\bc(\cd, \cd)=-\log p(1,\cd\,;0,\cd)\in \hC^2(\hR^d\times \hR^d)$. Thus  
according to Lemma \ref{psicontrol} and noting the definition of $\|\cd\|_*$, we see that, modulo some constant normalization, we have  that the 
Schr\"odinger potential  $(\phi^{\m_n},\psi^{\m_n})$ itself satisfies the estimate:
\begin{equation}
\label{represent}
\|  {\phi}^{\m_n} - \phi^{\m}\|_{\hL^\infty} +  \|  {\psi}^{\m_n} - \psi^{\m}\|_{\hL^\infty}\leq CW_2(\m_n,\m).
\end{equation} 
Here in the above the constant $C>0$ depending only on $\|\bc\|_{\hC^2}$, but independent of $n$.

Furthermore, we note that $\bc\in\hC^2$ also lead to the following {\it a priori} estimate of the Schr\"odinger potential (see, e.g., \cite[Lemma 2.1]{nutz2022entropic}):
\bea
\label{est_potential}
\psi^{\mu}(y)\leq  \int_{\hR^d} \bc(x,y)\mi(dx) = :\xi(y), \q y\in\hR^d.
\eea
Recall the fundamental estimate (\ref{aronson}) and the definition of $\bc(\cd, \cd)$, it is readily seen that $|\xi(y)|\sim \l' |y|^2$, as $y\to \infty$, for some constant $\l'>0$ depending only on the coefficient $b(\cd,\cd)$ in SDE(\ref{SDE0}). In light of Remark \ref{rem:g} and \cite[Lemma 2.1]{nutz2022entropic}, in what follows we shall impose some explicit assumptions on the Schr\"odinger potentials $\{\psi^\m(\cd)\}_{\m\in\sE}$ and their upper bound $\xi (\cd)$ in (\ref{est_potential}), which are in line with Assumption \ref{assum:g}. 
\begin{Assumption}
\label{assum: psi}
(i) There exists a constant $C>0$, such that for any $\m \in\sE$, it holds that
\bea
\label{estpsi}
|\psi^\mu(y)|\leq C|y|^2,  \q y\in\hR^d.
\eea
(ii) In Assumption \ref{assum:g},  the control function $g$ satisfies
\bea
\label{cond_mu0}
\eta(\cd):=e^{\xi(\cd)}g^2(\cd) \in \hL^1(\hR^d).
\eea
\end{Assumption}

\ms

Now for any $f_\mu \in\cS_{\sE}$, by Assumption \ref{assum:g} and (\ref{est_potential}) we have
\begin{equation*}
0\le \rho^\m (y)=  e^{\psi^\m(y)}\ f_\m(y) \leq    e^{\xi(y)}f_\m(y)  \leq   e^{\xi(y)}Kg^2(y)=   K\eta(y), \q y\in\hR^d.
\end{equation*}
Consequently, we conclude that $\rho^\m \in \hL^1(\hR^d)$ for any $\m\in\sE$, thanks to \eqref{cond_mu0}.

Bearing the above discussion in mind, we are now ready to present the main result of this section. 
\begin{Proposition}
\label{stable}
Assume that Assumptions \ref{assump1} and  \ref{assum:g} are in force. Assume further that $\{\mu_n\}_{n\ge1}\subset \sE$ and $\m_n\Rightarrow \m $ in Prohorov metric. Then $\|\rho^{\m_n}-\rho^{\mu}\|_{\hL^1}=\|\G_1(\m_n)-\G_1(\m)\|_{\hL^1}\to 0$, as $n\to\infty$. 
\end{Proposition}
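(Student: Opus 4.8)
The plan is to exploit the explicit factorization $\rho^\m(y)=e^{\psi^\m(y)}f_\m(y)$ from \eqref{rhopsi}, together with the uniform envelope $0\le\rho^\m\le K\eta$ with $\eta\in\hL^1(\hR^d)$ established just before the statement, and to reduce the claim to the two convergences that are already at our disposal: $\psi^{\m_n}\to\psi^\m$ uniformly, and $f_{\m_n}\to f_\m$ in $\hL^1$.

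The first step is to upgrade the hypothesis $\m_n\Rightarrow\m$ to genuine $2$-Wasserstein convergence. By Lemma \ref{lem:cK}(ii) the family $\{\m_n\}$ has uniformly integrable second moments, so weak convergence in the Prohorov metric together with this tightness is equivalent to $W_2(\m_n,\m)\to 0$. Hence the Schr\"odinger-potential stability estimate \eqref{represent} yields $\|\psi^{\m_n}-\psi^\m\|_{\hL^\infty}\le CW_2(\m_n,\m)\to 0$. Independently, Lemma \ref{lem:cK}(iii) gives $\|f_{\m_n}-f_\m\|_{\hL^1}\to 0$; and since the pointwise bounds defining $\sE$ (i.e.\ Assumption \ref{assum:g}) pass to an a.e.-convergent subsequential limit, we also have $\m\in\sE$, so $\rho^\m$ is well defined and satisfies $\rho^\m\le K\eta$ as well.

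Next I would split, by the triangle inequality,
\[
\|\rho^{\m_n}-\rho^\m\|_{\hL^1}\;\le\;\int_{\hR^d}e^{\psi^{\m_n}(y)}\,|f_{\m_n}(y)-f_\m(y)|\,dy\;+\;\int_{\hR^d}\bigl|e^{\psi^{\m_n}(y)}-e^{\psi^\m(y)}\bigr|\,f_\m(y)\,dy\;=:\;A_n+B_n.
\]
The term $B_n$ is immediate: using $|e^{a}-e^{b}|\le e^{b}(e^{|a-b|}-1)$ one gets $B_n\le\bigl(e^{\|\psi^{\m_n}-\psi^\m\|_{\hL^\infty}}-1\bigr)\int_{\hR^d}e^{\psi^\m(y)}f_\m(y)\,dy=\bigl(e^{\|\psi^{\m_n}-\psi^\m\|_{\hL^\infty}}-1\bigr)\|\rho^\m\|_{\hL^1}\to 0$ since $\rho^\m\in\hL^1$. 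The real work is $A_n$, where the obstacle is that the weight $e^{\psi^{\m_n}}$ is not uniformly bounded. I would handle this by truncation: fix $R>0$ and write $A_n=\int_{\{|y|\le R\}}+\int_{\{|y|>R\}}$. On $\{|y|\le R\}$, Assumption \ref{assum: psi}(i) gives $e^{\psi^{\m_n}(y)}\le e^{CR^2}$ uniformly in $n$, so this piece is $\le e^{CR^2}\|f_{\m_n}-f_\m\|_{\hL^1}\to 0$ as $n\to\infty$ for each fixed $R$. On $\{|y|>R\}$, the envelope $e^{\psi^{\m_n}(y)}|f_{\m_n}(y)-f_\m(y)|\le e^{\xi(y)}\bigl(f_{\m_n}(y)+f_\m(y)\bigr)\le 2K\eta(y)$ (using \eqref{est_potential} and Assumption \ref{assum:g}(ii)) gives a tail bound $\le 2K\int_{\{|y|>R\}}\eta(y)\,dy$ that is independent of $n$ and vanishes as $R\to\infty$ by \eqref{cond_mu0}. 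Hence $\limsup_{n}A_n\le 2K\int_{\{|y|>R\}}\eta$ for every $R$, so $A_n\to 0$; combined with $B_n\to 0$ this proves $\|\rho^{\m_n}-\rho^\m\|_{\hL^1}\to 0$.

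The main obstacle, as flagged, is precisely the unbounded exponential weight $e^{\psi^{\m_n}}$: $\hL^1$-convergence of the densities $f_{\m_n}$ does not by itself control $\int e^{\psi^{\m_n}}|f_{\m_n}-f_\m|$, and one must interlace it with the integrable majorant $K\eta$ supplied by Assumption \ref{assum:g}(ii) and \eqref{cond_mu0} through the truncation argument above. A secondary point requiring care is that the stability estimate \eqref{represent} needs genuine $W_2$-convergence rather than mere weak convergence, which is why the uniform second-moment integrability from Lemma \ref{lem:cK}(ii) must be invoked at the outset.
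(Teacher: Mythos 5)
Your proof is correct and follows essentially the same approach as the paper: both factor via $\rho^\mu=e^{\psi^\mu}f_\mu$ from \eqref{rhopsi}, upgrade Prohorov to $W_2$-convergence (via Lemma \ref{lem:cK}(ii)) to invoke the potential-stability estimate \eqref{represent}, split $\rho^{\mu_n}-\rho^\mu$ into a term weighted by the exponential difference and a term weighted by the density difference, dominate both by the integrable envelope $K\eta$, and close by dominated convergence (the paper) or a truncation-plus-tail estimate (you), which are interchangeable. Your elementary bound $|e^{a}-e^{b}|\le e^{b}(e^{|a-b|}-1)$ for $B_n$ and your explicit remark that $\mu\in\sE$ by passing the pointwise envelopes of Assumption \ref{assum:g} to an a.e.\ subsequential limit are harmless refinements that the paper leaves implicit.
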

\begin{proof}

Assume $\{\m_n\}_{n\ge 1}\subset \sE$, and $\m_n\Rightarrow \m$, in Prohorov metric. By Lemma \ref{lem:cK}-(ii), $\{\m_n\}$ is uniformly integrable in $\hL^2$, thanks to Assumption \ref{assum:g}, and thus by the relationship between Wasserstein distance and Prohorov metric (see, \cite[Theorem 7.12]{Villani}), we have $W_2(\m_n,\m)\to 0$, as $n\to\infty$. Thus, if follows from (\ref{represent}) that $\|\psi^{\m_n}-\psi^\m\|_{\hL^\infty}\to 0$, as $n\to\infty$.

Next,  for each $\m_n\in\sE$, $n\ge 1$, and $\m$, we apply (\ref{rhopsi}) and  write
\begin{eqnarray*}
\rho^{\mu_n}(y)=e^{ \psi^{\mu_n}(y)}f_{\mu_n}(y),\quad 
\rho^{\mu }(y)=e^{ \psi^{\mu}(y)}f_{\m}(y), 
\qq x, y\in\hR^d.
\end{eqnarray*}
Therefore, for $y\in \hR^d$, we have
\beaa
|\rho^{\mu}(y) - \rho^{\mu_n}(y)|  &=& \big|e^{\psi^{\mu}(y)}f_{\mu }(y) - e^{\psi^{\mu_n}(y)}f_{\mu_n}(y)\big|\\
&\leq& \big|e^{\psi^{\mu}(y)}-e^{\psi^{\mu_n}(y)}\big| f_{\mu_n}(y) +e^{\psi^{\mu}(y)}\big|f_{\mu_n}(y)-f_{\mu}(y)\big| =:I^1_n(y)+I_n^2(y), \nonumber
\eeaa
where $I^i_n$, $i=1,2$ are defined in an obvious way. It then suffices to show that both $I^1_n$ and $I^2_n \to 0$ in $\hL^1$, as $n\to\infty$. 

To this end, we first recall that $\|\psi^{\m_n}-\psi^\m\|_{\hL^\infty}\to 0$, as $n\to\infty$. Hence there exists $N>0$, such that $\psi^{\m_n}(y)\leq \psi^{\m}(y)+1$, for all $y\in \hR^d$, whenever $n\geq N$.
Thus, for $n\geq N$, we have
\begin{equation*}
0\le I_n^1(y) \leq
\big(\big|e^{\psi^{\mu}(y)}\big|+\big|e^{\psi^{\mu_n}(y)}\big|\big)f_{\m_n}(y) \leq 2e^{\psi^{\m}(y)+1}f_{\m_n}(y)
\leq 2e \cd e^{\xi(y)}g^2(y)=2 e \eta(y).
\end{equation*}
Here in the above, the last inequality holds due to Assumption \ref{assum:g} and \eqref{cond_mu0}. Since $\eta \in \hL^1$ by (\ref{cond_mu0}), the Dominated Convergence Theorem implies that $I_n^1(\cdot)$ converges to $0$ in $\hL^1(\hR^2)$ as $n \to \infty$, because ${\psi^{\m_n}}$ converges uniformly to $\psi^{\m}$ on $\hR^d$.

Finally, since 
$I_n^2(y) \leq 2K\eta(y)$,  and  $f_{\m_n}\to f_\m$  in $\hL^1(\hR^d)$, thanks to Lemma \ref{lem:cK}-(iii), we can apply Dominated Convergence again to get $I_n^2(\cd)$ converges to 0 in $\hL^1(\hR^2)$, as $n\to\infty$, proving the proposition.
\end{proof}

\section{Existence of optimal control and convergence for general $\mi$}
\label{sec:general}

In this section, we shall extend the results of Section \ref{sec:existence_opt_policy} and show that the Problem \ref{softSBP} has solution for each $k\in\hN$ when $\mi$ is an arbitrary distribution with density $f_{\rm ini}$ in $\sP_2(\hR^d)$. To be more precise, for fixed $k\in\hN$, 
let $J^k(\alpha)$ be the cost functional  in Problem \ref{softSBP}. Applying Lemma \ref{lemma:3.1}, for any $\m\in\sD_\mi$, we have
\begin{align}
\begin{split}
\label{Jka}
J^k(\alpha)&=\hE \Big[ \frac{1}{2} \int_0^T |\alpha_s|^2 d s  + k G(\hP_{X^\alpha_T};\mt)\Big]\\
&\ge    k G(\hP_{X^\alpha_T};\mt) +  \hE[\log \rho^\m(X^\alpha_T)]- \hE[\log h^\m(0, X^\alpha_0)]
\end{split}
\end{align}
and the equality holds when $\alpha_t = \a^\m_t=\nabla \log h^\m(t, X^{\alpha^\m}_t)$ and $X^{\a^\m}_T\sim \m$.  
Our main goal of this
section is to determine the probability measure $\widehat \m$, such that $\widehat\alpha_t = \nabla \log h^{\widehat\m}(t, X^{\widehat\alpha}_t)$ is the optimal control to Problem \ref{softSBP}, where $h^{\h\m}(t,x) = \hE_{t,x}[ \rho^{\h\m}(X_T)]$.

We now give the heuristic idea of the construction of "solution mapping" $\G$. Let $\mi$ be given. For any $\m\in\sP_2(\hR^d)$, 
first apply Lemma \ref{lemma:3.1} to get the feedback control
$ \alpha^\mu_t = \nabla \log h^\mu(t,X^{\alpha^\mu}_t)$ so that  $\hP_{X^{\alpha^\mu}_T} = \mu$ and
\bea
\label{Jamu}
J(\alpha^\m) = \hE[\log \rho^\m(X^{\alpha^\m}_T)]- \hE[\log h^\m(0,X^{\alpha^\m}_0)].  
\eea
In what follows we fix $k\in\hN$. To find the $\widehat\m^k$ such that $J^k(\a^{\widehat\m^k})=\inf J^k(\a)$, we consider a mapping: 
$\G_2:\hL^1(\hR^d) \to \sP_2(\hR^d)$ by $\G_2(\rho^\m)=\m'$ where
\begin{equation}
\label{mu'}
\mu' = \argmin_{\bar\mu\in \sP_2(\hR^d)}\Big\{k G(\bar{\mu})  +\int_{\hR^d} \log \rho^\mu (y) \bar{\mu}(dy)
\Big\} .
\end{equation}
Finally, we define $\Gamma =\Gamma_2\circ\Gamma_1:\sP_2(\hR^d)\to\sP_2(\hR^d)$, and we shall argue that the mapping $\G$ has a fixed point $\widehat\m\in\sE$, where $\sE$ is defined by (\ref{sE}). Clearly, if $\G(\widehat \m)=\widehat\m$, then we can still define $\widehat\a=\a^{\widehat\m}$, and by Lemma \ref{lemma:3.1}  we have $\hP_{X_T^{\a^{\widehat\m}}}=\widehat\m$. Thus by (\ref{Jamu}), for any $\a \in\hL^2_{\hF^0}([0,T];\hR^d)$ we have 
\beaa
J^k(\widehat\a)&=&J(\widehat\a)+k G(\widehat\m) =k G(\widehat\m)  +\hE[\log \rho^{\widehat\m}(X^{\widehat\alpha}_T)]- \hE[\log h^{\widehat\m}(0,X^{\widehat\alpha}_0)]\\
&\le &k G(\hP_{X^\a_T})  +\hE[\log \rho^{\widehat\m}(X^{\alpha}_T)]- \hE[\log h^{\widehat\m}(0,X^{\alpha}_0)]\le J^k(\a).
\eeaa
Here in the above the first inequality is due to (\ref{mu'}), and the last inequality is due to (\ref{Jka}). This shows that $\widehat\a$ is the minimizer of $J^k(\cd)$.

We now show that the set $\sE\subset \sD_{\mi}$ defined by (\ref{sE}) satisfies all the necessary properties, thanks to the Lemma \ref{lem:cK} and Proposition \ref{stable} that we established in the last section, so that the mapping $\G$  possesses a fixed point on $\sE$ by Schauder's fixed-point theorem. 
Our main result is as follows.
\begin{Theorem}
\label{thm:sE}
Assume that Assumptions 
\ref{assum:g} 
is in force. Consider the set $\sE$  defined by (\ref{sE}). Then the following hold:

\ms
(i) $\sE$  is convex and closed under Prohorov metric, and $\cS_{\sE}$ is convex and closed in $\hL^1(\hR^d)$;

\ms
(ii) $\G(\sE)\subseteq \sE$, and is precompact in $\sP_2(\hR^d)$,  under both Prohorov and Wasserstein metric;

\ms

(iii) $\G$ is continuous on $\sE$, under Prohorov metric.

\ms
Consequently, the mapping $\G$ has a fixed point in $\sE$.
\end{Theorem}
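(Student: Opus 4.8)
The plan is to verify the hypotheses of Schauder's fixed-point theorem, carried out on the convex set $\cS_{\sE}\subset\hL^1(\hR^d)$ rather than on $\sE$ directly: by Lemma~\ref{lem:cK}-(iii) together with the elementary fact that $\hL^1$-convergence of densities forces weak convergence, the identification $\m\leftrightarrow f_\m$ is a homeomorphism between $(\sE,\text{Prohorov})$ and $(\cS_{\sE},\|\cdot\|_{\hL^1})$, so a fixed point of the induced map $\widetilde\G(f_\m):=f_{\G(\m)}$ on $\cS_{\sE}$ is exactly a fixed point of $\G$ on $\sE$. For part~(i), convexity of $\sE$ is immediate because the density of $\l\m_1+(1-\l)\m_2$ is $\l f_{\m_1}+(1-\l)f_{\m_2}$, and both the two-sided bound $e^{-C|x|^2}\le f_\m(x)\le e^{C|x|^2}$ from Assumption~\ref{assum:g}-(i) and the bound $\|f_\m/g^2\|_\infty\le K$ from Assumption~\ref{assum:g}-(ii) are stable under convex combination; for closedness, $\m_n\Rightarrow\m$ with $\m_n\in\sE$ gives $f_{\m_n}\to f_\m$ in $\hL^1$ by Lemma~\ref{lem:cK}-(iii), hence a.e.\ along a subsequence, so the pointwise bounds pass to the limit, while finiteness of the second moment of $\m$ follows from the uniform integrability in Lemma~\ref{lem:cK}-(ii). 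The statements for $\cS_{\sE}$ in $\hL^1$ are the verbatim translation, since an $\hL^1$-limit of probability densities is a probability density.

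For part~(ii), fix $\m\in\sE$. From \eqref{rhopsi}, $\rho^\m=e^{\psi^\m}f_\m$, and combining Assumption~\ref{assum: psi}-(i) with Assumption~\ref{assum:g} and the a priori bounds \eqref{est_potential}--\eqref{cond_mu0}, one gets $0\le\rho^\m(y)\le K\eta(y)$ with $\eta\in\hL^1$, so $\G_1(\m)=\rho^\m\in\hL^1$ and, crucially, $|\log\rho^\m(y)|=|\psi^\m(y)+\log f_\m(y)|\le C'|y|^2$ \emph{uniformly over $\m\in\sE$}. The minimization \eqref{mu'} defining $\G_2(\rho^\m)=\m'$ then has a solution by a direct-method argument in the spirit of the proof of Proposition~\ref{existak}: along a minimizing sequence $\{\bar\m_j\}$ the quadratic control $\int\log\rho^\m\,d\bar\m_j\ge -C'\|\bar\m_j\|^2$ together with the coercivity of $G$ (Assumption~\ref{assump:general penalty term}-(i)) keeps the sequence tight, and continuity of $G$ (Definition~\ref{penalty}) plus the domination $|\log\rho^\m(y)|f_{\bar\m_j}(y)\le C'K|y|^2g^2(y)\in\hL^1$ gives lower semicontinuity of the objective. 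To see $\m'\in\sE$ one reads off $f_{\m'}$ from the first-order optimality condition for \eqref{mu'}: it is expressed through $\rho^\m$ and the multipliers enforcing unit mass and, through $G$, proximity to $\mt$, and the two-sided estimate $0\le\rho^\m\le K\eta$ propagates to the bounds of Assumption~\ref{assum:g} for $f_{\m'}$. Precompactness is then free: $\sE$ is closed by part~(i) and relatively compact in $(\sP_2,W_2)$ by Lemma~\ref{lem:cK}-(ii) (tightness plus uniform integrability of second moments, via \cite[Theorem~7.12]{Villani}), hence compact, so $\G(\sE)\subset\sE$ is precompact in both the Prohorov and the $W_2$ topology.

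For part~(iii), write $\G=\G_2\circ\G_1$. Continuity of $\G_1:(\sE,\text{Prohorov})\to(\hL^1,\|\cdot\|_{\hL^1})$ is exactly Proposition~\ref{stable}. For $\G_2$, let $\rho_n=\rho^{\m_n}\to\rho=\rho^\m$ in $\hL^1$ and set $\bar\m_n=\G_2(\rho_n)$; since the minimizers lie in the compact set $\sE$, a subsequence satisfies $\bar\m_n\Rightarrow\bar\m$, and passing to a further subsequence so that $\rho_n\to\rho$ and $f_{\bar\m_n}\to f_{\bar\m}$ a.e., the uniform domination by $C'K|y|^2g^2(y)\in\hL^1$ yields $\int\log\rho_n\,d\bar\m_n\to\int\log\rho\,d\bar\m$ by dominated convergence, while $G(\bar\m_n)\to G(\bar\m)$ by continuity. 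A standard $\G$-convergence argument — the functionals $kG(\cdot)+\int\log\rho_n\,d(\cdot)$ converge continuously to $kG(\cdot)+\int\log\rho\,d(\cdot)$ and are equi-coercive, using the coercivity of $G$ and the uniform bound $|\log\rho_n|\le C'|y|^2$ — forces $\bar\m$ to minimize \eqref{mu'} for $\rho$; uniqueness of the minimizer identifies $\bar\m=\G_2(\rho)$ and upgrades subsequential convergence to full convergence. Composing, and invoking Lemma~\ref{lem:cK}-(iii) once more since $\G(\sE)\subset\sE$ by part~(ii), $\widetilde\G$ is a continuous self-map of $\cS_{\sE}$.

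To conclude, $\cS_{\sE}$ is a nonempty, convex, $\hL^1$-closed subset of the Banach space $\hL^1(\hR^d)$, $\widetilde\G$ maps it continuously into itself, and $\widetilde\G(\cS_{\sE})$ is relatively compact in $\hL^1$ — the Prohorov-compactness of $\sE$ from part~(ii) transfers to $\hL^1$-compactness of $\cS_{\sE}$ through the homeomorphism $\m\leftrightarrow f_\m$. Schauder's fixed-point theorem then produces $f_{\widehat\m}\in\cS_{\sE}$ with $\widetilde\G(f_{\widehat\m})=f_{\widehat\m}$, i.e.\ a $\widehat\m\in\sE$ with $\G(\widehat\m)=\widehat\m$. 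I expect the genuine difficulties to be, first, proving that the minimizer $\m'=\G_2(\rho^\m)$ of \eqref{mu'} actually belongs to $\sE$ — which seems to require the explicit optimality condition for \eqref{mu'} fed with the quantitative bounds $0\le\rho^\m\le K\eta$, not a soft compactness argument — and second, the equi-coercivity needed to pass the minimizers through the $\G$-convergence step, where the uniform quadratic control $|\log\rho^\m(y)|\le C'|y|^2$ on $\sE$ is the decisive estimate.
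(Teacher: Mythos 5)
Your overall strategy is the same as the paper's: Schauder's fixed-point theorem applied to $\G=\G_2\circ\G_1$, with Proposition~\ref{stable} supplying continuity of $\G_1$ and a $\Gamma$-convergence argument supplying continuity of $\G_2$. Carrying the argument on $\cS_{\sE}\subset\hL^1(\hR^d)$ via the homeomorphism $\m\leftrightarrow f_\m$ (Lemma~\ref{lem:cK}-(iii) in one direction, $\|f_{\m_n}-f_\m\|_{\hL^1}\to0\Rightarrow\m_n\Rightarrow\m$ in the other) is a harmless reframing. Your parts (i) and (iii) are consistent with the paper's, and your precompactness argument in (ii) — tightness plus uniform integrability of second moments via Lemma~\ref{lem:cK}-(ii) and \cite[Theorem~7.12]{Villani} — is if anything cleaner than the paper's route through $\hL^2$-boundedness, Banach--Alaoglu and weak $\hL^2$-compactness.

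The genuine divergence is in showing $\G(\sE)\subseteq\sE$. The paper handles this by tacitly \emph{restricting} the $\argmin$ in \eqref{mu'} to $f_{\bar\m}\in\cS_{\sE}$ (the ``we rewrite \eqref{mu'} as'' step), so that membership of $f_{\m'}$ in the convex, $\hL^1$-closed set $\cS_{\sE}$ is immediate. You instead try to prove that the minimizer of \eqref{mu'} over \emph{all} of $\sP_2(\hR^d)$ automatically lies in $\sE$, by appealing to an unstated first-order optimality condition and ``propagating'' the bounds $0\le\rho^\m\le K\eta$ to $f_{\m'}$. As you yourself flag, this step is not carried out, and nothing in the proposal actually verifies the two bounds of Assumption~\ref{assum:g} for $f_{\m'}$; the direct-method existence argument does not give pointwise control on the minimizing density. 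So this is a real gap. The cleanest fix, and the one matching the paper's intent, is to take $\G_2$ to be the $\argmin$ over $\sE$ by definition, so the self-mapping is free; whether this restricted minimizer also minimizes over all of $\sP_2(\hR^d)$ (which the heuristic preceding the theorem requires for the eventual optimality of $\widehat\a$) is a separate question that the paper leaves implicit and your version makes visible.

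One further minor point worth noting: your identification of the limit $\bar\m=\G_2(\rho)$ invokes ``uniqueness of the minimizer,'' and the paper's coercivity claim for $\{F^k_n\}$ invokes ``$G$ is convex''; neither convexity nor strict convexity of $G$ is among the stated assumptions (Definition~\ref{penalty}, Assumption~\ref{assump:general penalty term}), so in both versions this is an extra hypothesis being used silently.
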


\begin{proof} Since the last statement is a direct consequence of Schauder's fixed point theorem, applying to the space $\sP(\hR^d)$ with Prohorov metric,  we need only prove the properties (i)-(iii).

\ms
(i) is obvious.

\ms
(ii)  By definition of $\cS_{\sE}$ we rewrite \eqref{mu'} as
\begin{equation*}
\label{mu'1}
f_{\mu'}= \argmin_{f_{\bar\m}\in \cS_{\sE}}\Big\{k G({\bar\m})  +\int_{\hR^d} \log \rho^\mu (y) f_{\bar\m}(y)dy
\Big\} .
\end{equation*}
Since $\cS_{\sE}$ is  convex and closed in $\hL^1(\hR^d)$, it follows that $f_{\m'}\in \cS_\sE$, and thus $\G(\sE)\subseteq \sE$. 
We are to show that $\G(\sE)$ is precompact in $\sP_2(\hR^d)$.  

To this end, let $\{\m_n\}\subseteq \G(\sE)$ be any sequence, we shall find a subsequence $\{\m_{n_k}\}_{k\ge1}$ such that $\lim_{k\to\infty}\m_{n_k}=\m\in \sP_2(\hR^2)$, under both Prohorov metric and $W_2$-metric. Since $\G(\sE)\subseteq \sE$, by Lemma \ref{lem:cK}-(i), $\{f_{\m_n}\}$ is bounded in
$\hL^2(\hR^d)$. Thus by Banach-Alaoglu Theorem and noting that $\hL^2$ is reflexive, $\{f_{\m_n}\}$ is weakly compact, that is, there exists a subsequence $\{f_{\m_{n_k}}\}$ such that $ f_{\m_{n_k}}\limw f_\m$  in  $\hL^2(\hR^d)$, as $k\to\infty$. But this amounts to 
saying the $\m_{n_k} \Rightarrow \m$ in Prohorov metric. This, together with Lemma \ref{lem:cK}-(ii) and  the relationship between Wasserstein distance and weak convergence (see, e.g., \cite[Theorem 7.12]{Villani}), leads to that $\lim_{k\to\infty}\m_{n_k}=\m$
in $\sP_2(\hR^d)$, proving (ii). 

\ms

(iii) Let us assume that $\{\m_n\}\subset \sE$ such that $\m_n\Rightarrow \m$ in Prohorov metric.
The stability result in Proposition \ref{stable} shows that  $\rho^{\m_n}=\G_1(\m_n)\to \G_1(\m)=\rho^\m\in  \cS$ in $ \hL^1(\hR^d)$. Next, we show that $\G_2(\rho^{\m_n})\Rightarrow \G_2(\rho^\m)$ in Prohorov metric. Recall the definition  of $\G_2$, we define a family of 
functionals on $\sE$: for each $k, n\in\hN$ and $\bar{\mu}\in\sE$, 
\bea
\label{Fpi}
\left\{\ba{lll}
\dis	F^k_n(\bar{\mu}) := k G(\bar{\mu})  +\int_{\hR^d} \log \rho^{\m_n}(y) \bar{\mu}(dy);\ms\\
\dis	F^k(\bar{\mu}): = k G(\bar{\mu})  +\int_{\hR^d} \log \rho^\m(y) \bar{\mu}(dy).
\ea\right.
\eea
Then $ \m_n' =\G_2(\rho^{\m_n}):=\argmin_{\bar{\mu}\in\sE}F^k_n(\bar{\mu})$ and $\m'=\G_2(\rho^{\m}):=\argmin_{\bar{\mu}\in\sE}F^k(\bar{\mu})$.

To show that the minimizers $\m'_n\Rightarrow \m'$, we shall invoke 
the notion of $\G$-convergence (cf. \cite{DalMaso}). To be more precise, to prove the sequence $\{F^k_n\}_{n\ge 1}$  $\G$-converge to 
$F^k$ as $n\to \infty$, it is sufficient to show
\bea
\label{GammaConv}
\left\{\ba{lll}
\mbox{For every sequence $\bar{\mu}_n \Rightarrow \bar{\mu}$, it holds that $\dis F^k(\bar{\mu})\leq \liminf_{n} F^k_n(\bar{\mu}_n)$}; \ms\\
\mbox{There exists a sequence $\bar{\mu}_n \Rightarrow\bar{\mu}$, such that $\dis F^k(\bar{\mu})\geq \limsup_{n} F^k_n(\bar{\mu}_n)$.}
\ea\right.
\eea     

Now, note that $G$ is convex and $\sE$ is compact under Prohorov metric, we see that both $\{F^k_n\}$ and $F^k$ are {\it coercive} (in the sense that there exists minimizing sequence in $\sE\subset \sP_2(\hR^d)$). Thus, in light of the $\G$-convergence result (see  \cite[Theorem 7.1]{DalMaso}), in order to show $\m_n'\Rightarrow \m'$, it suffices to check that $\{F^k_n\}$  $\Gamma$-converges to $F^k$,  $k\in\hN$. To see this, let  $\{\bar{\mu}_n\}\subset \sE$ such that 
$\bar{\mu}_n\Rightarrow \bar{\mu}$, we have
\beaa
\label{rhoconv}
&&\Big|\int_{\hR^d} (\log \rho^{\m_n}(y)\bar{\mu}_n(dy) - \int_{\hR^d}\log \rho^\m(y)\bar{\mu}(dy)\Big|  \nonumber\\
&\le&  \int_{\hR^d} |\log \rho^{\m_n}(y) - \log \rho^\m(y)|f_{\bar\m_n}(y)(dy) +\int_{\hR^d} |\log \rho^{\m}(y)||f_{\bar{\mu}_n}(y) - f_{\bar{\mu}}(y)|dy
=:I_1+I_2, 
\eeaa
where $I_1$ and $I_2$ are the two integrals on the right side above. We now prove that both $I_1$ and $I_2$ converge to 0, as $n\to\infty$. To see this, first note that by \eqref{rhopsi} we have 
\begin{eqnarray*}
I_1& \le& \int_{\hR_d}|\psi^{\m_n}(y)-\psi^\mu(y)|f_{\bar\m_n(y)}dy + \int_{\hR_d}|\log f_{\m_n}(y) - \log f_{\m}(y)|f_{\bar\m_n(y)}dy\\
&\le& W_2(\m_n,\m)\int_{\hR_d}f_{\bar\m_n(y)}dy + K\int_{\hR_d}|\log f_{\m_n}(y) - \log f_{\m}(y)| g(y)dy =: I_{11}+I_{12},
\end{eqnarray*}
where the last inequality is due to \eqref{represent} and Assumption \ref{assum:g}. Since $\m_n\Rightarrow \m$,  again by Lemma \ref{lem:cK}-(ii) and \cite[Theorem 7.12]{Villani}, we have  $W_2(\mu_n,\m)\to 0$. Hence $I_{11} \to 0$ as $n\to \infty$. Furthermore, by Assumption \ref{assum:g}-(i) we see that, for  $\mu\in\sE$, one has $|\log f_\m(y)|\leq C|y|^2$ for some constant $C>0$,  that is,  $|\log f_{\m_n}(y) - \log f_{\m}(y)| g(y) \leq 2C|y|^2 g(y)$ for $y\in\hR^d$.  On the other hand, applying Lemma \ref{lem:cK}-(iii), we have $f_{{\mu}_n}\to f_{{\mu}}$ in 
$\hL^1$.  Thus, possibly along a subsequence (may assume itself, by virtue of Heine's Lemma),  it holds $\log f_{{\mu}_n}(y)\to \log f_{{\mu}}(y)$,  a.e. $y\in\hR^d$, thanks to the continuity of $\log$ function. Therefore, by Assumption \ref{assum:g} and Dominated Convergence Theorem, we see that $I_{12}\to 0$ as $n\to \infty$, whence $I_{1}\to 0$ as $n\to \infty$. Similarly, by \eqref{rhopsi}, \eqref{estpsi}, and Assumption \ref{assum:g}-(i), we have
\begin{eqnarray*}
I_2 &\leq& \int_{\hR^d}|\psi^{\mu}(y)||f_{\bar\m_n}(y)-f_{\bar\m}(y)|dy + \int_{\hR^d}|\log f_{\mu}(y)||f_{\bar\m_n}(y)-f_{\bar\m}(y)|dy\\
&\leq& 2C\int_{\hR^d}|y|^2|f_{\bar\m_n}(y)-f_{\bar\m}(y)|dy \to 0, \q \mbox{as $n\to \infty$}, 
\end{eqnarray*}
again thanks to Assumption \ref{assum:g}-(ii) and the Dominated Convergence Theorem. Finally, since $G(\cd)$ is continuous on $\sP_2(\hR^d)$, by definition (\ref{Fpi}) we see that $F^k_n(\bar{\mu}_n)\to F^k(\bar{\mu})$ whenever $\bar{\mu}_n\Rightarrow \bar{\mu}$. Thus,  by \eqref{GammaConv}, $\{F^k_n\}$ $\G$-converges to $F^k$, as $n\to\infty$. The proof is now complete.
\end{proof}

Finally, we shall establish an 
analogue of Theorem \ref{thm:convergence_opt_policy} in the case of general $\mi\in \sP_2(\hR^d)$. For technical convenience, in what follows we shall make use of the following extra assumptions to facilitate our discussion. Recall the function $\xi$ and $\eta$ defined by (\ref{est_potential}) and (\ref{cond_mu0}), respectively. 
\begin{Assumption}
\label{assump6}
(i) The penalty function $G$ satisfies  $G(\mu;\mu_{\rm tar})\geq W_2(\mu,\mu_{\rm tar})$;

\ms

(ii) In Assumption \ref{assump:general penalty term}-(ii), the function $\phi$ satisfies $\|\phi(\cd) e^{\xi(\cd)}\|_\infty<\infty$;

\ms 

(iii) For any $R>0$, there exists $M_R>0$, such that 
\bea
\label{assum:eta}
\hE[\eta^2(X^{t,x}_T)]=\int_{\hR^d} \eta^2(y)p(T, y; t,x) dy\le M_R, \qq (t,x)\in[0,T]\times B_R, 
\eea
where $B_R:=\{x\in\hR^d: |x|\le R\}$. 
\qed
\end{Assumption}
\begin{Remark}
{\rm (1) Assumption \ref{assump6}-(i) is not overly restrictive, and can be justified by Example \ref{G2}. 

\ms
\no(2) Note that the function $\phi$ in Assumption \ref{assump:general penalty term}-(ii) satisfies $\phi(y) e^{\l |y|^2}\le C$, for some $c>0$ and  $\xi(y)\sim c|y|^2$, as $|y|\to\infty$, Assumption  \ref{assump6}-(ii) amounts to saying that $\phi$ and $\xi$ are compatible. 

\ms
\no(3) While Assumption \ref{assump6}-(iii) is slightly stronger than the requirement (\ref{cond_mu0}), it would be trivial if the mapping $(t,x)\mapsto \hE[\eta^2(X^{t,x}_T)]$ is continuous, which is by no means stringent. 
\qed}
\end{Remark}

Our last result of this section is the following.
\begin{Theorem}
\label{thm:convergence_rate_general_initial}
Assume that the Assumptions \ref{assump:general penalty term}, \ref{assum:g} and \ref{assump6} are in force. Let $\mi\in \sE$ be given, and  
let $\widehat\alpha(t,x)$ and $\widehat\alpha^{k}(t,x)$, $(t,x)\in[0,T]\times\hR^d$ are optimal controls given in Proposition \ref{existak} and Lemma \ref{lemma:3.1}, respectively. Then, for any $R>0$, there exists $C_R>0$, such that for any $k\in\hN$, it holds that
\bea
\label{alphaest}
\int_0^T |\widehat\alpha^k(t,x) - \widehat\alpha(t,x)| dt   \leq \frac{C_R} {k}, \qq (t,x)\in[0,T]\times B_R. 
\eea
\end{Theorem}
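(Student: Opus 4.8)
The plan is to replicate the architecture of the proof of Theorem~\ref{thm:convergence_opt_policy}. Write the SBP optimal control as $\widehat\alpha=\nabla\log h^{\mt}$ with $h^{\mt}(t,x)=\hE_{t,x}[\rho^{\mt}(X_T)]$, and the SCSBP optimal control at level $k$ as $\widehat\alpha^k=\alpha^{\widehat\mu^k}=\nabla\log h^{\widehat\mu^k}$, where $\widehat\mu^k\in\sE$ is the fixed point of $\Gamma$ from Theorem~\ref{thm:sE}, so that $\hP_{X^{\widehat\alpha^k}_T}=\widehat\mu^k$ and $\widehat\alpha^k$ is indeed optimal for $J^k$ (the discussion preceding Theorem~\ref{thm:sE}). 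The algebraic decomposition behind~\eqref{estalpha} then gives, for every $(t,x)$,
\[
|\widehat\alpha^k(t,x)-\widehat\alpha(t,x)|\le \frac{|\nabla h^{\widehat\mu^k}(t,x)-\nabla h^{\mt}(t,x)|}{h^{\widehat\mu^k}(t,x)}+|\nabla h^{\mt}(t,x)|\,\frac{|h^{\mt}(t,x)-h^{\widehat\mu^k}(t,x)|}{h^{\widehat\mu^k}(t,x)\,h^{\mt}(t,x)}.
\]
So the statement reduces to three ingredients on $[0,T]\times B_R$: a $1/k$ bound on $\rho^{\widehat\mu^k}-\rho^{\mt}$ in a form that remains integrable against the transition kernel; Bismut--Elworthy--Li bounds for $\nabla h^{\mt}$ and $\nabla h^{\widehat\mu^k}-\nabla h^{\mt}$; and a lower bound for $h^{\mt}$ and $h^{\widehat\mu^k}$ that is uniform in $k$. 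The new difficulty relative to the delta-initial case is that $\widehat\mu^k$ is known only as a fixed point, so its distance to $\mt$ must be quantified first.

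\emph{Step 1 (linear rate for $\widehat\mu^k$).} Since $\widehat\alpha^k$ minimizes $J^k$ while $\widehat\alpha$ is admissible with $\hP_{X^{\widehat\alpha}_T}=\mt$ (hence $G(\mt)=0$), I would argue
$kG(\widehat\mu^k)\le J(\widehat\alpha^k)+kG(\widehat\mu^k)=J^k(\widehat\alpha^k)\le J^k(\widehat\alpha)=J(\widehat\alpha)$, and $J(\widehat\alpha)<\infty$ by~\eqref{valuefunction_schrodinger_general_init} (Proposition~\ref{lemma:sdb_general_init}). Thus $G(\widehat\mu^k)\le J(\widehat\alpha)/k$, and Assumption~\ref{assump6}-(i) upgrades this to $W_2(\widehat\mu^k,\mt)\le G(\widehat\mu^k)\le C/k$. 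Feeding this into the Schr\"odinger-potential stability estimate~\eqref{represent} (legitimate since $\bc=-\log p(1,\cdot\,;0,\cdot)\in\hC^2$) gives $\|\psi^{\widehat\mu^k}-\psi^{\mt}\|_{\hL^\infty}\le C\,W_2(\widehat\mu^k,\mt)\le C/k$. This is the step I expect to be the main obstacle — both extracting the \emph{rate} $O(1/k)$ from the fixed-point/optimality characterization and propagating it through~\eqref{represent}; the resulting potential gap has no analogue in Theorem~\ref{thm:convergence_opt_policy} and is exactly what forces Assumptions~\ref{assum: psi} and~\ref{assump6}.

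\emph{Step 2 (pointwise control of $\rho^{\widehat\mu^k}-\rho^{\mt}$).} Using $\rho^\mu=e^{\psi^\mu}f_\mu$ from~\eqref{rhopsi}, split
\[
|\rho^{\widehat\mu^k}(y)-\rho^{\mt}(y)|\le \big|e^{\psi^{\widehat\mu^k}(y)}-e^{\psi^{\mt}(y)}\big|\,f_{\widehat\mu^k}(y)+e^{\psi^{\mt}(y)}\,\big|f_{\widehat\mu^k}(y)-f_{\mt}(y)\big|.
\]
For the first term, $|e^{a}-e^{b}|\le e^{a\vee b}|a-b|$ together with Step~1 (so $\psi^{\widehat\mu^k}\le\psi^{\mt}+1\le\xi+1$ for $k$ large, by~\eqref{est_potential}) and Assumption~\ref{assum:g}-(ii) ($f_{\widehat\mu^k}\le Kg^2$) bounds it by $\frac{C}{k}e^{\xi(y)}g^2(y)=\frac{C}{k}\eta(y)$; for the second term, $\psi^{\mt}\le\xi$, Assumption~\ref{assump:general penalty term}-(ii) ($|f_{\widehat\mu^k}-f_{\mt}|\le C\phi\,G(\widehat\mu^k)\le \frac{C}{k}\phi$) and Assumption~\ref{assump6}-(ii) ($\|\phi e^{\xi}\|_\infty<\infty$) give the bound $\frac{C}{k}$. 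Hence $|\rho^{\widehat\mu^k}(y)-\rho^{\mt}(y)|\le\frac{C}{k}(\eta(y)+1)$ for $k$ large; moreover $\rho^{\mt}\le K\eta$, and since $\mt,\widehat\mu^k\in\sE$, Assumptions~\ref{assum:g}-(i) and~\ref{assum: psi}-(i) force $\rho^{\mt}(y),\rho^{\widehat\mu^k}(y)\ge e^{-C|y|^2}$.

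\emph{Step 3 (transfer to $h$ and conclude).} On $[0,T]\times B_R$, Assumption~\ref{assump6}-(iii) with Cauchy--Schwarz gives $\hE_{t,x}[\eta(X_T)]\le\sqrt{M_R}$, so $|h^{\widehat\mu^k}(t,x)-h^{\mt}(t,x)|\le\hE_{t,x}|\rho^{\widehat\mu^k}(X_T)-\rho^{\mt}(X_T)|\le C_R/k$; the Bismut--Elworthy--Li representation~\eqref{bel}, with $\hE|N^{t,x}_T|^2\le C/(T-t)$ and $\hE_{t,x}[(\eta(X_T)+1)^2]\le C_R$, yields $|\nabla h^{\widehat\mu^k}(t,x)-\nabla h^{\mt}(t,x)|\le C_R/(k\sqrt{T-t})$ and, using $\rho^{\mt}\le K\eta$, $|\nabla h^{\mt}(t,x)|\le C_R/\sqrt{T-t}$. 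For the denominators, $\rho^{\mt},\rho^{\widehat\mu^k}\ge e^{-C|y|^2}$ together with the Gaussian lower bound in~\eqref{aronson} yields $h^{\mt}(t,x),h^{\widehat\mu^k}(t,x)\ge\delta_R>0$ on $[0,T]\times B_R$, uniformly in $k$. Substituting into the decomposition above gives $|\widehat\alpha^k(t,x)-\widehat\alpha(t,x)|\le C_R/(k\sqrt{T-t})$, and integrating in $t$ gives $\int_0^T|\widehat\alpha^k(t,x)-\widehat\alpha(t,x)|\,dt\le 2C_R\sqrt{T}/k$, which is~\eqref{alphaest}. The bookkeeping point to watch throughout is that each estimate must be dominated, uniformly on $[0,T]\times B_R$, by a function integrable against $p(T,\cdot\,;t,x)$ — which is precisely the purpose of the envelope $\eta$ and of Assumption~\ref{assump6}-(iii).
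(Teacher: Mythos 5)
Your proposal is correct and follows essentially the same route as the paper's proof: reduce to the three ingredients (an $O(1/k)$ bound on $\rho^{\widehat\mu^k}-\rho^{\mt}$ weighted by $\eta$, Bismut--Elworthy--Li bounds, and a uniform lower bound for $h^{\widehat\mu^k}$ and $h^{\mt}$ on $[0,T]\times B_R$), with the central new input being $G(\widehat\mu^k)\le C/k$, the potential stability estimate~\eqref{represent}, and the decomposition $\rho^\mu = e^{\psi^\mu}f_\mu$ from~\eqref{rhopsi}. Two small divergences are worth flagging. First, you obtain $G(\widehat\mu^k)\le C/k$ by comparing $J^k(\widehat\alpha^k)\le J^k(\widehat\alpha)=J(\widehat\alpha)$, which the paper leaves implicit (it just re-cites~\eqref{Gbound}); your explicit derivation is the right one in the general-initial case and is welcome. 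Second, for the denominator lower bound $h^{\widehat\mu^k}\ge\delta_R$, you deduce a pointwise lower bound $\rho^{\widehat\mu^k}(y)\ge e^{-C|y|^2}$ from Assumptions~\ref{assum:g}-(i) and~\ref{assum: psi}-(i) and push it through Aronson's lower bound~\eqref{aronson}, getting a bound uniform in $k$; the paper instead bounds $h$ below on $[0,T]\times B_R$ by continuity and positivity and then transfers to $h^{\widehat\mu^k}$ via the uniform convergence $\|h^{\widehat\mu^k}-h\|_{\hL^\infty([0,T]\times B_R)}\le C_R/k$ from~\eqref{eq63}, which is valid for $k$ large with no appeal to Assumption~\ref{assum: psi}-(i). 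Your version is cleaner in that it does not require $k$ large, but note that Assumption~\ref{assum: psi}-(i) is not among the theorem's stated hypotheses, so the paper's route is more economical; either way the conclusion holds (for small $k$ one can absorb the finitely many cases into $C_R$).
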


\begin{proof}

We begin by denoting $\mu_k:=\hP_{X_T^{\widehat\a^k}}$, $\mt:=\hP_{X_T^{\widehat\a}}$, and let $\rho^{\m_k}=\G_1(\m_k)$, $k\in\hN$, 
$\rho^\mt=\G_1(\mt)$, respectively, as we defined before. Next, applying  \eqref{rhopsi}, we have
\begin{align}\begin{split}\label{eq62}
	|\rho^{\m_k}(y)-\rho^\mt(y)| &= |e^{\psi^{\m_k}(y)}f_{\m_k}(y)- e^{\psi^{\mt}(y)}f_{\rm tar}(y)| \\
	& \leq e^{\psi^{\mt}(y)} |f_{\m_k}(y) - f_{\rm tar}(y)| + f_{\m_k}(y) | e^{\psi^{\m_k}(y)} - e^{\psi^{\mt}(y)}|.
\end{split}
\end{align}

Let us now recall some facts from {Section \ref{sec:conv_opt_policy}}. First, note that the optimality of $\m_k$ implies that $G(\m_k)\leq \frac{C}{k}$ (cf. (\ref{Gbound})), for some generic constant $C>0$ independent of $k$, which we shall allow to vary from line to line below. Thus, by virtue of Assumption \ref{assump:general penalty term}-(ii), we can write 
$$|f_{\m_k}(y) - f_{\rm tar}(y)|\leq \frac{C}{k}\phi(y),  \qq y\in\hR^d,
$$  
where $\phi(y)e^{\xi(y)}\le C$, $y\in\hR^d$, thanks to Assumption \ref{assump6}-(ii).  Furthermore, under Assumption \ref{assump6}-(i), we can assume without loss of generality that the Schr\"odinger potentials $\psi^{\m_k}$ and $\psi^{\mt}$ all satisfy estimates \eqref{represent} and \eqref{est_potential}. Consequently, by Assumption \ref{assump6}-(i) and an easy application of Lemma \ref{psicontrol} and Newton-Leibniz formula we have
\begin{align*}
| e^{\psi^{\mt}(y)} - e^{\psi^{\m_k}(y)}|&= | {\psi^{\mt}(y)} - {\psi^{\m_k}(y)}|\int_0^1 \exp\{\psi^\mt(y) + \theta (\psi^{\m_k}(y) -\psi^\mt(y)\} d\theta \\
&\leq Ce^{\xi(y)}  W_2(\m_k,\mt) \leq C e^{\xi(y)} G(\m_k) \leq \frac{C}{k}e^{\xi(y)},  \q y\in\hR^d.
\end{align*}
Summarizing above and recalling  Assumption \ref{assum:g} and \eqref{cond_mu0},  we derive from (\ref{eq62}) that
\begin{align*}
|\rho^{\m_k}(y)-\rho^\mt(y)|\leq  \frac{C}{k}\left(e^{\psi^{\mt}(y)}\phi(y) + f_{\m_k}(y) e^{\xi(y)}\right)\leq   \frac{C}{k}\left(e^{\xi(y)}\phi(y) + g^2(y) e^{\xi(y)}\right)\leq  \frac{C}{k}(1 + \eta(y)),
\end{align*}
and therefore, given $R>0$, and $(t,x)\in [0,T]\times B_R$, we apply Assumption \ref{assump6}-(iii) to get 
\bea
\label{estrho}
\hE[|\rho^{\m_k} (X^{t,x}_T)- \rho^\mt(X^{t,x}_T)|]\le \hE[|\rho^{\m_k} (X^{t,x}_T)- \rho^\mt(X^{t,x}_T)|^2]^{\frac12}\leq \frac{C_R}{k}, 
\eea
where $C_R>0$ is some constant depending on the generic constant $C$ above and $M_R$ in (\ref{assum:eta}).

To complete the proof, let us recall that optimal strategies are of the form $\widehat\alpha^k(t,x)=\nabla \log h^{\m_k}(t,x)$, $k\in\hN$, and $ \widehat\alpha(t,x)=\nabla \log h(t,x)$, and $h^{\m_k}(t,x)$ and $h(t,x)$ are the solutions to the respective PDEs:
\bea
\label{hpde}
\begin{cases}
\partial_t h^{\m_k}  (t,x)+\scL_t h^{\m_k} (t,x)=0;\\
h^{\m_k} (T,x) =\rho^{\m_k}(x).
\end{cases}
\quad \quad
\begin{cases}
\partial_t h  (t,x)+ \scL_t h (t,x)=0;\\
h(T,x) = \rho^\mt(x),
\end{cases}
\eea
Furthermore, noting that $h^{\m_k}(t,x)=\hE[\rho^{\m_k}(X^{t,x}_T)]$, $h(t,x)=\hE[\rho^{\mt}(X^{t,x}_T)]$, and  by the Bismut-Elworthy-Li formula we have
\begin{equation*}
\nabla h^{\m_k}(t,x) = \hE[\rho^{\m_k}(X^{t,x}_T)N^{t,x}_T ]; \quad \nabla h(t,x) = \hE[\rho^{\mt}(X^{t,x}_T)N^{t,x}_T].
\end{equation*}
Thus, we have whenever $(t,x)\in[0,T]\times B_R$,
$$|\nabla h(t,x)| \leq K\hE[\eta^2(X^{t,x}_T)]^{\frac{1}{2} }\hE[|N^{t,x}_T|]^{\frac{1}{2}}\leq \frac{C_R}{\sqrt{T-t}},$$  and a similar argument as in \eqref{nablahbound} and \eqref{hkhbound}, together with the estimate  \eqref{estrho}, leads to that
\begin{align}\label{eq63}
|\nabla h^{\m_k}(t,x) - \nabla h(t,x)|\leq \frac{C_R}{k\sqrt{T-t}}; \quad |h^{\m_k}(t,x)-h(t,x)| \leq \frac{C_R}{k}, \q (t,x)\in[0,T]\times B_R.
\end{align}

Finally, we note that by definition the function $h$ is positive everywhere, and being the classical solution to the PDE (\ref{hpde}) it is continuous. Thus, given $R>0$, there exists $\d_R$>0, such that  $h(t,x)\ge \d_R$, for all $(t,x)\in[0,T]\times B_R$. Since \eqref{eq63} implies that $h^{\m_k}$ converges to $h$ uniformly on compacts, thus it must hold that
$h^{\m_k}(t,x)\geq\delta_R/2$, for $(t,x)\in[0,T]\times B_R$, and $k$  large enough. We therefore conclude, 
similar to \eqref{estalpha}, that
\begin{align*}
| \widehat\alpha^k(t,x) - \widehat\alpha(t,x)| 
&\leq \left| \frac{\nabla h^{\m_k}(t,x) - \nabla h(t,x) }{h^{\m_k}(t,x)} \right| + |\nabla h(t,x)| \left | \frac{h(t,x)  -  h^{\m_k}(t,x)}{h^{\m_k}(t,x)h(t,x)} \right|\le
\frac{C_R}{k\sqrt{T-t}}, 
\end{align*}
as $k\to\infty$,  for $(t,x)\in [0,T]\times B_R$, where $C_R$ depends on $M_R$ and $\d_R$ above, but independent of $k$. Integrating in $t$ we obtain (\ref{alphaest}). 
\end{proof}

\section{Conclusion}

We study the soft-constrained Schrödinger bridge problem (SCSBP) as a flexible alternative to the classical formulation for generative modeling. By replacing hard terminal constraints with a general penalty function, the SCSBP potentially offers greater flexibility and stability for generative AI tasks. Moreover, we establish linear convergence of both the value functions and the optimal controls as the penalty parameter tends to infinity, thereby providing a theoretical guarantee for the framework.

In future work, we will develop efficient algorithms for learning the SCSBP solutions and test the performance on benchmark generative AI tasks. This will allow us to translate the theoretical framework into practical tools, further demonstrating the potential of regularized stochastic control formulations for modern generative modeling.
\bibliographystyle{abbrv}
\bibliography{references}

\begin{thebibliography}{10}

\bibitem{acciaio2024time}
B.~Acciaio, S.~Eckstein, and S.~Hou.
\newblock Time-causal vae: Robust financial time series generator.
\newblock {\em arXiv preprint arXiv:2411.02947}, 2024.

\bibitem{alouadi2025robust}
A.~Alouadi, B.~Barreau, L.~Carlier, and H.~Pham.
\newblock Robust time series generation via schr\"odinger bridge: a
  comprehensive evaluation.
\newblock {\em arXiv preprint arXiv:2503.02943}, 2025.

\bibitem{aronson1967bounds}
D.~G. Aronson.
\newblock Bounds for the fundamental solution of a parabolic equation.
\newblock 1967.

\bibitem{bai2021understanding}
Y.~Bai, E.~Yang, B.~Han, Y.~Yang, J.~Li, Y.~Mao, G.~Niu, and T.~Liu.
\newblock Understanding and improving early stopping for learning with noisy
  labels.
\newblock {\em Advances in Neural Information Processing Systems},
  34:24392--24403, 2021.

\bibitem{benamou2024entropic}
J.-D. Benamou, G.~Chazareix, and G.~Loeper.
\newblock From entropic transport to martingale transport, and applications to
  model calibration.
\newblock {\em arXiv preprint arXiv:2406.11537}, 2024.

\bibitem{betker2023improving}
J.~Betker, G.~Goh, L.~Jing, T.~Brooks, J.~Wang, L.~Li, L.~Ouyang, J.~Zhuang,
  J.~Lee, Y.~Guo, et~al.
\newblock Improving image generation with better captions.
\newblock {\em Computer Science. https://cdn. openai. com/papers/dall-e-3.
  pdf}, 2(3):8, 2023.

\bibitem{beurling1960automorphism}
A.~Beurling.
\newblock An automorphism of product measures.
\newblock {\em Annals of Mathematics}, 72(1):189--200, 1960.

\bibitem{Bismut}
J.-M. Bismut.
\newblock {\em Large Deviation and Malliavin Calculus}, volume~45 of {\em
  Progress in Mathematics}.
\newblock Birkh\"{a}ser, 1984.

\bibitem{cao2023risk}
H.~Cao, H.~Gu, X.~Guo, and M.~Rosenbaum.
\newblock Risk of transfer learning and its applications in finance.
\newblock {\em arXiv preprint arXiv:2311.03283}, 2023.

\bibitem{carlier2024displacement}
G.~Carlier, L.~Chizat, and M.~Laborde.
\newblock Displacement smoothness of entropic optimal transport.
\newblock {\em ESAIM: Control, Optimisation and Calculus of Variations}, 30:25,
  2024.

\bibitem{chen2023score}
M.~Chen, K.~Huang, T.~Zhao, and M.~Wang.
\newblock Score approximation, estimation and distribution recovery of
  diffusion models on low-dimensional data.
\newblock In {\em International Conference on Machine Learning}, pages
  4672--4712. PMLR, 2023.

\bibitem{chen2025diffusion}
M.~Chen, R.~Xu, Y.~Xu, and R.~Zhang.
\newblock Diffusion factor models: Generating high-dimensional returns with
  factor structure.
\newblock {\em arXiv preprint arXiv:2504.06566}, 2025.

\bibitem{chen2021likelihood}
T.~Chen, G.-H. Liu, and E.~A. Theodorou.
\newblock Likelihood training of schr\"odinger bridge using forward-backward
  sdes theory.
\newblock {\em arXiv preprint arXiv:2110.11291}, 2021.

\bibitem{chen2021stochastic}
Y.~Chen, T.~T. Georgiou, and M.~Pavon.
\newblock Stochastic control liaisons: Richard sinkhorn meets gaspard monge on
  a schr\"odinger bridge.
\newblock {\em Siam Review}, 63(2):249--313, 2021.

\bibitem{chen2024regularized}
Z.~Chen, A.~Mustafi, P.~Glaser, A.~Korba, A.~Gretton, and B.~K. Sriperumbudur.
\newblock (de)-regularized maximum mean discrepancy gradient flow.
\newblock {\em arXiv preprint arXiv:2409.14980}, 2024.

\bibitem{chiarini2023gradient}
A.~Chiarini, G.~Conforti, G.~Greco, and L.~Tamanini.
\newblock Gradient estimates for the schr{\"o}dinger potentials: convergence to
  the brenier map and quantitative stability.
\newblock {\em Communications in Partial Differential Equations},
  48(6):895--943, 2023.

\bibitem{chow2022dynamical}
S.-N. Chow, W.~Li, C.~Mou, and H.~Zhou.
\newblock Dynamical schr{\"o}dinger bridge problems on graphs.
\newblock {\em Journal of Dynamics and Differential Equations}, pages 1--20,
  2022.

\bibitem{dai1991stochastic}
P.~Dai~Pra.
\newblock A stochastic control approach to reciprocal diffusion processes.
\newblock {\em Applied mathematics and Optimization}, 23(1):313--329, 1991.

\bibitem{DalMaso}
G.~Dal~Maso.
\newblock {\em Introduction to Gamma-convergence}.
\newblock Springer Science + Business Media, LLC, 1993.

\bibitem{de2021diffusion}
V.~De~Bortoli, J.~Thornton, J.~Heng, and A.~Doucet.
\newblock Diffusion schr{\"o}dinger bridge with applications to score-based
  generative modeling.
\newblock {\em Advances in Neural Information Processing Systems},
  34:17695--17709, 2021.

\bibitem{deming1940least}
W.~E. Deming and F.~F. Stephan.
\newblock On a least squares adjustment of a sampled frequency table when the
  expected marginal totals are known.
\newblock {\em The Annals of Mathematical Statistics}, 11(4):427--444, 1940.

\bibitem{divol2025tight}
V.~Divol, J.~Niles-Weed, and A.-A. Pooladian.
\newblock Tight stability bounds for entropic brenier maps.
\newblock {\em International Mathematics Research Notices}, 2025(7):rnaf078,
  2025.

\bibitem{Edward}
D.~A. Edwards.
\newblock On the kantorovish-rubinstein theorem.
\newblock {\em Expositiones Mathematicae}, 29:387--398, 2011.

\bibitem{EL}
K.~D. Elworthy and X.~M. Li.
\newblock Formulae for the derivatives of heat semigroups.
\newblock {\em Journal of Functional Analysis}, 125:252--286, 1994.

\bibitem{follmer2006random}
H.~F{\"o}llmer.
\newblock Random fields and diffusion processes.
\newblock In {\em {\'E}cole d'{\'E}t{\'e} de Probabilit{\'e}s de Saint-Flour
  XV--XVII, 1985--87}, pages 101--203. Springer, 2006.

\bibitem{fortet1940resolution}
R.~Fortet.
\newblock R{\'e}solution d'un syst{\`e}me d'{\'e}quations de m.
  schr{\"o}dinger.
\newblock {\em Journal de math{\'e}matiques pures et appliqu{\'e}es},
  19(1-4):83--105, 1940.

\bibitem{fournie1999applications}
E.~Fourni{\'e}, J.-M. Lasry, J.~Lebuchoux, P.-L. Lions, and N.~Touzi.
\newblock Applications of malliavin calculus to monte carlo methods in finance.
\newblock {\em Finance and Stochastics}, 3:391--412, 1999.

\bibitem{fu2024unveil}
H.~Fu, Z.~Yang, M.~Wang, and M.~Chen.
\newblock Unveil conditional diffusion models with classifier-free guidance: A
  sharp statistical theory.
\newblock {\em arXiv preprint arXiv:2403.11968}, 2024.

\bibitem{garg2024soft}
J.~Garg, X.~Zhang, and Q.~Zhou.
\newblock Soft-constrained schr{\"o}dinger bridge: a stochastic control
  approach.
\newblock In {\em International Conference on Artificial Intelligence and
  Statistics}, pages 4429--4437. PMLR, 2024.

\bibitem{gu2023memorization}
X.~Gu, C.~Du, T.~Pang, C.~Li, M.~Lin, and Y.~Wang.
\newblock On memorization in diffusion models.
\newblock {\em arXiv preprint arXiv:2310.02664}, 2023.

\bibitem{gunsilius2022convergence}
F.~F. Gunsilius.
\newblock On the convergence rate of potentials of brenier maps.
\newblock {\em Econometric Theory}, 38(2):381--417, 2022.

\bibitem{hamdouche2023generative}
M.~Hamdouche, P.~Henry-Labordere, and H.~Pham.
\newblock Generative modeling for time series via schr $\{$$\backslash$" o$\}$
  dinger bridge.
\newblock {\em arXiv preprint arXiv:2304.05093}, 2023.

\bibitem{han2024neural}
Y.~Han, M.~Razaviyayn, and R.~Xu.
\newblock Neural network-based score estimation in diffusion models:
  Optimization and generalization.
\newblock {\em arXiv preprint arXiv:2401.15604}, 2024.

\bibitem{han2024stochastic}
Y.~Han, M.~Razaviyayn, and R.~Xu.
\newblock Stochastic control for fine-tuning diffusion models: Optimality,
  regularity, and convergence.
\newblock {\em arXiv preprint arXiv:2412.18164}, 2024.

\bibitem{hernandez2023propagation}
C.~Hern{\'a}ndez and L.~Tangpi.
\newblock Propagation of chaos for mean field schr\"odinger problems.
\newblock {\em arXiv preprint arXiv:2304.09340}, 2023.

\bibitem{Ho20}
J.~Ho, A.~Jain, and P.~Abbeel.
\newblock Denoising diffusion probabilistic models.
\newblock In {\em Neurips}, volume~33, pages 6840--6851, 2020.

\bibitem{jamison1975markov}
B.~Jamison.
\newblock The markov processes of schr{\"o}dinger.
\newblock {\em Zeitschrift f{\"u}r Wahrscheinlichkeitstheorie und verwandte
  Gebiete}, 32(4):323--331, 1975.

\bibitem{kong2025composite}
L.~Kong, H.~Wang, T.~Wang, G.~Xiong, and M.~Tambe.
\newblock Composite flow matching for reinforcement learning with
  shifted-dynamics data.
\newblock {\em arXiv preprint arXiv:2505.23062}, 2025.

\bibitem{KL}
S.~Kullback and R.~Leibler.
\newblock On information and sufficiency.
\newblock {\em Annals of Mathematical Statistics}, 22(1):79--86, 1951.

\bibitem{leonard2013survey}
C.~L{\'e}onard.
\newblock A survey of the schr\"odinger problem and some of its connections
  with optimal transport.
\newblock {\em arXiv preprint arXiv:1308.0215}, 2013.

\bibitem{li2023generalization}
P.~Li, Z.~Li, H.~Zhang, and J.~Bian.
\newblock On the generalization properties of diffusion models.
\newblock {\em Advances in Neural Information Processing Systems},
  36:2097--2127, 2023.

\bibitem{loeper2023calibration}
G.~Loeper, J.~Obloj, and B.~Joseph.
\newblock Calibration of local volatility models with stochastic interest rates
  using optimal transport.
\newblock {\em arXiv preprint arXiv:2305.00200}, 2023.

\bibitem{MaZhang1}
J.~Ma and J.~Zhang.
\newblock Representation theorems for backward stochastic differential
  equations.
\newblock {\em Annals of Applied Probability}, 12(4):1390--1418, 2002.

\bibitem{montesuma2024recent}
E.~F. Montesuma, F.~M.~N. Mboula, and A.~Souloumiac.
\newblock Recent advances in optimal transport for machine learning.
\newblock {\em IEEE Transactions on Pattern Analysis and Machine Intelligence},
  2024.

\bibitem{moon2022fine}
T.~Moon, M.~Choi, G.~Lee, J.-W. Ha, and J.~Lee.
\newblock Fine-tuning diffusion models with limited data.
\newblock In {\em NeurIPS 2022 Workshop on Score-Based Methods}, 2022.

\bibitem{nutz2022entropic}
M.~Nutz and J.~Wiesel.
\newblock Entropic optimal transport: Convergence of potentials.
\newblock {\em Probability Theory and Related Fields}, 184(1):401--424, 2022.

\bibitem{openai2024sora}
OpenAI.
\newblock Sora: Creating video from text.
\newblock \url{https://openai.com/sora}, 2024.

\bibitem{ouyang2024transfer}
Y.~Ouyang, L.~Xie, H.~Zha, and G.~Cheng.
\newblock Transfer learning for diffusion models.
\newblock {\em Advances in Neural Information Processing Systems},
  37:136962--136989, 2024.

\bibitem{peluchetti2023diffusion}
S.~Peluchetti.
\newblock Diffusion bridge mixture transports, schr{\"o}dinger bridge problems
  and generative modeling.
\newblock {\em Journal of Machine Learning Research}, 24(374):1--51, 2023.

\bibitem{protter2005stochastic}
P.~E. Protter.
\newblock {\em Stochastic Integration and Differential Equations}.
\newblock Springer-Verlag, Heidelberg, second edition, 2005.

\bibitem{ramesh2022hierarchical}
A.~Ramesh, P.~Dhariwal, A.~Nichol, C.~Chu, and M.~Chen.
\newblock Hierarchical text-conditional image generation with clip latents.
\newblock {\em arXiv preprint arXiv:2204.06125}, 1(2):3, 2022.

\bibitem{richter2023improved}
L.~Richter and J.~Berner.
\newblock Improved sampling via learned diffusions.
\newblock {\em arXiv preprint arXiv:2307.01198}, 2023.

\bibitem{rombach2022high}
R.~Rombach, A.~Blattmann, D.~Lorenz, P.~Esser, and B.~Ommer.
\newblock High-resolution image synthesis with latent diffusion models.
\newblock In {\em Proceedings of the IEEE/CVF conference on computer vision and
  pattern recognition}, pages 10684--10695, 2022.

\bibitem{shi2024diffusion}
Y.~Shi, V.~De~Bortoli, A.~Campbell, and A.~Doucet.
\newblock Diffusion schr{\"o}dinger bridge matching.
\newblock {\em Advances in Neural Information Processing Systems}, 36, 2024.

\bibitem{sohl2015deep}
J.~Sohl-Dickstein, E.~Weiss, N.~Maheswaranathan, and S.~Ganguli.
\newblock Deep unsupervised learning using nonequilibrium thermodynamics.
\newblock In {\em International Conference on Machine Learning}, pages
  2256--2265. PMLR, 2015.

\bibitem{song2022applying}
K.-U. Song.
\newblock Applying regularized schr\"odinger-bridge-based stochastic process in
  generative modeling.
\newblock {\em arXiv preprint arXiv:2208.07131}, 2022.

\bibitem{song2019generative}
Y.~Song and S.~Ermon.
\newblock Generative modeling by estimating gradients of the data distribution.
\newblock {\em Advances in neural information processing systems}, 32, 2019.

\bibitem{song2020score}
Y.~Song, J.~Sohl-Dickstein, D.~P. Kingma, A.~Kumar, S.~Ermon, and B.~Poole.
\newblock Score-based generative modeling through stochastic differential
  equations.
\newblock {\em arXiv preprint arXiv:2011.13456}, 2020.

\bibitem{sweet}
T.~Sweeting.
\newblock On a converse to scheff\'e's theorem.
\newblock {\em The Annals of Statistics}, 14(3):1252--1256, 1986.

\bibitem{tang2024fine}
W.~Tang.
\newblock Fine-tuning of diffusion models via stochastic control: entropy
  regularization and beyond.
\newblock {\em arXiv preprint arXiv:2403.06279}, 2024.

\bibitem{torrey2010transfer}
L.~Torrey and J.~Shavlik.
\newblock Transfer learning.
\newblock In {\em Handbook of research on machine learning applications and
  trends: algorithms, methods, and techniques}, pages 242--264. IGI Global
  Scientific Publishing, 2010.

\bibitem{uehara2024fine}
M.~Uehara, Y.~Zhao, K.~Black, E.~Hajiramezanali, G.~Scalia, N.~L. Diamant,
  A.~M. Tseng, T.~Biancalani, and S.~Levine.
\newblock Fine-tuning of continuous-time diffusion models as
  entropy-regularized control.
\newblock {\em arXiv preprint arXiv:2402.15194}, 2024.

\bibitem{vargas2023transport}
F.~Vargas, S.~Padhy, D.~Blessing, and N.~N{\"u}sken.
\newblock Transport meets variational inference: Controlled monte carlo
  diffusions.
\newblock {\em arXiv preprint arXiv:2307.01050}, 2023.

\bibitem{vargas2021solving}
F.~Vargas, P.~Thodoroff, A.~Lamacraft, and N.~Lawrence.
\newblock Solving schr{\"o}dinger bridges via maximum likelihood.
\newblock {\em Entropy}, 23(9):1134, 2021.

\bibitem{Villani}
C.~Villani.
\newblock {\em Topics in Optimal Transportation}.
\newblock Graduate Studies in Mathematics, 58. AMS, Providence, RI, 2003.

\bibitem{wang2021deep}
G.~Wang, Y.~Jiao, Q.~Xu, Y.~Wang, and C.~Yang.
\newblock Deep generative learning via schr{\"o}dinger bridge.
\newblock In {\em International conference on machine learning}, pages
  10794--10804. PMLR, 2021.

\bibitem{winkler2023score}
L.~Winkler, C.~Ojeda, and M.~Opper.
\newblock A score-based approach for training schr{\"o}dinger bridges for data
  modelling.
\newblock {\em Entropy}, 25(2):316, 2023.

\bibitem{xu2020cot}
T.~Xu, L.~K. Wenliang, M.~Munn, and B.~Acciaio.
\newblock Cot-gan: Generating sequential data via causal optimal transport.
\newblock {\em Advances in neural information processing systems},
  33:8798--8809, 2020.

\bibitem{zhang2017backward}
J.~Zhang.
\newblock {\em Backward stochastic differential equations}.
\newblock Springer, 2017.

\bibitem{zhao2025fine}
H.~Zhao, H.~Chen, Y.~Guo, G.~I. Winata, T.~Ou, Z.~Huang, D.~D. Yao, and
  W.~Tang.
\newblock Fine-tuning diffusion generative models via rich preference
  optimization.
\newblock {\em arXiv preprint arXiv:2503.11720}, 2025.

\end{thebibliography}

\end{document}